\pdfoutput=1
\documentclass[runningheads,a4paper]{llncs}
\usepackage{amsmath,amsfonts,graphicx,amssymb,multicol,psfrag,enumerate}

\newcommand{\norm}[1]{\left|#1\right|}

\newtheorem{thm}{Theorem}

\let\oldsqrt\sqrt
\def\sqrt{\mathpalette\DHLhksqrt}
\def\DHLhksqrt#1#2{%
\setbox0=\hbox{$#1\oldsqrt{#2\,}$}\dimen0=\ht0
\advance\dimen0-0.2\ht0
\setbox2=\hbox{\vrule height\ht0 depth -\dimen0}%
{\box0\lower0.4pt\box2}}

\begin{document}
\frontmatter          
\pagestyle{headings} 
\mainmatter   
\title{Quantum Control Experiments as a Testbed for Evolutionary Multi-Objective Algorithms}
\titlerunning{Quantum Control Experiments as a Testbed for EMOA}
\author{Ofer M. Shir\inst{1} \and Jonathan Roslund\inst{1} \and Zaki Leghtas\inst{2} \and Herschel Rabitz\inst{1}}
\authorrunning{Ofer M. Shir et al.}  
\institute{Department of Chemistry, Princeton University\\Princeton NJ 08544, USA
\medskip \and
Ecole des Mines ParisTech\\Paris 75006, France\\ \medskip
\email{oshir@Princeton.EDU}
}
\maketitle

\begin{abstract}
Experimental multi-objective Quantum Control is an emerging topic within the
broad physics and chemistry applications domain of controlling quantum
phenomena. This realm offers cutting edge ultrafast laser laboratory
applications, which pose multiple objectives, noise, and possibly constraints
on the high-dimensional search. In this study we introduce the topic of
Multi-Observable Quantum Control (MOQC), and consider specific systems to be
Pareto optimized subject to uncertainty, either \emph{experimentally} or by
means of simulated systems. The latter include a family of mathematical
test-functions with a practical link to MOQC experiments, which are introduced
here for the first time. We investigate the behavior of the multi-objective
version of the Covariance Matrix Adaptation Evolution Strategy (MO-CMA-ES) and
assess its performance on computer simulations as well as on laboratory
closed-loop experiments. Overall, we propose a comprehensive study on
\emph{experimental} evolutionary Pareto optimization in high-dimensional
continuous domains, draw some \emph{practical} conclusions concerning the
impact of fitness disturbance on algorithmic behavior, and raise several
\emph{theoretical} issues in the broad evolutionary multi-objective context.
\end{abstract}
\paragraph{Keywords:} Experimental Pareto optimization, Quantum Control experiments, robustness to noise,
multi-objective evolution strategies, covariance matrix adaptation, diffraction grating.

\section{Introduction}
Quantum Control (QC) \cite{Hersch93,Gerber07}, sometimes referred
to as Optimal Control or Coherent Control, aims at altering the course of
quantum dynamics phenomena for specific target realizations. There are two main
threads within QC, \emph{theoretical} and \emph{experimental} control, as
typically encountered in physics. Interest in the subject has rapidly increased
during the past 10 years, in parallel with the technological developments of
ultrafast laser pulse shaping capabilities \cite{Weiner00} that made it
possible to bring the dream into experimental fruition.

Quantum Control Theory (QCT) \cite{Hersch88} aims at manipulating the quantum
dynamics of a \emph{simulated system} by means of an external control field,
which typically corresponds to a temporal electromagnetic field arising from a
laser source. 
Quantum Control Experiments (QCE) \cite{Hersch92} consider the realization of
QC in the laboratory, generally executed by applying evolutionary
learning-loops for altering the course of quantum dynamics phenomena. Here, the
yield, or success-rate, is assessed by a physical measurement. The nature of
the optimization is fundamentally different than in QCT, due to practical
laboratory constraints:
limited bandwidth, limited fluence, control resolution, proper control basis, etc. \\
The optimization of QC systems in the laboratory typically poses many algorithmic challenges, such as operating with high-dimensionality, noise,
control constraints, and most importantly in this context, a {\bf potentially large number of simultaneous objectives}.
Attractive features of QCE are the extremely short duration and low cost of an experiment, in comparison to other real-world experimental systems:
the duration of a typical QC measurement is 1msec, allowing a well-averaged single experiment to be recorded in the order of a single second.

{\bf Evolutionary Algorithms} (EAs) \cite{Baeck-book} are the most commonly
employed routines for optimization of QCE systems. This can mostly be
attributed to their high success-rate in addressing the aforementioned
challenges, as reported also in other domains of \emph{experimental}
many-parameter systems (see, e.g., \cite{Schwefel70}). In particular, they
efficiently treat noisy problems, likely due to the employment of large
populations as well as to the fact that they do not require any explicit
gradient determination. Furthermore, EAs possess several features which are
very effective in solving multi-objective (MO) problems, such as being
population-based algorithms, having diversity generation and preservation
mechanisms, etc. Evolutionary Multi-Objective Algorithms (EMOA) (see, e.g.,
\cite{Deb01,CLV07,KCD08}) constitute popular Pareto optimizers that have been
highly successful in treating MO problems.

The list of successful quantum systems controlled in the laboratory by means of
EAs in physics and chemistry is growing rapidly \cite{Gerber07}, but the vast
majority address \emph{de facto} single-objective optimization problems. The
topic of multi-objective QC, also referred to as Multi-Observable Quantum
Control (MOQC), considers multiple distinct physical observables, referring to
mutually competing physical processes. One scenario is a single type of quantum
system, where the competition may be driven by ratios of controlled ionization
or fragmentation of the same molecule \cite{Weber}, versus other scenarios
involving several independent quantum systems, e.g., fluorescence signals in
Optimal Dynamic Discrimination (ODD) of similar molecules
\cite{MO-JPWolf,roth:253001}. MOQC has been addressed in various experimental
systems, predominantly by means of \emph{tailored single-objective scalar
functions} (see, e.g., \cite{Bartelt}). Treating MOQC as a Pareto optimization
problem has been reported only recently, and there is currently a limited
number of studies on this topic: see \cite{RajParetoQC} for QCT and
\cite{MO-JPWolf} for QCE. While the former constituted the first theoretical
study of Pareto fronts in QC, even without involving a MO algorithmic approach,
the latter study is the first reported experimental QC work by means of an
EMOA, namely the NSGA-II \cite{Deb01}.

This study considers several MOQC systems, both \emph{experimental} systems in
the laboratory as well as simulated systems subject to noisy environments. This
work aims to present a pioneering study on {\bf experimental Pareto
optimization in high-dimensional continuous domains} (at least $n=80$ decision
parameters).
Following the successful application of the Covariance Matrix Adaptation
Evolution Strategy (CMA-ES) \cite{hansencmamultimodal} to single-objective QC
systems \cite{QCE_GECCO08,BartelsCMA}, the current study focuses on the
multi-objective version of the CMA-ES (referred to in our notation as MO-CMA)
\cite{CMA-MO} as the algorithmic tool. We investigate its performance upon
treating optimization tasks of both noisy model landscapes (e.g., Multi-Sphere)
as well as real-world MOQC systems.

The manuscript is organized as follows.
Section \ref{sec:EMOA-Noise} will provide some background on the study of EMOA under noise, and outline the specific
characteristics of QCE systems in this context. This will be followed by the description of our algorithmic scheme in
Section \ref{sec:algorithm}, where we shall also discuss the topic of single-parent elitist ES behavior in the presence of noise.
Section \ref{sec:physics} will introduce the systems under study.
We will report on our practical observations in Section \ref{sec:experiments}, and conclude in Section \ref{sec:discussion}.

\section{Uncertain Environments (Noise)}
\label{sec:EMOA-Noise} The presence of uncertainties in environments subject to
optimization by EAs has been widely studied in recent years. The traditional
classes of investigated uncertainties typically include noisy objective
functions \cite{Arnold}, approximation error in the objective function
\cite{OZL06}, the search for robust solutions \cite{BS07}, and dynamic
environments \cite{Branke}. Optimization subject to noisy environments is
typically defined within the topic of \emph{Robustness}. While the research on
single-objective EAs under uncertain environments in general, and under noisy
objective functions in particular, has been widely studied (see, e.g.,
\cite{Arnold,BS06}), there is a limited number of reported EMOA studies to
date. The vast majority of the existing studies consider the scenario of
fitness functions subject to noise, and propose techniques to efficiently
handle this particular uncertainty. Such studies typically make the assumption
that the fitness values are subject to additive Gaussian noise, denoted by
$\mathcal{N}$, with zero mean and finite variance,
\begin{equation}
\label{eq:fnoise}
 \tilde{f}_{i}\left(\vec{x}\right) = f_{i}\left(\vec{x}\right) + \mathcal{N}\left(0,\epsilon_{f}^2\right),
\end{equation}
where the \emph{perceived} $i^{th}$ fitness is $\tilde{f}_{i}$ and the
\emph{ideal} fitness is $f_{i}$. The variance of the normal disturbance,
$\epsilon_f^2$, is referred to as the \emph{noise strength}, and is assumed to
either remain fixed during a run (i.e., additive noise), or to be a
multiplicative factor of the fitness measurement, i.e., $\epsilon_{f_i}^2 \sim
f_i$. Also, the so-called \emph{degree of overvaluation} usually refers to the
difference between the perceived fitness and the ideal fitness:
$\tilde{f}_{i}-f_{i}$. Other types of noisy models, such as consideration of
uncertainty in the decision parameters to be optimized, have received scarce
attention \cite{BOS04,Deb_RobustMO}. This type of noise, which corresponds to
the precision of the optimized design and may represent manufacturing error, is
of {\bf particular interest to this study}. The fitness values are then modeled
as
\begin{equation}
\label{eq:xnoise}
 \tilde{f}_{i}\left(\vec{x}\right) = f_{i}\left(\vec{x} + \mathcal{N}\left(\vec{0},\epsilon_{x}^2 \mathbf{I} \right) \right).
\end{equation}
Here, since the decision parameters are systematically disturbed, each one of them can be controlled only up to a certain degree of accuracy.
Moreover, the fitness values in this case may be either enhanced or deteriorated,
depending exclusively upon the nature of the objective function and
the manner in which the noise propagates through it.
Thus, the expected fitness \emph{overvaluation} or \emph{undervaluation} may be estimated only if the propagation of the noise can be derived.
We choose to refer here to the difference between the perceived and the ideal fitness values stemming from noisy decision parameters
as the \emph{fitness disturbance}, i.e., $\left|\tilde{f}_{i}-f_{i}\right|$.

Regardless of the differences in the modeling, the system still retains inherent underlying uncertainty,
explicitly revealed by two successive evaluations of the same recorded input variables returning two different sets of output values.

\subsection{EMOA in Noisy Environments: Robustness}
Early EMOA work on treatment of \emph{noisy objective functions} includes the
probabilistic Pareto ranking approach (similar concepts by \cite{Teich-EMO} and
\cite{Hughes-EMO}), which introduces a modified selection criterion accounting
for the stochasticity of the objective function. The concepts of domination
dependent lifetime and re-sampling of archived solutions was introduced by
B\"uche et al.\ in \cite{Buche02}.
Moreover, recent studies (see, e.g., \cite{GohTan07}) proposed noise-handling
features, as additions to existing EMOA, and considered a suite of synthetic
bi-criteria landscapes as a testbed. In a recent study, Bader and Zitzler
\cite{bz2010a} provided an important overview on robustness in multi-objective
optimization. In general terms, multi-objective noise-treatment and
robustness-accounting are carried out by one of the following schemes
\cite{bz2010a}:
\begin{enumerate}
 \item Replacement of the objective function value by a measure reflecting uncertainty, e.g., statistical mean, or signal averaging \cite{Mulvey95}
\item Introduction of an additional robustness criterion to the search \cite{Deb_RobustMO,Egorov2002,JS03}
\item Consideration of a tailored robustness constraint, imposing candidate solutions to satisfy statistical criteria \cite{Deb_RobustMO,Gunawan05}
 \end{enumerate}

In what follows, we refer to two specific studies that are directly linked to our work.
\subsubsection*{Simulated Robustness in Multi-Objective Optimization}
Deb and Gupta\ \cite{Deb_RobustMO}, in a pioneering work, introduced \emph{systematic disturbance}
to decision parameters in Pareto optimization and posed the demand for attaining robust solutions. The study shifted the
focus from searching for global best Pareto fronts to robust Pareto fronts, whose pre-images are solutions that
are robust to variable perturbations.
However, as the authors concluded, the proposed schemes were prone to being impractical in real-world scenarios,
as they increased the total number of evaluations by factors of $\sim$ 50--100.

\subsubsection*{Multi-Objective Experimental Optimization}
The first reported campaign of \emph{experimental} Pareto optimization was
carried out by Knowles and co-workers within biological experimental platforms
(e.g., \cite{KnowlesMOExp07}, and see \cite{KnowlesCampaignOverview} for an
overview). In addition to the successful results on multiple experimental
systems, this campaign led to the subsequent development of the ParEGO, an EMOA
specializing in Pareto optimization subject to an extremely small budget of
measurements (see, e.g., \cite{Knowles2006parego}). This promising search
heuristic was designed for specific demanding \emph{experimental} conditions,
amongst which are
\begin{itemize}
\item low noise levels, i.e., individual experiments practically need not be repeated,
\item locally smooth search landscapes,
\item low-dimensional search spaces (less than $10$ decision parameters).
\end{itemize}

\subsubsection*{Note on Elitism versus Robustness}
It has been pointed out in previous studies that \emph{elitist selection} is an essential component for
efficient multi-objective optimization (see, e.g., \cite{MO_VL00,zlb2004a}). A common argument is the need to preserve
the current population's information in the global selection phases of Pareto domination followed by secondary criteria.
Elitism, at the same time, dictates a unique dynamic that when exposed to uncertain environments
has the potential to deteriorate the quality of the run, suffer from systematic overvaluation, and lead to periods of stagnation.
The currently employed EMOA, namely the MO-CMA, employs an elitist strategy as its algorithmic kernel.
Due to its nature, and due to the nature of experimental frameworks,
we shall also explore theoretical studies from the realm of single-objective Evolution Strategies related
to this study, as outlined in Section \ref{sec:algorithm}.

\subsection{QC Systems: Sources of Noise and Uncertainty}
Uncertainty in QCE stems from various sources, and exists at several levels.
We attribute it to three main factors, in decreasing importance, as we shall explain in detail in what follows
(compare to \cite{BS07} as a generic reference):
\begin{enumerate}[(A)]
\item Spectral phase noise: uncertainty concerning the decision (input) parameters;
the error in realizing the prescribed parameters in the experimental setup
\item Observation noise: uncertainty concerning the measurement (output) values, originating from detector noise (also known as Johnson-Nyquist noise)
\item Environmental drift: Systematic slow deviation in the system values over the time span of the entire experiment, e.g., minutes to hours
\end{enumerate}
\begin{figure}
\centering \includegraphics[scale=0.5]{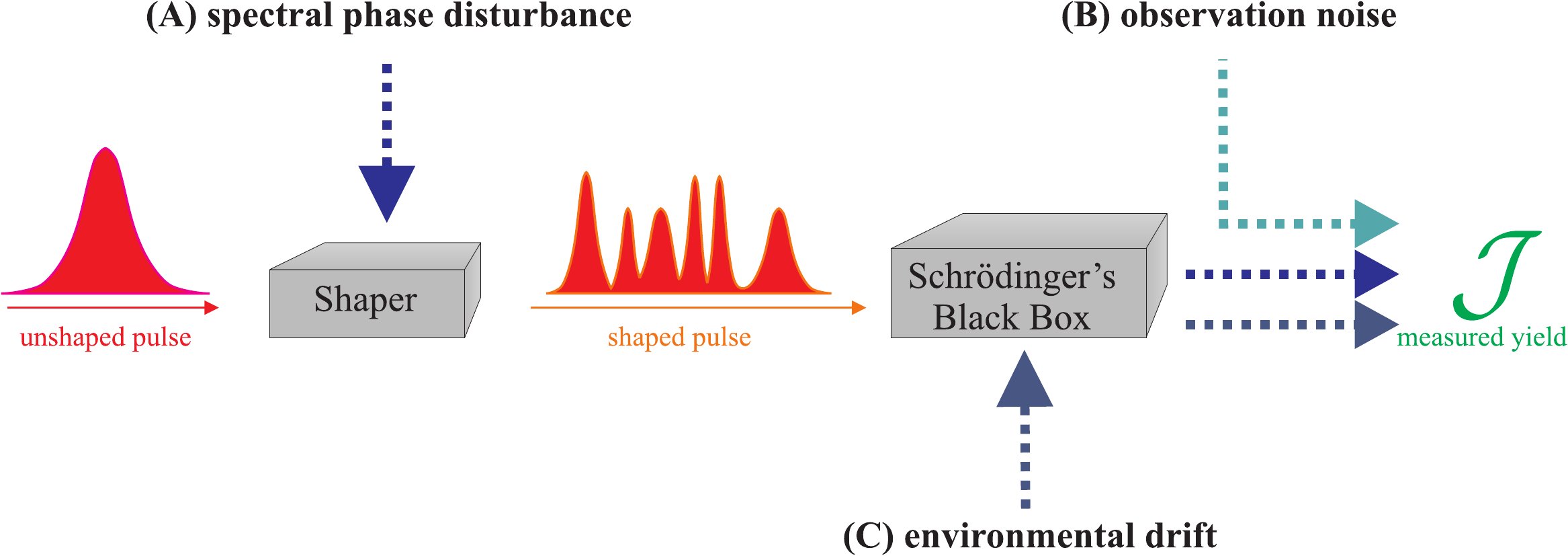}
 \caption{Summary of the three main sources of noise in a typical Quantum Control experiment.
Compare to \cite{BS07} as a generic reference. \label{fig:QC_noisesources}}
\end{figure}
{\bf (A)} The primary component in the current experimental learning loop generating uncertainty with highest impact is the process responsible
for the construction of the laser pulse, which is carried out with a pulse shaper.
Unlike standard modeling in the literature regarding noisy environments, the current framework is
modeled as subject to additive Gaussian noise on the control variables (i.e., the decision parameters to be optimized, or the input),
which propagates typically in a highly nonlinear manner to the measured values (i.e., the objective functions, or the outputs).
More explicitly, the control function with spectral modulation consists of the spectral amplitude $A(\omega)$ and phase $\phi(\omega)$ functions,
which together construct the electric field:
\begin{equation}
\label{eq:efield}
E(t)=\mathbb{R} \left\{\int A(\omega)\exp(i\phi(\omega))
\exp(-i\omega t) \ d\omega \right\}.
\end{equation}
Most QC processes are highly sensitive to the phase, and phase-only shaping is typically sufficient for attaining optimal
control. Our experiments only include phase modulation, where the spectral function $A(\omega)$ is fixed. The latter is
well approximated by a Gaussian and determines the bandwidth, or the minimal pulse duration.
Note that shaping the field with phase-only modulation guarantees conservation of the pulse energy.

The spectral phase $\phi(\omega)$ is defined at $n$ frequencies
$\{\omega_j\}_{j=1}^n$ that are equally distributed across the
bandwidth of the spectrum. These $n$ values,
$\{\phi(\omega_j)\}_{j=1}^n$, correspond to the $n$ pixels of the
pulse shaper and are the decision parameters to be optimized in the
experimental learning loop:
\begin{equation}
\label{eq:phase}
\phi\left(\omega \right) =
\left(\phi(\omega_1),\phi(\omega_2),...,\phi(\omega_n)\right).
\end{equation}
The laser field, as defined in Eq.\ \ref{eq:efield}, completely determines the dynamics of any controlled quantum process,
subject to the associated wavefunction $\psi(t)$, satisfying the Schr\"odinger equation:
\begin{equation}
\label{eq:schroedinger}
\begin{array}{l}
\displaystyle i\frac{\partial \psi}{\partial t}=(H_0+V)\psi(t)\\
\displaystyle V=-\mu E(t)\cos(\omega_0 t)
\end{array}
\end{equation}
where $H_0$ is the field-free Hamiltonian and $\mu$ is the electric dipole moment.
The modeling of noise on the shaper is equivalent to Eq.\ \ref{eq:xnoise},
assuming additive Gaussian noise on each pixel (independent Gaussian sampling):
\begin{equation}
\label{eq:phinoise}
\tilde{\phi}(\omega) = \left(\phi(\omega_1)+\mathcal{N}_1\left(0,\epsilon_{S}^2 \right),\ldots,\phi(\omega_n)+\mathcal{N}_n\left(0,\epsilon_{S}^2 \right)\right),
\end{equation}
where $\tilde{\phi}(\omega)$ and $\phi(\omega)$ are the \emph{perceived} and the \emph{ideal} pixel values, respectively,
and each pixel is subject to a noise level of $\epsilon_{S}^2$; the latter is assumed to remain fixed during the course of the whole experiment.
Since this type of uncertainty stems from physical disturbances -- such as dust or convection currents that are responsible
for variable refraction indices, and therefore can be modeled as some continuous function --
the independently sampled Gaussian disturbance is thus an approximation.
The correlations between disturbances on adjacent pixels may be considered in further studies.

The variations in the input propagates into the output in a highly nonlinear
manner, due to the complex transformations involved in the process (Eqs.\
\ref{eq:efield} and \ref{eq:schroedinger}), and yields non-additive deviations
with an unknown form.

{\bf (B)} Given a quantum observable operator, $\mathcal{O}_i$, and given the
propagated wavefunction $\psi$ solving Eq.\ \ref{eq:schroedinger}, a quantum
observation is then defined as
 $\mathcal{J}_i = \left< \psi  \left|\mathcal{O}_i \right| \psi \right>$ .
The measurement value is assumed to be subject to \emph{observation noise}, corresponding to electronic or thermal fluctuations in the detector (Johnson-Nyquist noise),
which typically possesses very low noise strength $\epsilon_{\mathcal{J}}^2$ and is modeled as additive Gaussian deviations,
equivalent to Eq.\ \ref{eq:fnoise}.\\
The high duty cycle of QC experiments (typically 1kHz) permits increased signal averaging, which reduces the influence of additive noise sources,
such as measurement noise, by virtue of the central limit theorem.
Thus, given $k$ independent, single-shot measurements, the mean and variance of the observation in the presence of
measurement noise, $\tilde{\mathcal{J}}_i$, may be described as follows:
\begin{equation}
\displaystyle \left< \tilde{\mathcal{J}}_i \right> =
\mathcal{J}_i,~~~\textrm{VAR} \left[ \tilde{\mathcal{J}}_i \right] =
\frac{\epsilon_{\mathcal{J}}^2}{k},
\end{equation}
and given sufficient signal averaging, its contribution is effectively removed.
While such signal averaging always increases the precision of the QC measurement, the contribution of non-additive noise
sources, such as the propagation of $\tilde{\phi}(\omega)$ (Eq.\ \ref{eq:phinoise}),
may not be removed, and is of particular interest in this study.

{\bf (C)} The third source of uncertainty, with the least impact, is general system drift which occurs in a time span of the entire experiment
(minutes to hours). The observation is then disturbed by some temporal function $\xi(t)$:
\begin{equation}
\hat{\mathcal{J}}_i(t) = \tilde{\mathcal{J}}_i + \xi(t).
\end{equation}
Fig.\ \ref{fig:QC_noisesources} summarizes the sources of noise in a typical QC experiment.

\section{The Algorithmic Approach: Multi-Objective CMA-ES}
\label{sec:algorithm}
Following the broad success of the Covariance Matrix Adaptation Evolution Strategy (CMA-ES)
in single-objective continuous optimization, a multi-objective version has been released \cite{CMA-MO}.
In short, the CMA is a derandomized ES variant that has been successful in treating correlations among decision parameters by efficiently
learning optimal mutation distributions.
The MO-CMA relies on the elitist $\left(1+\lambda\right)$-CMA kernel \cite{CMAPLUS}
(typically with $\lambda=1$), which had been originally designed for it,
likely due to the aforementioned
studies indicating that \emph{elitism} is essential for efficient multi-objective optimization \cite{MO_VL00,zlb2004a}.
The elitist CMA combines the classical concepts of the $\left(1+1\right)$-ES, and especially the \emph{success probability} and
the \emph{success rule} components (see, e.g., \cite{Baeck-book}),
with the Covariance Matrix Adaptation concept.

Explicitly, the set of evolving individuals comprises $\mu$ search points,
which correspond to $\mu$ independently evolving single-parent CMA mechanisms.
Given the $i^{th}$ search point in generation $g$, $\vec{x}^{(g)}_i$, an offspring is generated
by means of a Gaussian variation:
\begin{equation}
\label{eq:cma_gen}
\vec{x}^{(g+1)}_i\sim\mathcal{N}\left(\vec{x}^{(g)}_i,\sigma^{(g)^{2}}_i\mathbf{C}^{(g)}_i\right).
\end{equation}
The covariance matrices, $\left\{\mathbf{C}^{(g)}_i\right\}_{i=1}^{\mu}$, are
initialized as \emph{unit matrices} and are learned during the course of
evolution, based on cumulative information of successful past mutations. The
step-sizes, $\left\{\sigma^{(g)}_i\right\}_{i=1}^{\mu}$, are updated according
to the so-called \emph{success rule based step-size control}. The set of
parents and offspring undergoes two MO evaluation phases, corresponding to two
selection criteria: the first criterion is Pareto domination ranking, followed
by the hypervolume contribution criterion. Fig.\ \ref{fig:mo-cma-es}
illustrates the operation of the MO-CMA algorithm. For more details we refer
the reader to \cite{CMA-MO}.

\begin{figure}
\centering \includegraphics[scale=0.58]{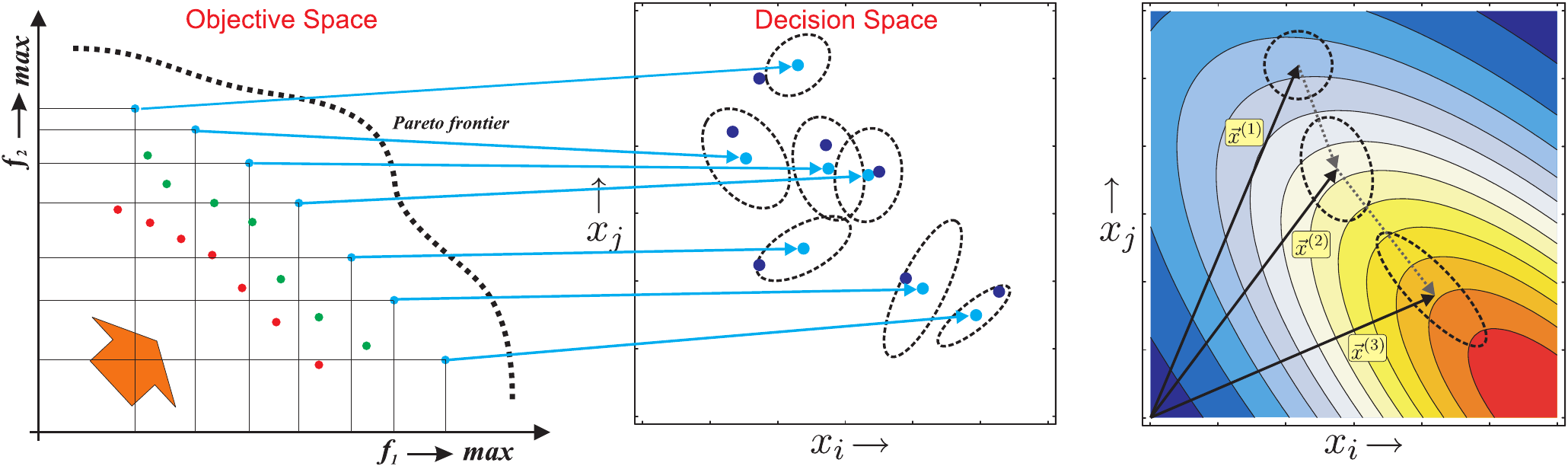}
 \caption{Cartoons illustrating the MO-CMA mechanism: [LEFT] The objective space, where selection is subject to two criteria:
Pareto domination ranking and hypervolume contribution \cite{CMA-MO}. [MIDDLE]
The decision (search) space, where the pre-images of the evolving Pareto front
are depicted, and simultaneously update as independent (1+1)-CMA kernels.
[RIGHT] A solitary CMA kernel evolving in the decision space of an elliptic
{\bf single-objective} model landscape. \label{fig:mo-cma-es}}
\end{figure}

\subsection{Introduction of Noise}
The application of the MO-CMA to MOQC in general, and to the systems under investigation in the current study,
introduces new aspects to Pareto optimization at different levels that have to be addressed.
The current framework differs from previously studied MO noisy systems in two main aspects:
\begin{itemize}
 \item The recorded objective function values (signal measurements) cannot be assumed to follow a specific distribution;
the degree to which the noise on the decision parameters propagates into the objective function values is generally unknown,
and in any case the latter is not additive.
 \item Due to the nature of the MO-CMA learning rules, any manipulation or replacement of archived solutions is not recommended.
This is a common rule of thumb for the family of derandomized ES, which rely on cumulative information gained from previously selected search
points.
\end{itemize}
Furthermore, the introduction of noise to the MO-CMA is expected to raise additional issues:
\begin{itemize}
\item Single-parent strategies experience difficulties in handling noisy landscapes, in comparison to multi-parent strategies:
the application of recombination in the latter case proved highly efficient in
treating excessive noise \cite{ArnoldNoise}.
More specifically, in the context of QC experimental optimization, the
single-objective CMA was observed in \cite{QCE_GECCO08} to fail without
recombination, and to perform extremely well otherwise, as expected from theory
\cite{ArnoldNoise}.
\item Elitist strategies support the survival of parents, and are likely to encounter scenarios in which highly overvaluated perceived
fitness values are kept for long periods, causing stagnation (see, e.g., \cite{Beyer93}).
The issue of \emph{fitness disturbance} is expected to become
a problem for the MO-CMA, should its implementation follow the original algorithm and avoid parental fitness re-evaluation.
\end{itemize}
Arnold and Beyer \cite{ArnoldB02} considered the aforementioned effects and studied theoretically
the local performance of the single-objective $(1+1)$-ES in a noisy environment.
Here are some of the relevant conclusions of that study:
\begin{enumerate}
\item Failure to reevaluate the parental fitness leads to systematic overvaluation.
\item Overvaluation is responsible for the different behavior of the elitist single-parent strategy, in comparison to other strategies,
and may lead to long periods of stagnation.
\item Overvaluation may, nevertheless, be beneficial for the specific homogeneous environment of the quadratic sphere in the limit of infinite dimensions.
\item \label{arg:occasional}
Occasional parental fitness re-evaluation seems to be superior with respect to no re-evaluation at all and to re-evaluation in every generation.
\item Overvaluation has the potential to render useless \emph{success-probability based} step-size mechanisms.
\end{enumerate}
It should be stressed that 
disturbance of objective function values in \emph{experimental optimization} 
typically cannot be tolerated, and is primarily perceived as a source of
deception that deteriorates the reliability of the attained results. Also, the
main focus of the current study is on the attained set of solutions, and on the
ability to reproduce the perceived fitness values as reported in the
algorithm's output. In particular, in the MO context, the research goal is to
investigate the nature of the attained {\bf Pareto optimal set}, in light of
its \emph{a posteriori} re-evaluation.
\subsection{A Proposed Scheme}
\label{sec:proposedscheme}
Given the conclusions concerning the $(1+1)$-ES outlined in the previous section, we would like to propose a \emph{modus operandi} for our
experimental optimization, subject to noise, with the MO-CMA.
In particular, three different empirical scenarios are considered:
\begin{enumerate}
 \item Default MO-CMA ('D')
 \item Parental fitness re-evaluation every generation ('E')
 \item Occasional parental fitness re-evaluation at every \emph{epoch} ('O')
\end{enumerate}
The last scenario aims at achieving a trade-off between low fitness disturbance
during the run (reliability) versus keeping the number of experimental
evaluations to a minimum. It can also be considered as an attempt to
corroborate the theoretical results discussed earlier (see the summary of
\cite{ArnoldB02} in the previous section, and particularly point
\ref{arg:occasional}), upon transferring them to the multi-objective framework.

We set the re-evaluation interval to 10 generations, inspired by a recommended rule of thumb for the evaluation interval of the step-size in the
$(1+1)$-ES (see \cite{Baeck-book} p.\ 84).
\section{Systems under Investigation}
\label{sec:physics}
We present here our selected models for the evaluation of the MO-CMA, which comprise model landscapes, a simulated QC system,
and two QC laboratory experiments.
\subsection{Model Landscapes}
Here we briefly introduce the model landscapes to be Pareto optimized. They
include the basic Multi-Sphere model, which is considered to be an elementary
multi-objective test-case, along with a quantum-oriented model landscape,
referred to as the \emph{Diffraction Grating} problem. The latter, which is
introduced here for the first time as a multi-objective test-problem for the
optimization community, shares many characteristics with QC problems, such as
the nature of the decision parameters and some properties of the objective
function. At the same time, it possesses a quite simple form, requires an
extremely short CPU evaluation time, and offers a complete mathematical
formulation (e.g., the propagation of systematic noise may be analytically
derived). Thus, it as a particularly attractive test-case for this study, and
potentially for other future studies, as it offers a {\bf practical link to
experimental optimization} with a very low computational cost.

The landscapes will be optimized subject to a search space dimensionality of
$n=\left\{10,30,80\right\}$, while we choose to expose the search to noise solely
on the decision parameters, corresponding to Eq.\ \ref{eq:xnoise},
with the following values:
\begin{equation}
\label{eq:noisestrength}
 \epsilon_x^2 = \left\{0.001,0.005,0.01,0.02,0.05 \right\}
\end{equation}

\subsubsection{The Multi-Sphere Model}
We consider the $m$-objective quadratic multi-sphere as our model landscape to be Pareto optimized in an $n$-dimensional search-space (see, e.g., \cite{LRS2001a}):
\begin{equation}
\label{eq:multisphere}
 \vec{f}\left(\vec{x}\right) = \begin{pmatrix}
                                \left( \vec{x} - \vec{c_1} \right)^T \cdot \left( \vec{x} - \vec{c_1} \right) \\
                \left( \vec{x} - \vec{c_2} \right)^T \cdot \left( \vec{x} - \vec{c_2} \right) \\
                \vdots \\
                \left( \vec{x} - \vec{c_m} \right)^T \cdot \left( \vec{x} - \vec{c_m} \right)
                               \end{pmatrix}\longrightarrow \min,
                               ~~~
                               \vec{c_1} = \begin{pmatrix}
              1\\0\\0\\ \vdots \\ 0
             \end{pmatrix},\ldots,
\vec{c_m} = \begin{pmatrix}
              0\\0\\0\\ \vdots \\ 1
             \end{pmatrix}.
\end{equation}

The shape of the Pareto front is convex, and it is explicitly described for $m=2$ as follows (see, e.g., \cite{EmmerichDeutzEMO}):
\begin{equation}
\label{eq:2d-sphere-front}
f_2 = 2 \left(1- \left(\frac{f_1}{2} \right)^{1/2} \right)^2,~~~ f_1 \in \left[0,2\right]
\end{equation}
Upon consideration of {\bf noise} on the decision variables, the mean of the perceived fitness reads
\begin{equation}
\label{eq:spheres_mean}
\displaystyle \left< \tilde{f}_i \left(\vec{x}\right)\right> =  f_i\left(\vec{x}\right)  + n \epsilon_{x}^2,
\end{equation}
and its variance is described as follows (for the derivation see, e.g., \cite{BOS04}):
\begin{equation}
\label{eq:spheres_var}
\displaystyle \textrm{VAR} \left[ \tilde{f}_i \left(\vec{x}\right) \right] = 4\epsilon_{x}^2 \left(f_i\left(\vec{x}\right)+ \frac{n}{2}\epsilon_{x}^2 \right)
\end{equation}
\subsubsection{The Diffraction Grating Problem}
\begin{figure}
 \centering
\includegraphics[scale=0.8]{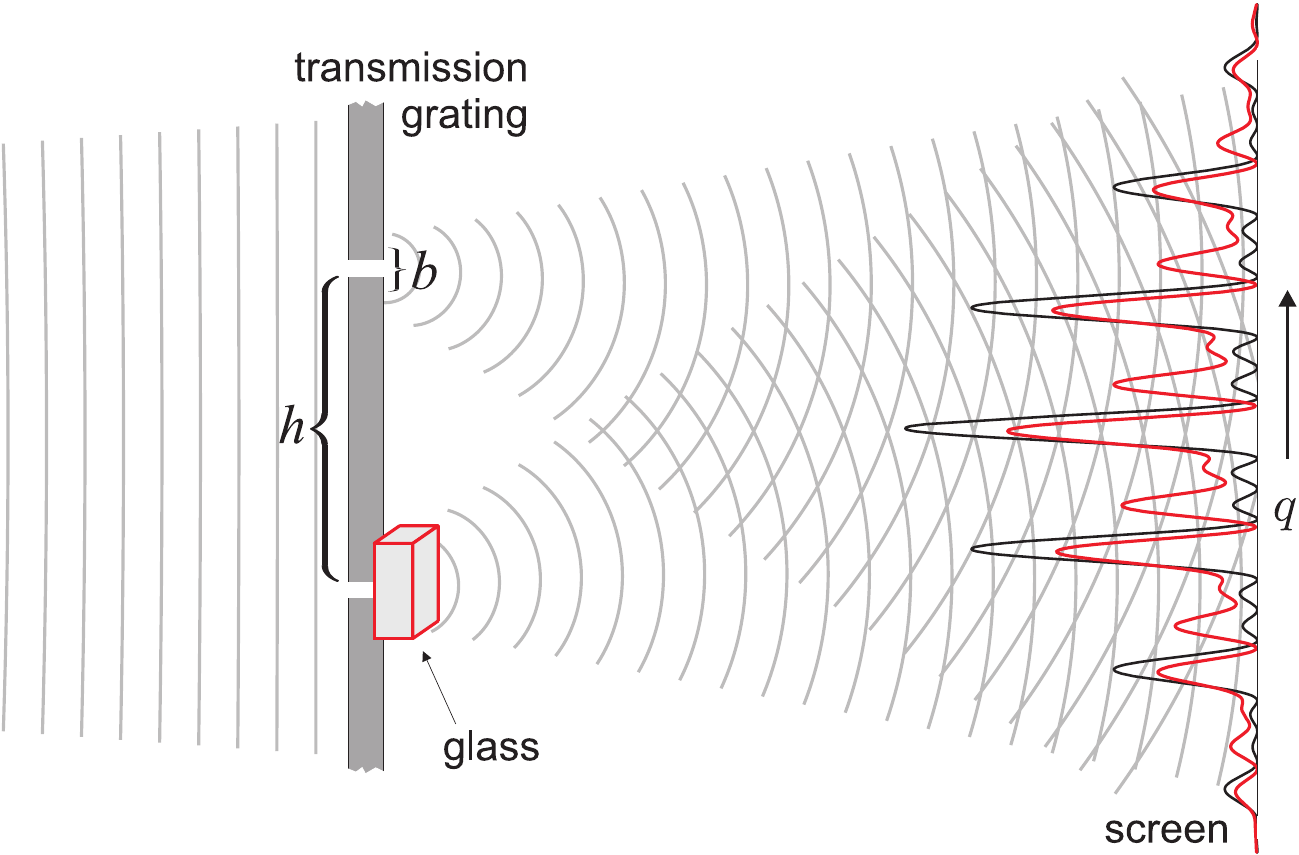} 
\caption{Graphic illustration of the Diffraction Grating problem setup with 2 slits. Incident
light propagates through the slits -- which along with the glass play the role
of a phase function $\vec{\varphi}$ -- and shines on the screen. The intensity
$I_{DG}$ as a function of the position $q$ is then recorded, to become a
position-based objective function. \label{fig:DG_cartoon}}
\end{figure}

The Diffraction Grating family of functions introduces a basic set of optical test-problems
for Pareto optimization, scalable in dimension and subject to a collection of defining parameters
for setting the Pareto front's curvature.

Given a diffraction grating optical setup of $n$ slits, defined by the width of each slit $b$ and the space between adjacent slits $h$,
and given a spatially uniform electromagnetic plane wave illuminating the slits with corresponding phases $\vec{\varphi} \in \left[0,2\pi \right]^n$,
the intensity on a screen point in the Fraunhofer regime (i.e., far field) positioned at $q$ reads:
\begin{equation}
 \label{eq:IntensityDG}
\begin{array}{l}
\medskip
\displaystyle I_{DG}\left(q, \vec{\varphi} \right) = \frac{1}{n^2}\textrm{sinc}^2\left(\frac{q b}{2} \right) \cdot
\left| \sum_{k=0}^{n-1} \exp\left(iq hk\right)\cdot \exp\left(i \varphi_k\right) \right|^2 \\
\displaystyle = \frac{1}{n^2}\textrm{sinc}^2\left(\frac{q b}{2} \right) \cdot \left\{n+ 2\cdot \sum_{k=0}^{n-1}\sum_{\ell > k}^{n-1}
\cos\left[q h \left(\ell-k\right)+\Delta\varphi_{\ell k} \right] \right\},
\end{array}
\end{equation}
where $\vec{\varphi}=\left(\varphi_0,\varphi_1,\ldots,\varphi_{n-1} \right)^T$ and $\Delta\varphi_{\ell k}\equiv \varphi_{\ell}-\varphi_{k}$.
Fig.\ \ref{fig:DG_cartoon} provides an illustration for the Diffraction Grating setup.

Given a set of $m$ competing points on the screen, described by a corresponding \emph{position vector} $\vec{q}\in \mathbb{R}^m$,
the $m$-objective Diffraction Grating problem to be Pareto optimized is defined as follows:
\begin{equation}
\label{eq:fDG}
\vec{f}\left(\vec{q},\vec{\varphi}\right) = \begin{pmatrix}
                \displaystyle I_{DG}\left(q_1, \vec{\varphi} \right)\\
                \displaystyle I_{DG}\left(q_2, \vec{\varphi} \right)\\
                \vdots \\
                \displaystyle I_{DG}\left(q_m, \vec{\varphi} \right)
                               \end{pmatrix} \longrightarrow \max
\end{equation}
The shape of the Pareto front is determined by the positions of the points on the screen, and may furthermore be controlled
by means of the parameters $b$ and $h$. This problem offers a rich variety of complexity levels, and can
easily be extended to many different forms, such as multiple wavelengths, consideration of controllable amplitudes, nonlinear screens,
2-dimensional screens, etc.

Let us consider a setup with
$b=1,~h=4$.
The intensity values on the screen due to optical interferences follow a {\bf period},
$T_q = \frac{2\pi}{h}$,
and it is thus convenient to consider positions in terms of this period $T_q$.
In our calculations we shall consider the maximization of the intensity at position zero,
$q_0=0$, competing with the maximization of the intensity at the following positions:
$q=\left\{0.1\cdot T_q,~~ 0.25\cdot T_q,~~ 0.5\cdot T_q \right\}$.

\begin{figure}
\centering \includegraphics[scale=0.6]{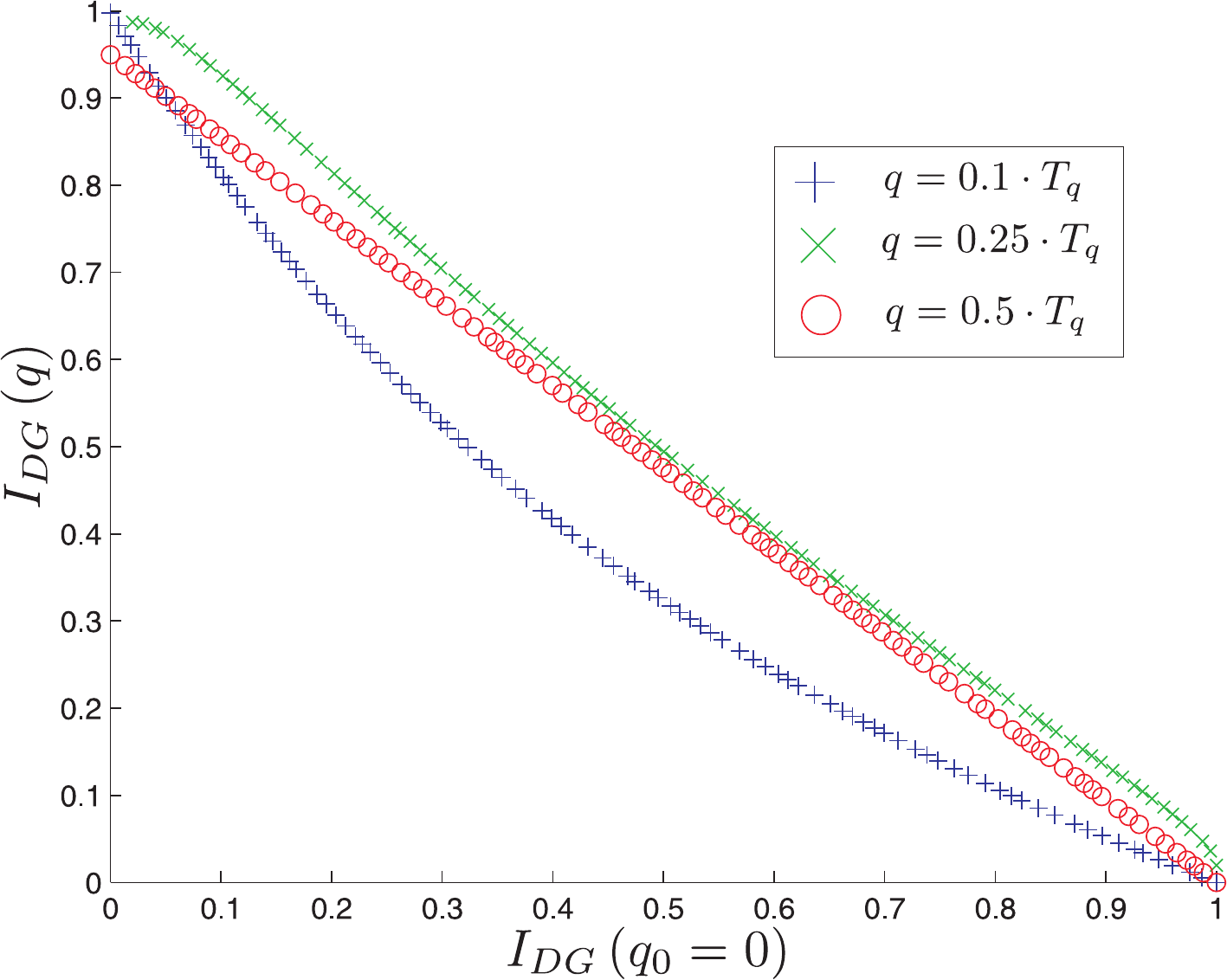}
\caption{Approximate Pareto fronts, attained by the MO-CMA, of the competition
between the intensity at $q_0=0$ to the intensity at each one of the positions
$q=\left\{0.1\cdot T_q,~~ 0.25\cdot T_q,~~ 0.5\cdot T_q \right\}$ -- formalized
as bi-criteria problems following Eq.\ \ref{eq:fDG} with $n=10$ phase points.
Given the fixed optical setup of the problem ($b=1,~h=4$), the positions of the
competing points on the screen dictate the curvature of the Pareto
front.\label{fig:diffraction2Q}}
\end{figure}
For illustration, approximate Pareto fronts (attained by the MO-CMA) of the
competition between the intensity at $q_0=0$ to the intensity at each one of
the positions $q=\left\{0.1\cdot T_q,~~ 0.25\cdot T_q,~~ 0.5\cdot T_q \right\}$
-- formalized as bi-criteria problems following Eq.\ \ref{eq:fDG} with $n=10$
phase points -- are depicted in Fig.\ \ref{fig:diffraction2Q}. In addition, an
approximate Pareto surface, obtained by the steady-state MO-CMA, presenting the
competition between intensities of points positioned at $q_0=0$, $q_1=0.25\cdot
T_q$, and $q_2=0.5\cdot T_q$ -- formalized as a tri-objective problem (Eq.\
\ref{eq:fDG}) with $n=10$ phase points -- is depicted in Fig.\
\ref{fig:diffraction3Q}.

In what follows, this study will focus on the bi-criteria case of $q_1=0.5\cdot
T_q=\frac{\pi}{4}$, i.e.,
\begin{equation}
\label{eq:I05Tq}
\begin{array}{l}
\medskip
\displaystyle f_1= I_{DG}\left( 0,\vec{\varphi} \right) \longrightarrow \max\\
\displaystyle f_2= I_{DG}\left( \frac{\pi}{4},\vec{\varphi} \right)\longrightarrow \max
\end{array}
\end{equation}
{\bf This test-case has a linear Pareto front; see Appendix \ref{app:ParetoProof} for the proof.}
Noise will be modeled here as with a QC phase function (Eq.\ \ref{eq:phinoise}),
i.e., additive Gaussian variations on each phase coordinate:
\begin{equation}
\label{eq:varphinoise}
\tilde{\vec{\varphi}} = \vec{\varphi} + \mathcal{N}\left(\vec{0},\epsilon_{S}^2 \mathbf{I} \right).
\end{equation}
Upon consideration of the noise propagation, the mean and the variance of the perceived fitness can be analytically derived
(see Appendix \ref{app:perceivedcalculations}). The \emph{mean} may be presented in a compact form,
\begin{equation}
\label{eq:DG_mean}
\begin{array}{l}
 \medskip
\displaystyle \left< \tilde{f}_i \left(q_i,\vec{\varphi}\right)\right> = \exp\left(-\epsilon_{S}^2\right) \cdot f_i \left(q_i,\vec{\varphi}\right) +\\
\displaystyle +\textrm{sinc}^2\left(\frac{q_i b}{2} \right) \cdot \left(\frac{1-\exp\left(-\epsilon_{S}^2\right)}{n}\right),
\end{array}
\end{equation}
revealing both additive as well as multiplicative components to the disturbed
objective function values. The \emph{variance}, although possessing a closed
analytical form, cannot be presented in a compact form, but rather in terms of
explicit summation (Eqs.\ \ref{eq:gratingvarconclusion} and
\ref{eq:gratingvarfinal} are given in Appendix
\ref{app:perceivedcalculations}).
\begin{figure}
 \centering \includegraphics[scale=0.55]{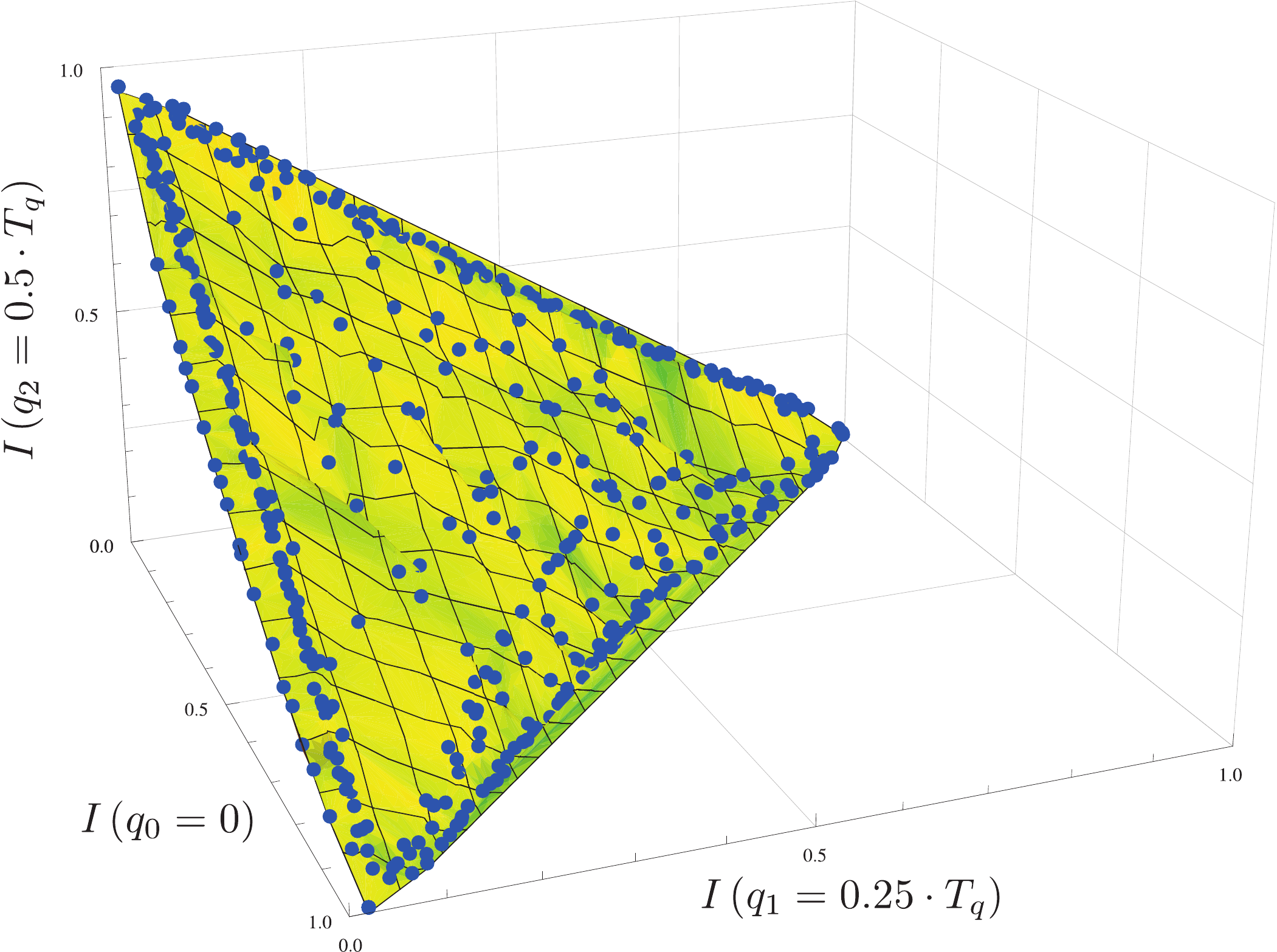}
\caption{The tri-criteria Diffraction Grating problem: approximate Pareto surface, attained by the steady-state MO-CMA,
of the competition between the intensities at $q_0=0$, $q_1=0.25\cdot T_q$, and $q_2=0.5\cdot T_q$ --
following Eq.\ \ref{eq:fDG} with $n=10$ phase points.\label{fig:diffraction3Q}}
\end{figure}
\subsection{Simulated Quantum Control System: Molecular Alignment}
We consider the QC application to \emph{dynamic molecular alignment},
which has been widely investigated in the past by means of noise-free simulations optimized by EAs (see, e.g., \cite{SHIR-DynamicAlign,Shir-JPhysB}).
The time evolution of heteromolecular diatomic alignment is quantum mechanically computed
with the system starting either in the ground rotational level (i.e., at zero temperature), or in a Boltzmann distribution of initial states.
The primary objective is maximization of molecular alignment, quantified by the cosine-squared observable,
$\mathcal{O}_1 = \cos^2(\theta)$,
which considers the angle $\theta$ of the molecular axis with respect to the laser polarization axis.
Fig.\ \ref{fig:box} provides an illustrative overview of the numerical process.
This single-objective form was extended to a bi-criteria framework \cite{SHIR-AlignMO,Klinkenberg},
considering additionally the demand for low-intensity pulses, satisfied by minimizing \emph{second harmonic generation} (SHG).
The bi-criteria formulation is thus posed as obtaining the Pareto front given the following objectives:
\begin{equation}
\label{eq:alignment}
\begin{array}{l}
\medskip
\displaystyle f_1= \left<\cos^2(\theta)\right> \longrightarrow \max\\
\displaystyle f_2 = SHG\left(E(t)\right) = \int_{-\infty}^{\infty} |E(t)|^4 dt \longrightarrow \min .
\end{array}
\end{equation}
For the explicit definition of the cosine-squared observable in $f_1$ we refer the reader to \cite{Vrakking},
while the electric field dependence in $f_2$ follows the formulation in Eq.\ \ref{eq:efield}. The values
of both $f_1$ and $f_2$ are normalized to lie on the interval $[0,1]$.
This bi-criteria molecular alignment problem was previously investigated only for the variant considering
a distribution of initial rotational states \cite{SHIR-AlignMO,Klinkenberg}.
We shall study the problem variant starting in the ground state \cite{Shir-JPhysB},
which constitutes a simulation with a duration of 5sec per single evaluation.
Even upon parallelization of the MO-CMA, we are still facing computationally expensive calculations, which
will practically limit the employment of various strategies and in repeating runs to a certain degree, as will be described.
We consider a discretization of $n=80$ points for the phase function.

We consider the introduction of noise to the phase pixels (Eq.\ \ref{eq:phinoise}), and incorporate
it into the simulation.
In order to evaluate the effect of this noise on the objective values $f_1$ and $f_2$,
the Gaussian variation has to be explicitly propagated
through the Fourier transform and the Schr\"odinger equation.
Such an analytical evaluation is highly complex (especially for $f_1$), generally unknown, and exceeds the scope of this study.

It should be noted that the bi-criteria alignment problem was Pareto optimized
by different variants of the NSGA-II \cite{SHIR-AlignMO} and of the SMS-EMOA \cite{Klinkenberg},
and will be introduced here to the MO-CMA algorithm.
\begin{figure}
\centering \includegraphics[scale=0.55]{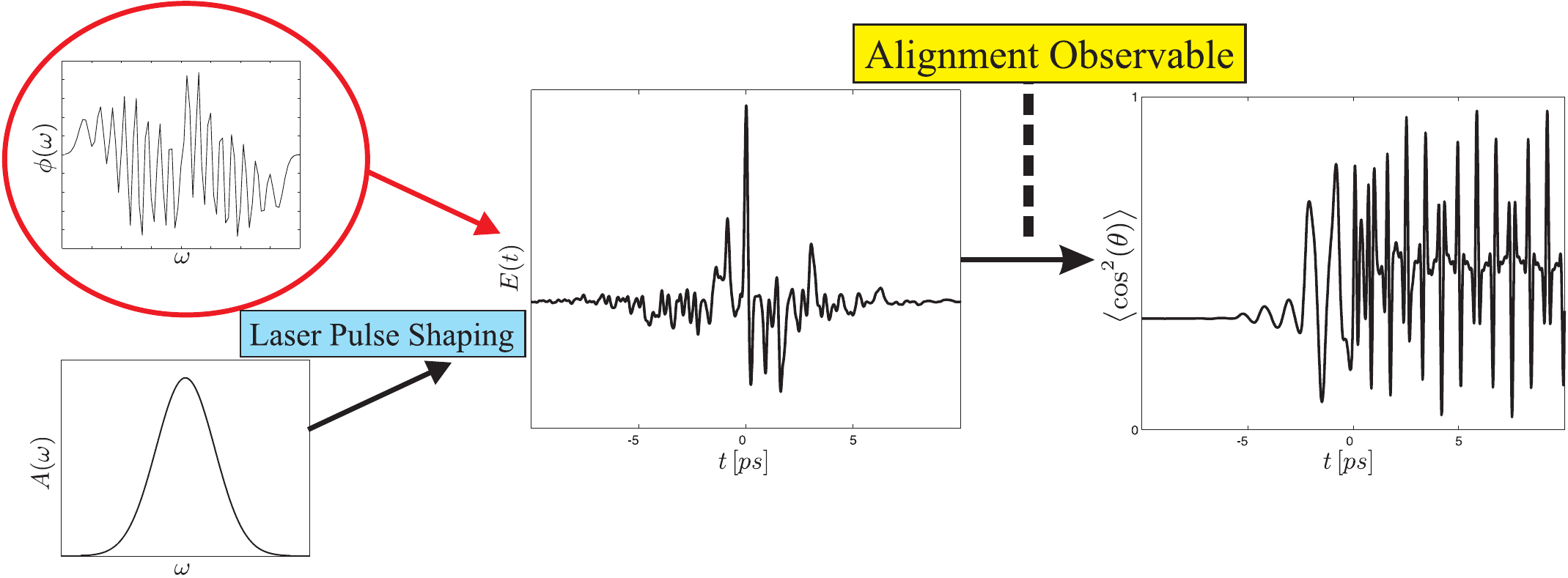} 
\caption{An overview of the numerical modeling of molecular alignment. The control function is the
spectral phase (circled, top left), the amplitude function is fixed and
approximated by a Gaussian (bottom left). The shaping process generates the
electric field, $E(t)$ (center), corresponding to Eq. \ref{eq:efield}. The
``Schr\"odinger Box'' of the alignment observable represents the numerical
calculation of the interaction between the electric field with the molecules,
based on the specified quantum dynamics equations. The revival structure
(right) is the observed simulated behavior of the molecules, upon which the
yield value is based. \label{fig:box}}
\end{figure}

\subsection{Experimental QC System I: Molecular Ion Generation}
\label{sec:labI} We consider the Pareto optimization of a QC
\emph{experimental} system in order to examine the conflict between two
competing quantum mechanical observables. Total ion signal $\mathcal{J}_{Ion}$
resulting from multi-photon ionization of \emph{nitromethane} with shaped,
femtosecond pulses is examined with the goal of discovering a unique set of
ionizing pulses. However, due to the high photon numbers ($\simeq 8$ photons at
800nm) required for single pulse ionization, ion generation is predominantly
dictated by pulse intensity, which obfuscates sensitivity to detailed temporal
control field structure. This inherent intensity dependence is removed by
additionally considering $SHG^{\alpha}$, where $\alpha = 2.5$ in the present
circumstance, as shown later in the inset of Fig.\ \ref{fig:ionmo_front} for
the unshaped reference pulse. Towards this end, we seek to maximize the ion
signal with low-intensity pulses, which naturally results in a conflict between
$\mathcal{J}_{Ion}$ and $SHG^{\alpha}$:
\begin{equation}
\label{eq:expTotalIon}
\begin{array}{l}
\medskip
\displaystyle f_1 =  \mathcal{J}_{Ion} \longrightarrow \max\\
\displaystyle f_2 = SHG^{\alpha} \longrightarrow \min .
\end{array}
\end{equation}
The search is carried out by means of $n=80$ independent phase pixels (see Eq.\ \ref{eq:phase}),
while $\mathcal{J}_{Ion}$ is recorded with a mass spectrometer and SHG is monitored with a two-photon diode.

\subsection{Experimental QC System II: Molecular Plasma Generation}
\label{sec:labII}
As an extension of the \emph{molecular ion generation system}, and as an application of the aforementioned Optimal Dynamic Discrimination concept,
we consider here an equivalent conflict between competing plasma channels.
Total free electron number $\mathcal{J}_{Plasma}$ resulting from multi-photon ionization of \emph{nitromethane} with shaped, femtosecond pulses is diagnosed with radar scattering.
Shaping is performed with the goal of discovering a unique set of ionizing pulses which discriminate against background plasma generation.
Here, also, due to the high photon numbers required for single pulse ionization,
electron generation is predominantly dictated by pulse intensity.
Equivalently, we seek to explore the conflict between $\mathcal{J}_{Plasma}$ maximization and $SHG$ minimization, in an effort to discover
unique, non-intensity dependent ionizing pulses:
\begin{equation}
\label{eq:expRODD}
\begin{array}{l}
\medskip
\displaystyle f_1 =  \mathcal{J}_{Plasma} \longrightarrow \max\\
\displaystyle f_2 = SHG \longrightarrow \min .
\end{array}
\end{equation}
The search is carried out by means of $n=80$ independent phase pixels (see Eq.\ \ref{eq:phase}),
while $\mathcal{J}_{Plasma}$ is recorded with a microwave transmitter/receiver and SHG is monitored with a two-photon diode.

The reader should keep in mind that despite some similarities in the two aforementioned laboratory systems -- i.e.,
Molecular Ion Generation (Section \ref{sec:labI}) versus Molecular Plasma Generation (Section \ref{sec:labII}) --
they possess very different experimental designs, and most importantly, they are subject to fundamentally different underlying physics.
Table \ref{tab:PROBLEMSUMMARY} summarizes the problems investigated in
this study.
\begin{table}
\caption{ Summary of Systems under Investigation\label{tab:PROBLEMSUMMARY}}
{\bf Simulations: Model Landscapes}\\
\begin{tabular}{l l l l} \hline
\emph{Problem Name} & \emph{Formulation} & \emph{Dimensionality} & \emph{Noise Levels}\\
Multi-Sphere & Eq.\ \ref{eq:multisphere} & $n=\left\{10,30,80\right\}$ & $\epsilon_x^2 = \left\{0.001,0.005,0.01,0.02,0.05 \right\}$\\
Diffraction Grating & Eqs.\ \ref{eq:IntensityDG}, \ref{eq:I05Tq} & $n=\left\{10,30,80\right\}$ & $\epsilon_S^2 = \left\{0.001,0.005,0.01,0.02,0.05 \right\}$\\
\hline
\end{tabular}
\\
{\bf Real-World Simulator} \\
\begin{tabular}{l l l l} \hline
\emph{Problem Name} & \emph{Description} & \emph{Dimensionality} & \emph{Noise Levels}\\
Molecular Alignment & Eq.\ \ref{eq:alignment} & $n=80$ & $\epsilon_{S}^2 = \left\{0.001,0.005,0.01,0.02,0.05 \right\}$\\
\hline
\end{tabular}
\\  {\bf Laboratory Experiments} \\
\begin{tabular}{l l l l} \hline
\emph{Problem Name} & \emph{Description} & \emph{Dimensionality} & \emph{Measured Noise Level}\\
Total-Ion Generation & Eq.\ \ref{eq:expTotalIon}  & $n=80$ & $\epsilon_{S}^2 \approx 0.01$\\
Plasma Generation & Eq.\ \ref{eq:expRODD} & $n=80$ & $\epsilon_{S}^2 \approx 0.01$\\
\hline
\end{tabular}
\end{table}
\normalsize

\section{Practical Observations}
\label{sec:experiments}
We describe here our observations of the three frameworks specified in the previous section: Model landscapes, QC simulations, and QC experiments.
Towards this end, we adhere to the structured reporting scheme suggested by Preuss \cite{Pre07}, starting by posing the scientific question to answer.
Each framework is treated by means of relevant methodologies, which depend upon the research question as well as upon the practical constraints
(computational resources, experimental considerations, etc.).
Section \ref{sec:obs_spheres} focuses on the performance of the MO-CMA on the Multi-Sphere landscape subject to noise.
Section \ref{sec:obs_diffraction} considers the performance of several EMOA on the Diffraction Grating problem.
Section \ref{sec:obs_align} reports on results of the simulated Molecular Alignment problem,
and finally, sections \ref{sec:obs_lab} and \ref{sec:obs_lab2} present laboratory results of the Molecular Ion Generation
and Molecular Plasma Generation problems, respectively.

\noindent\textbf{Pre-Experimental Planning.} The MO-CMA code relies on the
Shark Library release 2.2.1\footnote{http://shark-project.sourceforge.net/}
\cite{Shark}. The simulated systems\footnote{A software package of the
Diffraction Grating problem will be provided by the authors upon request.} are
optimized by means of an extended MPI-based parallel implementation to the
Shark code, while the laboratory employs an extended LabView version, which
relies on Shark DLL's. The default parameters are kept, with a total population
size of either $\mu_S=\lambda_S=100$ search points for the simulations, or
$\mu_L=\lambda_L=50$ search points in the laboratory. Random initialization of
search points is carried out uniformly in the interval $\left[-10,10\right]^n$
for the Multi-Sphere cases, and in $\left[0,2\pi\right]^n$ otherwise. The
initialization in the experimental systems also relies on \emph{seed} search
points, which were obtained in single-objective CMA-ES runs addressing a
tailored ratio objective function.

The presentation of the results will include the archived \emph{perceived} fronts attained by the MO-CMA
for all frameworks under investigation. For the two simulated frameworks, we are in a privileged position to reevaluate
archived solutions with noise-free objective functions, and thus we shall present also the \emph{ideal} fronts,
which are calculated \emph{a posteriori}.

We would like to stress the fact that the perceived fronts, due to the elitist strategy in use, are expected to represent
the tail of the disturbance distribution, as projected on the archived solutions.
It is important to consider to what extent the attained perceived front may be reconstructed \emph{de facto}
given the archived solutions. Therefore, we will generate statistical samples of each archived solution, subject to the same
noise conditions, and present additionally the nature of the obtained distributions. We consider this a direct indication of the usefulness
of the archived solutions.

\subsection{Preliminary: MO-CMA on the Multi-Sphere Landscape}
\label{sec:obs_spheres}
\noindent\textbf{Research Question.} How does noise on the decision parameters affect the MO-CMA performance, if at all, and do any
of the considered schemes of three parental re-evaluation scenarios (Section \ref{sec:proposedscheme}) handle noise better?

\noindent\textbf{Performance Criteria.}
In order to assess the quality of the obtained Pareto fronts in the different noisy test-cases,
we shall consider two performance criteria.
Given the attained \emph{hypervolume indicator} values, $V_i$ \cite{zt1998b,ztlf2003a}
(also known as 'S-Metric' \cite{zitz1999a} or 'Lebesgue Measure' \cite{LRS1999}),
the first criterion is their \emph{relative deterioration} with respect to the hypervolume of the
Pareto front obtained in noise-free conditions, $V_{\epsilon_x=0}$.
This criterion will be assessed numerically, for which we set up and test a corresponding quantifier:
\begin{equation}
\label{eq:normalizedperformance}
 \Delta_V^{(i)} = \frac{V_{\epsilon_x=0}-V_i}{V_{\epsilon_x=0}}.
\end{equation}
The second criterion is the spatial distribution of the attained front,
for which we set up and test a corresponding quantifier. In particular, given a final population of size
$\mu$, $\left\{\vec{f}_k^{(i)} \right\}_{k=1}^{\mu}$, sorted by means of partial order,
let us consider its $\chi^2$ value with respect to a reference noise-free population, $\left\{\vec{p}_k \right\}_{k=1}^{\mu}$,
which toward this end represents a desired distribution of points along the front:
\begin{equation}
\label{eq:objectivediversity}
\Delta_D^{(i)} = \sum_{k=1}^{\mu} \frac{\| \vec{f}_k^{(i)} - \vec{p}_k \| ^2}{\| \vec{p}_k \|} \sim \chi^2\left(\mu \right).
\end{equation}
In essence, values given by Eqs.\ \ref{eq:normalizedperformance} and \ref{eq:objectivediversity}
reflect the degrees of deterioration in the hypervolume and the spatial diversity, respectively,
with respect to the noise-free simulations.

\subsubsection{Numerical Results}
\noindent\textbf{Setup.} We consider here the numerical results of the various
simulations on the Bi-Sphere model landscape. While the number of function
evaluations per scheme varied, due to the parental re-evaluation procedure, the
number of total iterations was fixed per search space dimensionality:
$num_{iter}=\left\{10^4,2\cdot10^4,5\cdot10^4 \right\}$ for
$n=\left\{10,30,80\right\}$, respectively. Those values were set based on
preliminary runs, in which the MO-CMA converged to a highly-satisfying front,
with minimal error from the true Pareto front, and with a uniform distribution
of points. For the hypervolume calculations, a reference point at $[2,2]$ is
considered.

\noindent\textbf{Experimentation/Visualization.} We focus on presenting
statistical analyses of specific test-cases, comparing the 3 different MO-CMA
schemes. Overall, taking into account the \emph{a posteriori} calculation, we
shall have two sets of results per procedure. Fig.\ \ref{fig:boxplots} depicts
the statistical box-plots for $\Delta_V$ values of the Multi-Sphere landscape,
taking into account only converged points in the box $[0,2]^2$ in the objective
space, for three test-cases: $n=10$ with $\epsilon_x^2=0.05$ (top), $n=30$ with
$\epsilon_x^2=0.02$ (middle), and $n=80$ with $\epsilon_x^2=0.01$ (bottom).
Fig.\ \ref{fig:ODboxplots} depicts the equivalent box-plots for the $\Delta_D$
calculations (considering all 30 runs per case).
\begin{figure*}
\begin{multicols}{2}
\centering \includegraphics[scale=0.4]{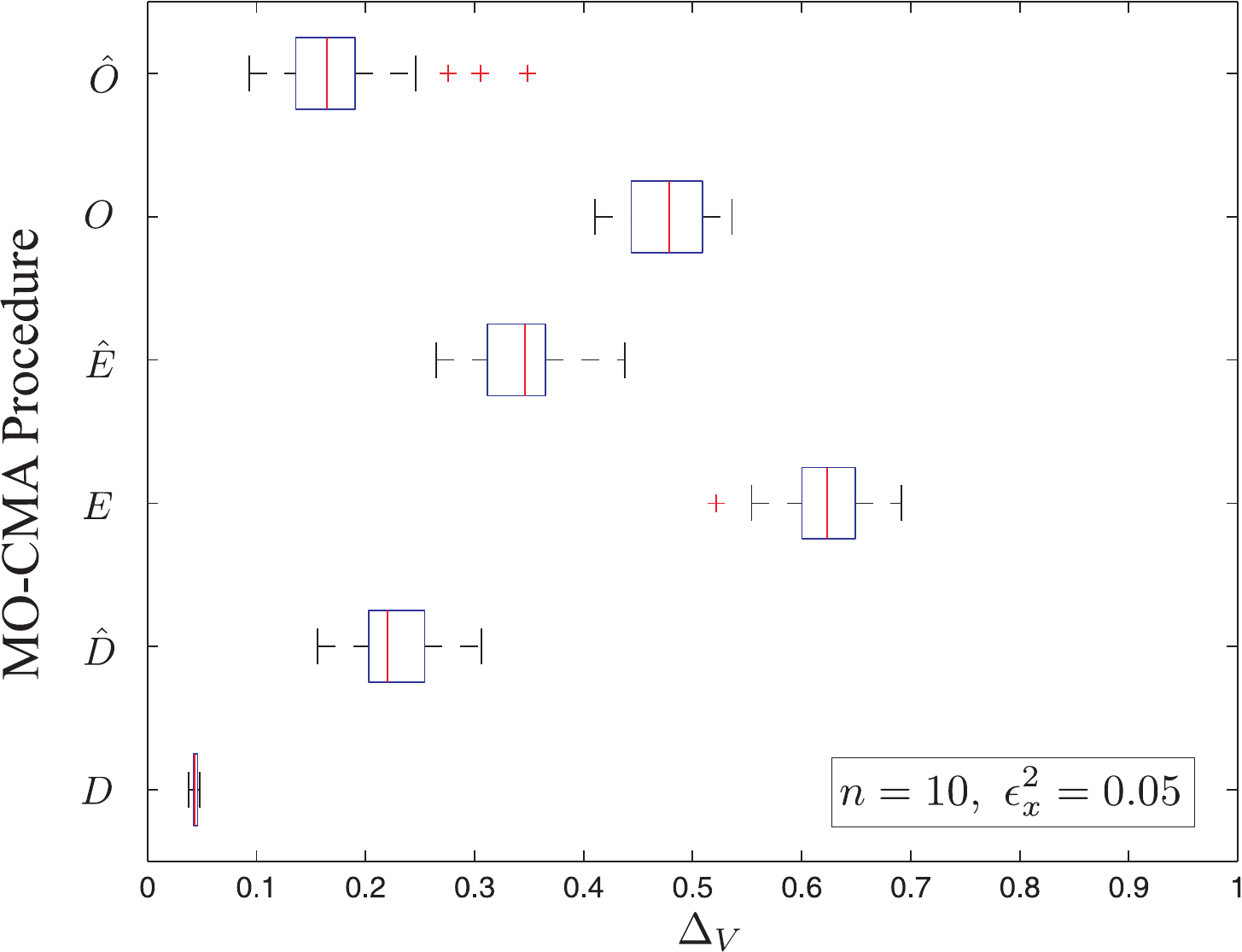}
\centering \includegraphics[scale=0.4]{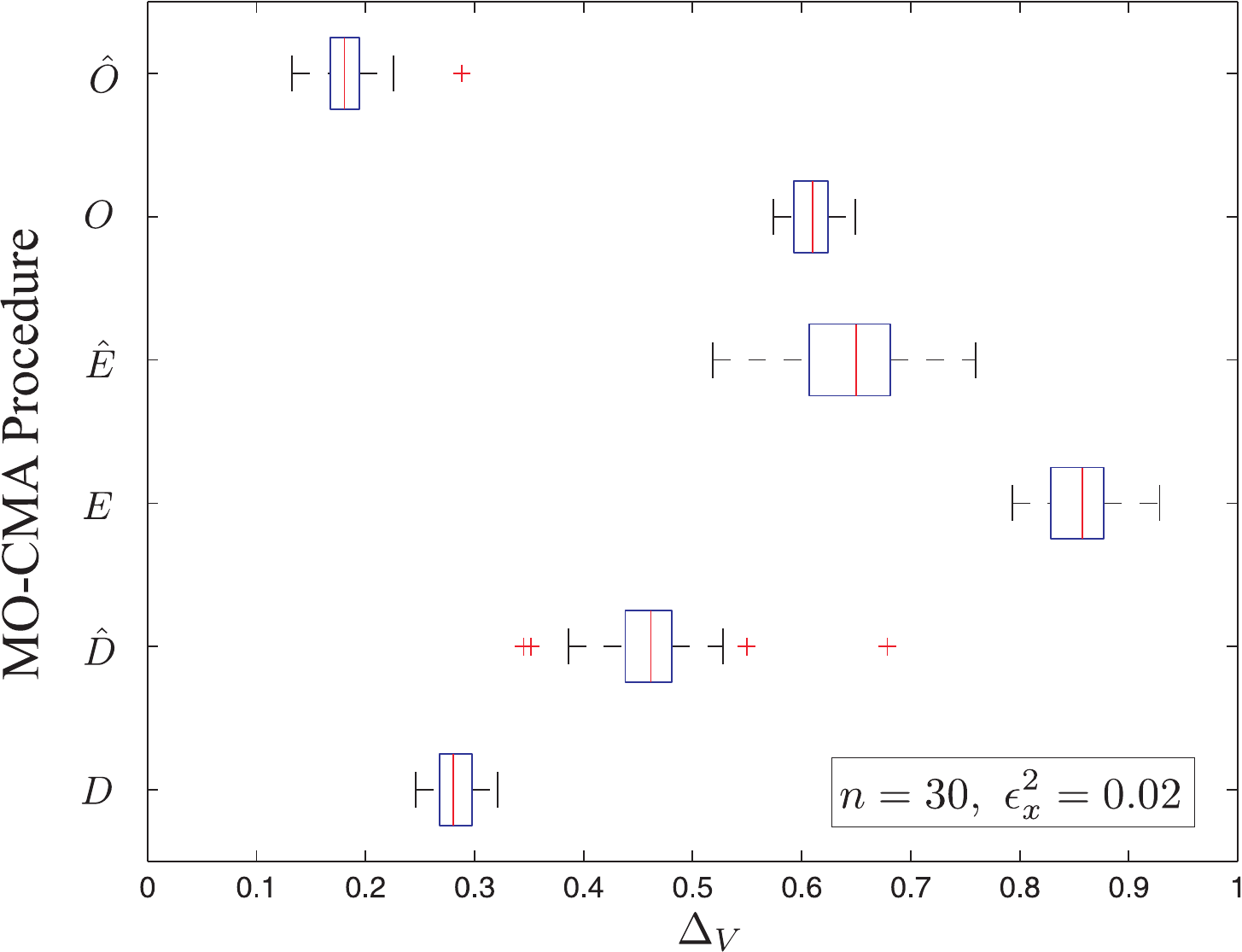}
\centering \includegraphics[scale=0.4]{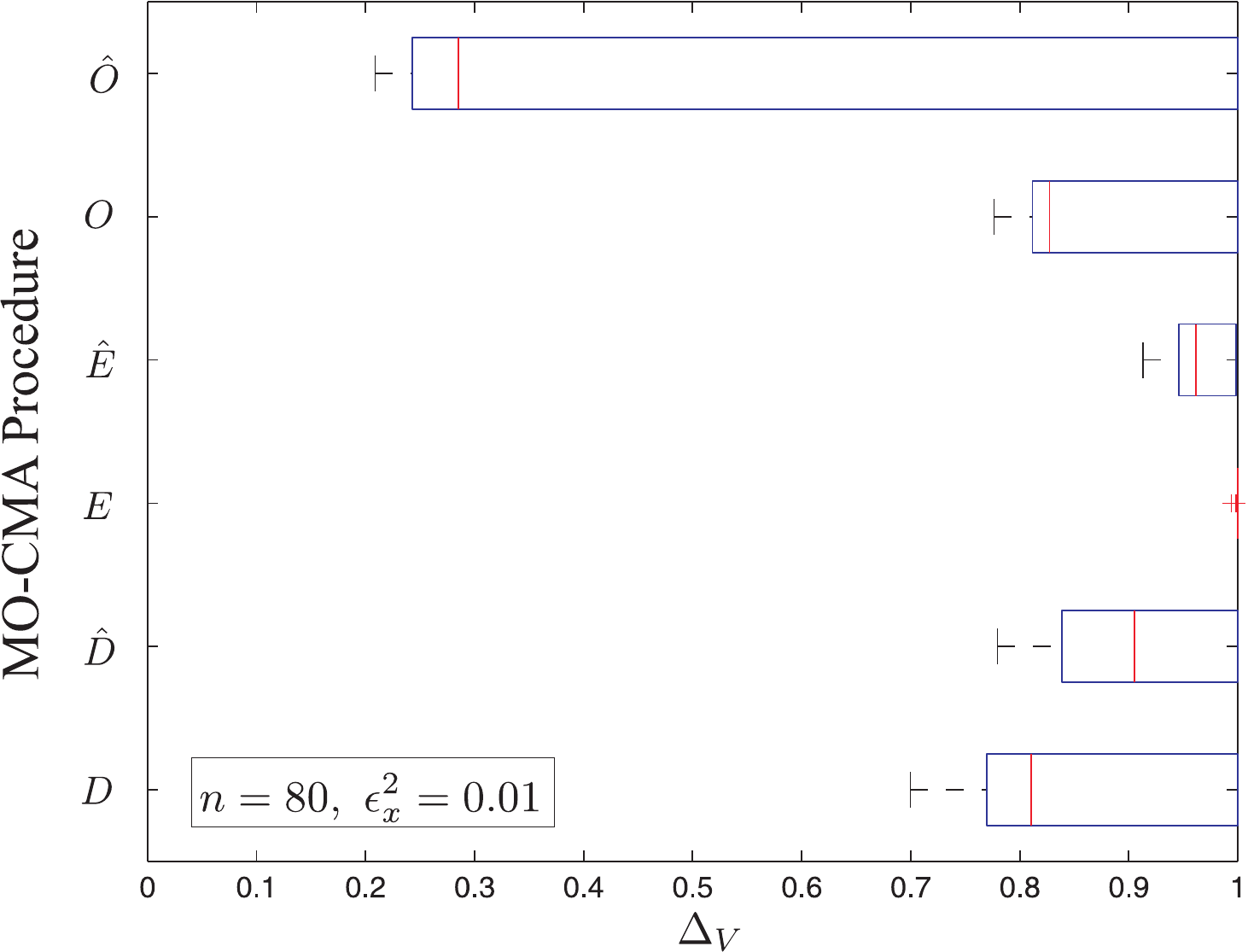}
\caption{Box-plots of $\Delta_V$ values (Eq.\ \ref{eq:normalizedperformance}) over all converged Multi-Sphere
runs, of three test-cases:
$n=10$ with $\epsilon_x^2=0.05$ [top], $n=30$ with $\epsilon_x^2=0.02$ [middle], and $n=80$ with $\epsilon_x^2=0.01$ [bottom].
The \emph{perceived fronts} of the three optimization procedures, corresponding to the
three parental re-evaluation scenarios (Section \ref{sec:proposedscheme}), are
noted as $\left\{D,~E,~O\right\}$.
The \emph{ideal fronts} (noise-free evaluation of the Pareto sets) are
noted as $\left\{\hat{D},~\hat{E},~\hat{O}\right\}$. Each case consists of 30 runs.
\label{fig:boxplots}}
\newpage
\centering \includegraphics[scale=0.4]{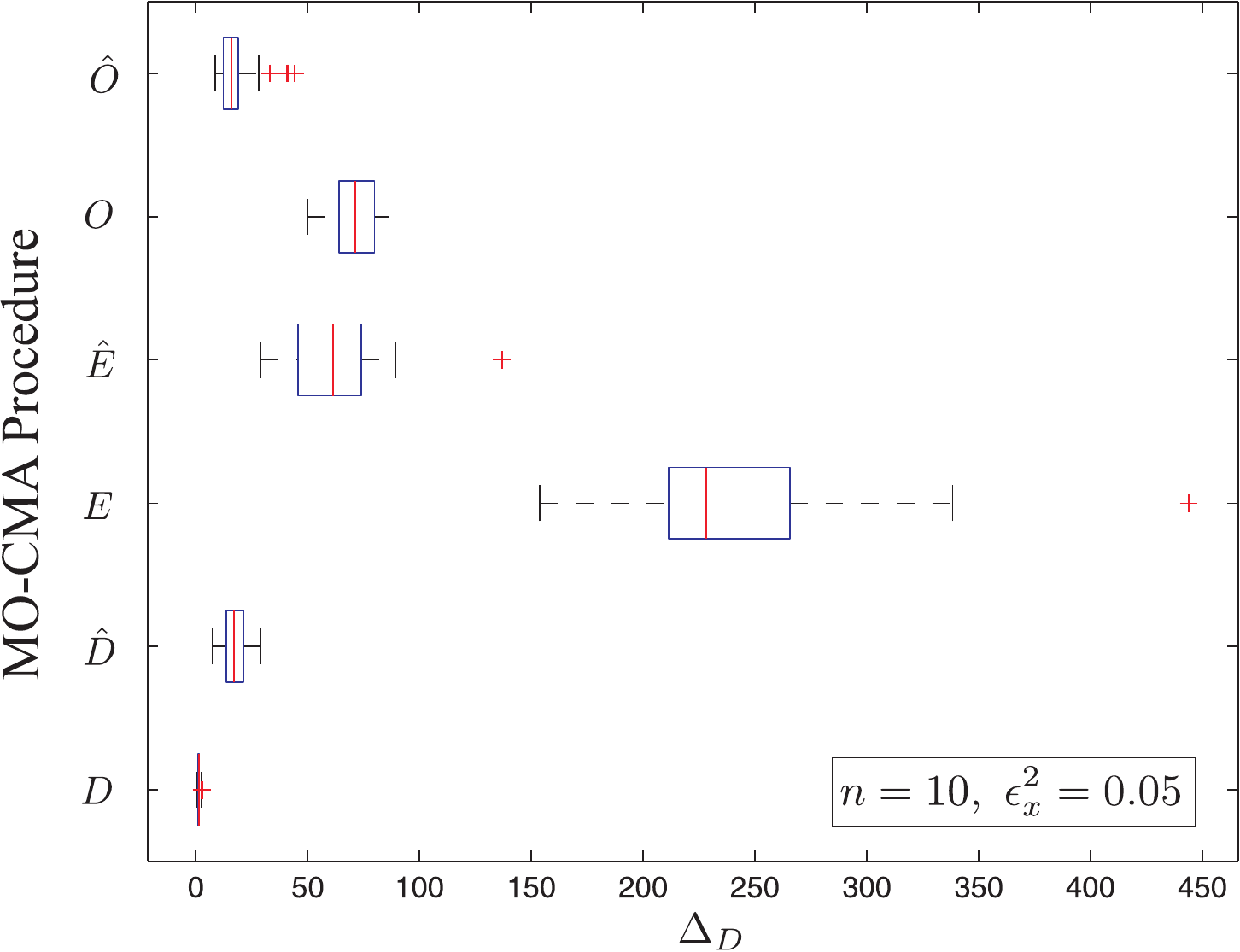}
\centering \includegraphics[scale=0.4]{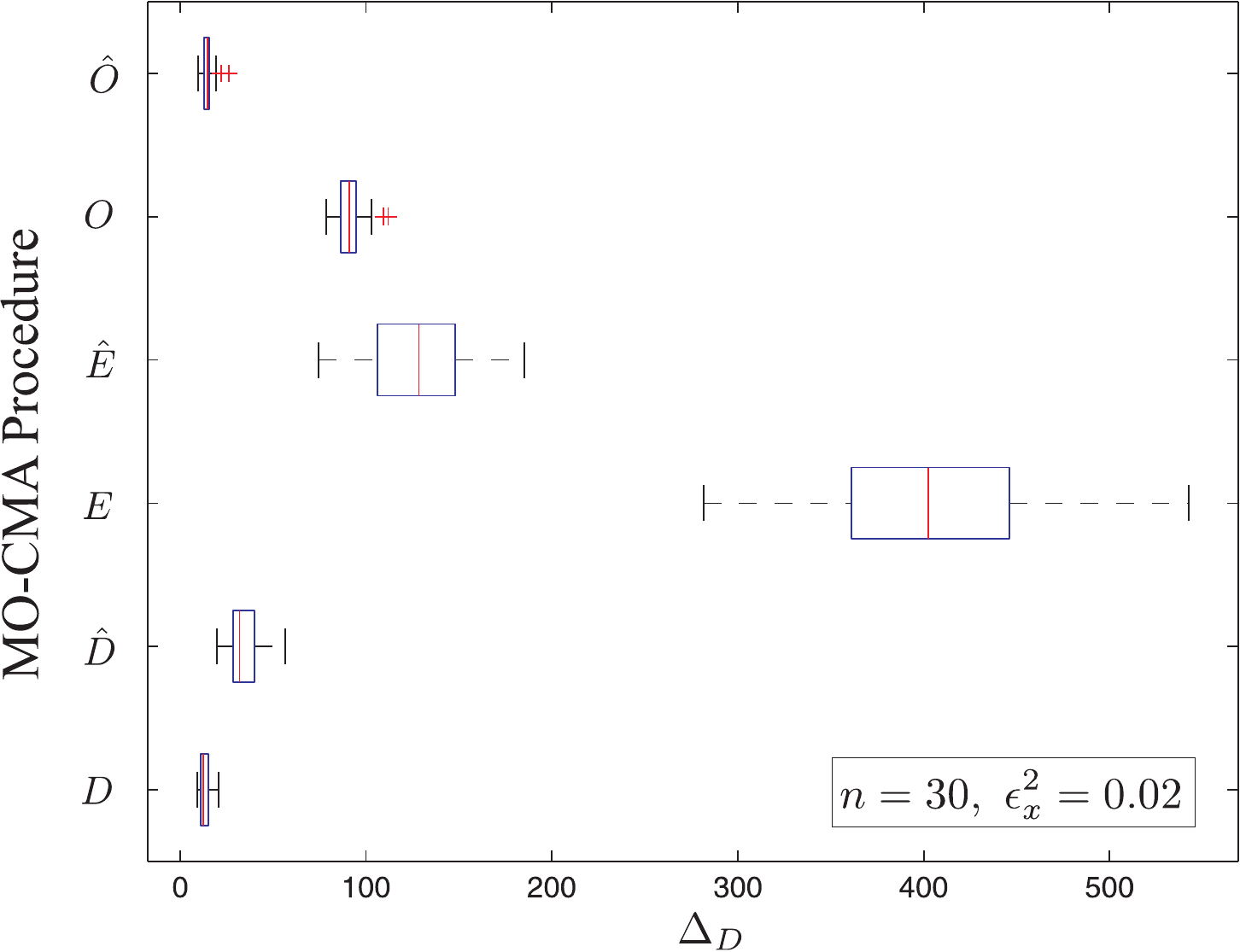}
\centering \includegraphics[scale=0.4]{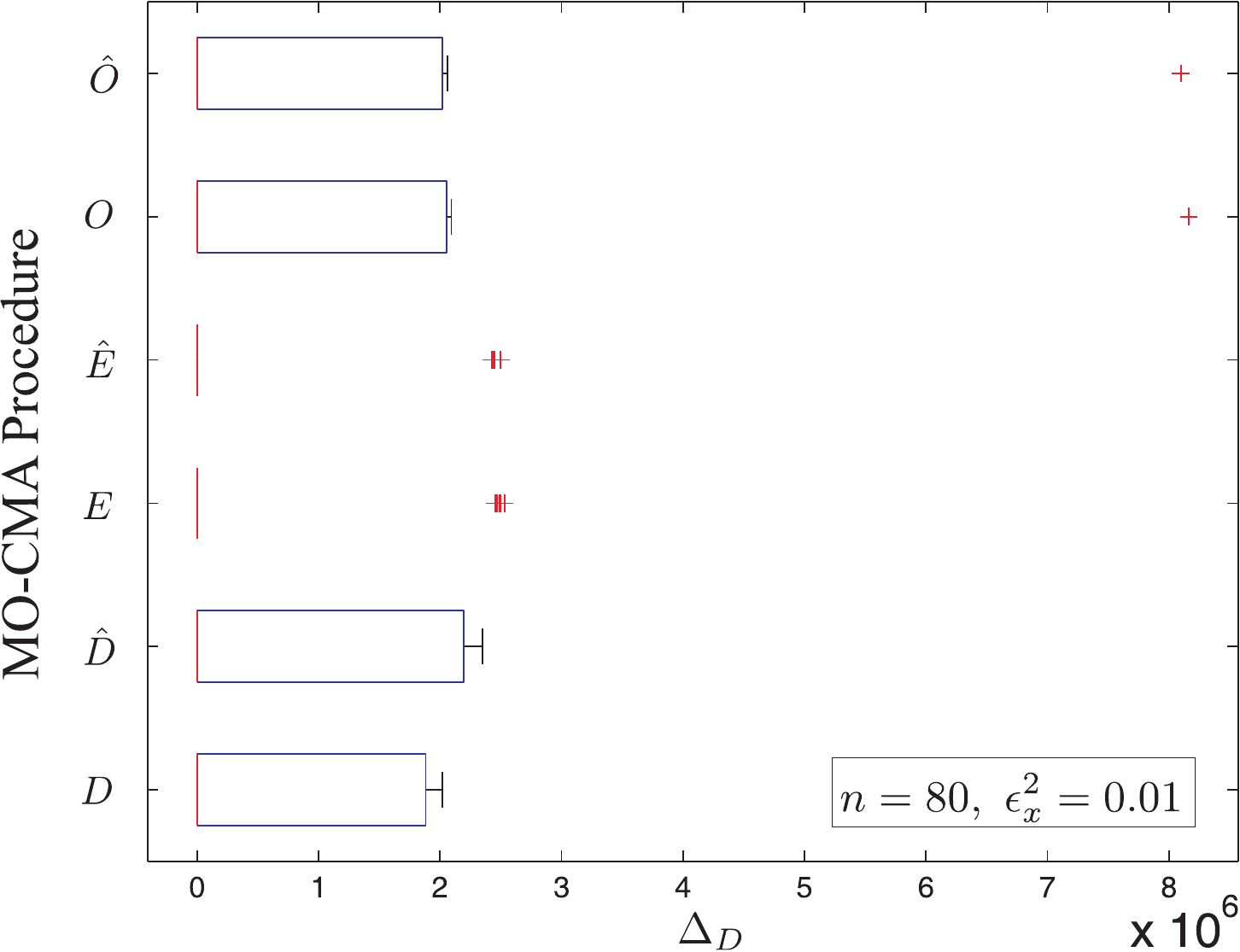}
\caption{Box-plots of $\Delta_D$ values (Eq.\ \ref{eq:objectivediversity}) over all converged Multi-Sphere
runs, of three test-cases:
$n=10$ with $\epsilon_x^2=0.05$ [top], $n=30$ with $\epsilon_x^2=0.02$ [middle], and $n=80$ with $\epsilon_x^2=0.01$ [bottom].
The \emph{perceived fronts} of the three optimization procedures, corresponding to the
three parental re-evaluation scenarios (Section \ref{sec:proposedscheme}), are noted as $\left\{D,~E,~O\right\}$.
The \emph{ideal fronts} (noise-free evaluation of the Pareto sets) are
noted as $\left\{\hat{D},~\hat{E},~\hat{O}\right\}$. Each case consists of 30 runs.
\label{fig:ODboxplots}}
\end{multicols}
\end{figure*}

\subsubsection{Discussion}
While the perceived fronts given as output by the MO-CMA provide fair Pareto
approximations, with some expected error due to the presence of noise, an
examination of the actual archived solutions reveals an entirely different
picture. When exposed to noise on the decision parameters, the default MO-CMA
is observed to lack population diversity in the objective space for all search
space dimensions under investigation. This effect becomes evident upon the
\emph{a posteriori} noise-free evaluation of the archived solutions: the
outcome is several clustered points along the perceived front, as depicted in
Fig.\ \ref{fig:clusteringeffect}. The lack of diversity continually worsens as
the expected disturbance increases, i.e., higher noise strength and higher
dimensionality lead to increased clustering. Fig.\ \ref{fig:ODboxplots} depicts
box-plots for the $\Delta_D$ values of 3 Bi-Sphere test-cases. While the raw
$\Delta_D$ values do not reflect the degree of discrepancy by themselves, it is
important to consider those values with respect to the perceived front of the
default MO-CMA, which typically obtains a fair approximation to the true front
given the disturbance. This effect may also be observed in Fig.\
\ref{fig:boxplots}, when noticing the considerable counter-intuitive
differences in the $\Delta_V$ values between the default MO-CMA ('$D$') and its
\emph{a posteriori} \textbf{noise-free} evaluation ('$\hat{D}$').

The proposed explanation for the observed lack of diversity is the following.
During the run, search points which lead in the progress towards the Pareto
front generate offspring by means of Gaussian sampling (Eq.\ \ref{eq:cma_gen}).
Offspring with good positions with respect to the front, especially whose
disturbed fitness values lie along the currently progressing front, are
selected, and their decision parameters are archived. While the perceived
offspring's point in the objective space may represent a promising coordinate
with respect to ranked domination as well as to hypervolume contribution, its
pre-image in the decision space is merely a small deviation from the original
parent. In practice, leading individuals take-over the population, since
generating offspring by means of small mutations in combination with the noise
disturbance is sufficient to span a fair distribution along the Pareto front.
This statement was numerically assessed by explicitly calculating the expected
distribution with the analytical forms of Eqs.\ \ref{eq:spheres_mean} and
\ref{eq:spheres_var}, and it was furthermore corroborated with the sampling of
the actual archived Pareto optimal set of an MO-CMA run. The aforementioned
calculations are depicted in Fig.\ \ref{fig:clusteringeffect}, which provides a
clear picture -- \textbf{the obtained clusters are the minimal configuration of
points for sampling the entire Pareto front with the current noise level, and
moreover, the perceived front can indeed be reconstructed by elitist selection
of the attained statistical sample.} It is also evident from further
calculations that the number of clusters increases with the reduction of noise
disturbance, as expected from Eqs.\ \ref{eq:spheres_mean} and
\ref{eq:spheres_var}. This clustering effect may be considered as a
multi-objective generalization to the systematic overvaluation effect, as
discussed by Arnold and Beyer for the single-objective case in
\cite{ArnoldB02}. We thus claim that fitness disturbance in multi-objective
optimization is responsible for the low objective space diversity in the
archiving mechanism of the MO-CMA.

As a second routine employed, parental re-evaluation every generation clearly
hampered the performance of the default MO-CMA. The attained solutions
constitute worst quality sets, when compared to the default procedure, for all
the different test-cases under investigation. This poor performance may be
clearly observed in Figs.\ \ref{fig:boxplots} and \ref{fig:ODboxplots} when
considering '$E$' / '$\hat{E}$'. The explanation for this behavior is a
stochastic disturbance to the archiving mechanism, which has a direct negative
impact on the consistency of the selection phase.

The third routine, MO-CMA with occasional parental re-evaluation (every 10
generations), seems empirically to be the best solution for the systematic
disturbance problem. While low population diversity, as assessed with
$\Delta_D$ values, is still observed upon the \emph{a posteriori} noise-free
evaluation of the archived solutions, the attained clusters are bigger in size,
and closer to the true Pareto front. Essentially, the archived solutions of
this procedure are of the highest quality when reconstructed \emph{a
posteriori} in comparison to the other procedures (see '$O$' / '$\hat{O}$' in
Figs.\ \ref{fig:boxplots} and \ref{fig:ODboxplots}). The perceived Pareto front
is typically not as good as the one attained by the default MO-CMA, but unlike
the default procedure, the \emph{a posteriori} noise-free evaluation yields a
better Pareto front in comparison to its perceived front, and especially better
than the post-default front. This effect is also visually apparent when
exploring the box-plots of both quantifiers and noting the inversion of roles:
while '$D$' is always of higher quality than '$\hat{D}$', '$O$' is of lower
quality than '$\hat{O}$'. We conclude that in line with the single-objective
scenario, occasional parental fitness re-evaluation seems to be superior with
respect to no re-evaluation at all and to re-evaluation in every generation.
\begin{figure}
\centering \includegraphics[scale=1.0]{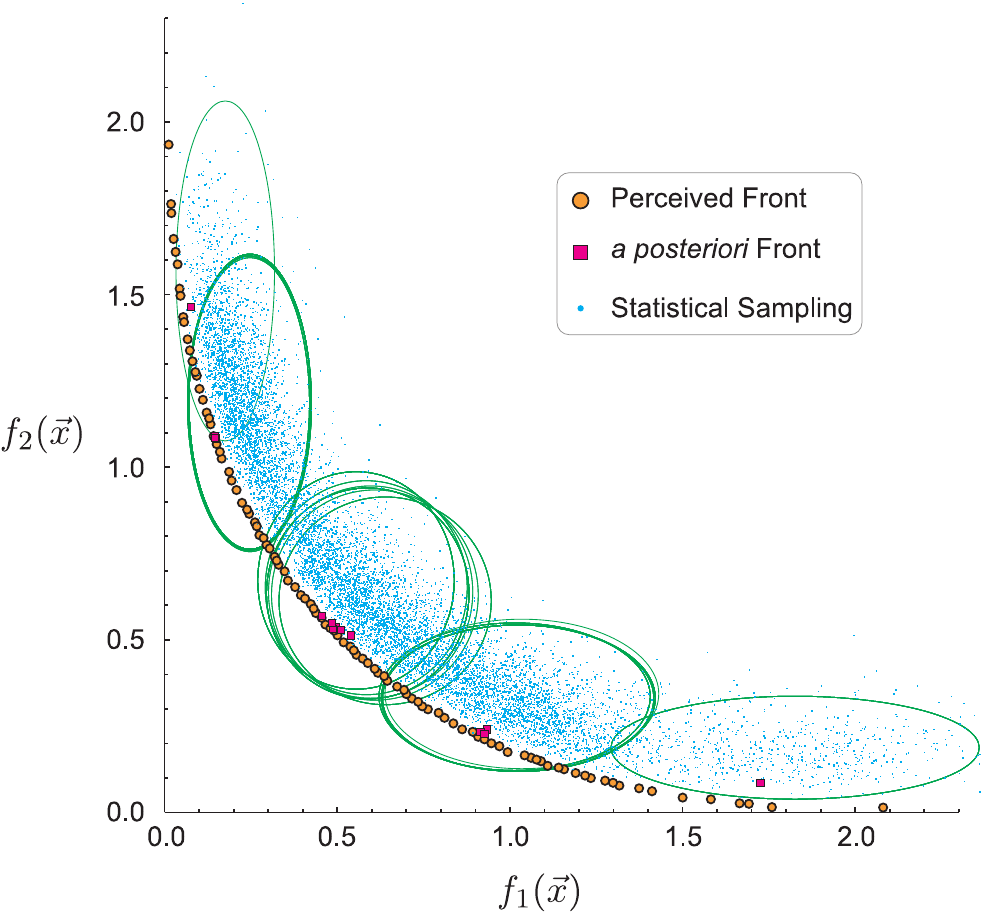}
\caption{Statistical sampling of the Pareto set attained by the MO-CMA on the
Multi-Sphere with $n=10$ at $\epsilon_x^2=0.01$. The perceived Pareto front
constitutes an excellent approximation to the true front, and the \emph{a
posteriori} noise-free evaluation of its pre-images yields clusters along the
front, whose sampling subject to the same noise level yields the depicted cloud
of points. The ellipses represent the disturbance distributions, centered about
the mean with twice the standard deviations as axis, based upon the analytical
forms of the perceived fitness in Eqs.\ \ref{eq:spheres_mean} and
\ref{eq:spheres_var}. It is clear from these results that the clusters are the
minimal configuration of points for sampling the entire Pareto front, subject
to elitism, with the current noise level. \label{fig:clusteringeffect}}
\end{figure}

\subsubsection{Reference Algorithms}
We considered additional standard EMOA as reference methods to the MO-CMA, in
order to observe their behavior on the Multi-Sphere model landscape, subject to
the current modeling of noise. We carried out simulations on similar test-cases
with the NSGA-II \cite{Deb01} as well as with the SMS-EMOA \cite{EBN05}. We
employ Deb's operators and his defaults settings for the
NSGA-II\footnote{Source code of the NSGA-II algorithm used in this work was
downloaded from the KanGAL homepage: http://www.iitk.ac.in/kangal/}. Regarding
the SMS-EMOA, we follow the settings described at \cite{SMS-EMOA_Journal}\footnote{Source code was provided by Michael Emmerich}. 
The population sizes are
similar to those employed by the MO-CMA. These settings hold for the
application of both NSGA-II and SMS-EMOA throughout the entire study. Typical
runs of both algorithms on the case of $n=10$ with $\epsilon_x^2=0.01$ are
depicted in Fig.\ \ref{fig:referenceEMOA}, presenting the perceived fronts
versus the \emph{a posteriori} noise-free evaluation of the attained Pareto
optimal sets.
\begin{figure*}
\begin{multicols}{2}
\centering \includegraphics[scale=0.4]{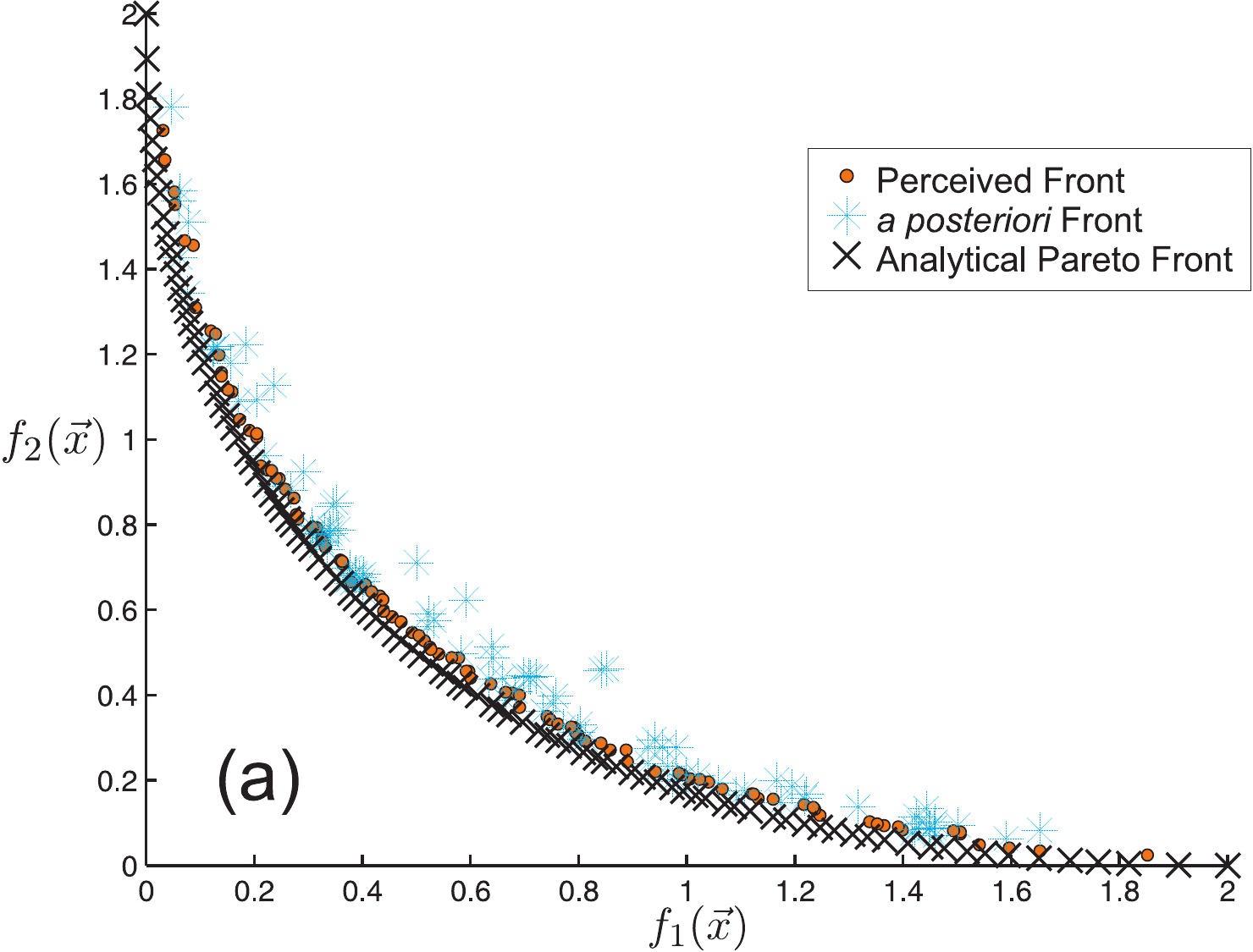}
\newpage
\centering \includegraphics[scale=0.4]{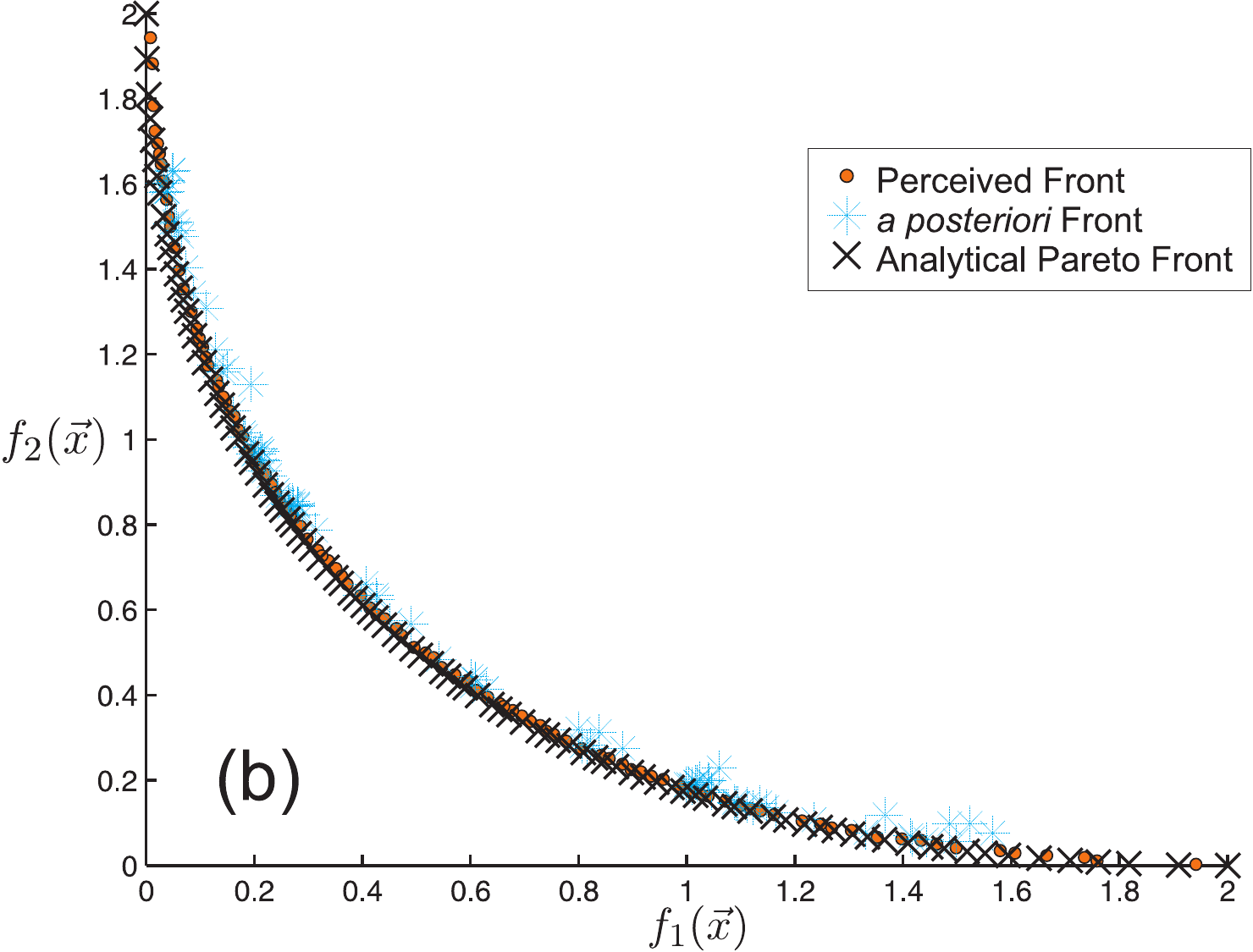}
\end{multicols}
\caption {Typical runs of the reference EMOA on the noisy Multi-Sphere case of
$n=10$ with $\epsilon_x^2=0.01$: [LEFT (a)] NSGA-II versus [RIGHT (b)]
SMS-EMOA. The figures depict the perceived fronts, the \emph{a posteriori}
noise-free evaluation of the Pareto optimal sets, and the analytical Pareto
front (Eq.\ \ref{eq:2d-sphere-front}).\label{fig:referenceEMOA}}
\end{figure*}
The NSGA-II attained a perceived Pareto front which constitutes a good
approximation to the true front, and at the same time, the noise-free
reconstruction of the Pareto optimal set provides a reasonable front. The
SMS-EMOA, on the other hand, attained a perceived Pareto front which offers an
excellent approximation to the true front, and upon the noise-free
re-evaluation of the Pareto optimal set the reconstructed front is observed to
lose its diversity to some extent. It should be stressed that the absolute
'clustering effect' within the archiving mechanism, which was typical of the
MO-CMA, was not observed for these reference algorithms. This might reflect the
difference between an algorithm which is clearly designed for learning
distributions (i.e., employing statistical learning), such as the MO-CMA,
versus EMOA with traditional evolutionary core mechanisms, which evidently
operate in a na\"ive way. Overall, in terms of the capacity to reconstruct
Pareto information out of the archived solutions, SMS-EMOA seems to perform
best on the Multi-Sphere noisy model landscape. A more comprehensive
performance comparison between these three EMOA will be carried out in the
following section with regard to the Diffraction Grating model landscape.

\subsubsection{Noisy Tri-Sphere Simulations}
Finally, we tested the behavior of the MO-CMA on the Tri-Sphere case (Eq.\ \ref{eq:multisphere} with $m=3$).
Toward this end, we employed a steady-state implementation which reduces the extensive complexity of the hypervolume calculations.
We provide here a brief qualitative description of our observations.
The MO-CMA obtained a good approximate Pareto surface for the noise-free problem. Upon consideration of systematic noise on the decision
parameters, as done in the Bi-Sphere case, the diversity loss effect in the archiving mechanism of the decision space is not observed to be
significant any longer, even at high noise levels of, e.g., $\epsilon_x^2=0.05$.
We propose the following explanation for this observation: given the selection mechanism of the MO-CMA,
treatment of an additional objective reduces the selection pressure. Lower pressure may thus reduce the probability
of take-over, which was our understanding of the mechanism for the 'clustering effect'.

\subsection{Diffraction Grating: Extensive Performance Comparison}
\label{sec:obs_diffraction}
Rather than considering the individual performances of the 3 MO-CMA schemes,
we present a comprehensive performance comparison between the default MO-CMA, SMS-EMOA,
and NSGA-II on the Diffraction Grating problem in several dimensions and at various noise levels.
As a secondary \emph{research question}, we aim at reporting on the MO-CMA behavior on this problem.

Let us begin by qualitatively describing the MO-CMA behavior on this search
problem, in light of the observation reported in Section \ref{sec:obs_spheres}.
Fig.\ \ref{fig:ellipses_diffraction} depicts typical results of the MO-CMA on
two variants of the Diffraction Grating problem with $n=10$ phase points at two
noise levels. Equivalent to Fig.\ \ref{fig:clusteringeffect}, the Pareto sets
are reconstructed \emph{a posteriori} in noise-free evaluations, then
statistically sampled at the same noise levels of the evolutionary run, and
compared to the perceived Pareto fronts, given as output by the algorithm. As a
reference, the ellipses representing the noise distribution are plotted,
according to Eq.\ \ref{eq:DG_mean} (mean) and Eq.\
\ref{eq:gratingvarconclusion}-\ref{eq:gratingvarfinal} (variance; see Appendix
\ref{app:perceivedcalculations}). It is straightforward to observe the
'clustering effect' in the archiving mechanism, similar to the one occurring in
the Multi-Sphere case.
\begin{figure*}
\begin{multicols}{2}
\centering \includegraphics[scale=0.4]{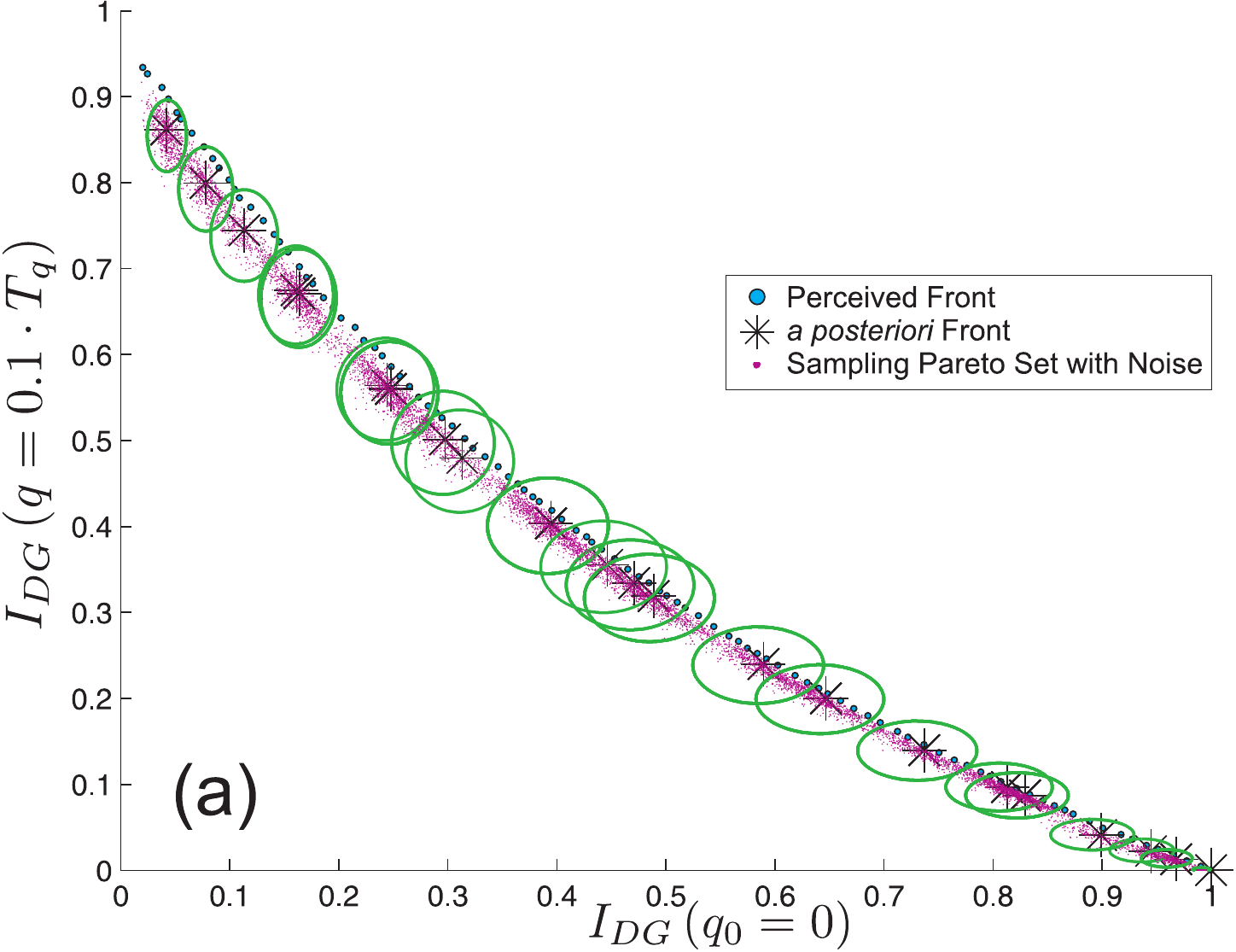}
\newpage
\centering \includegraphics[scale=0.4]{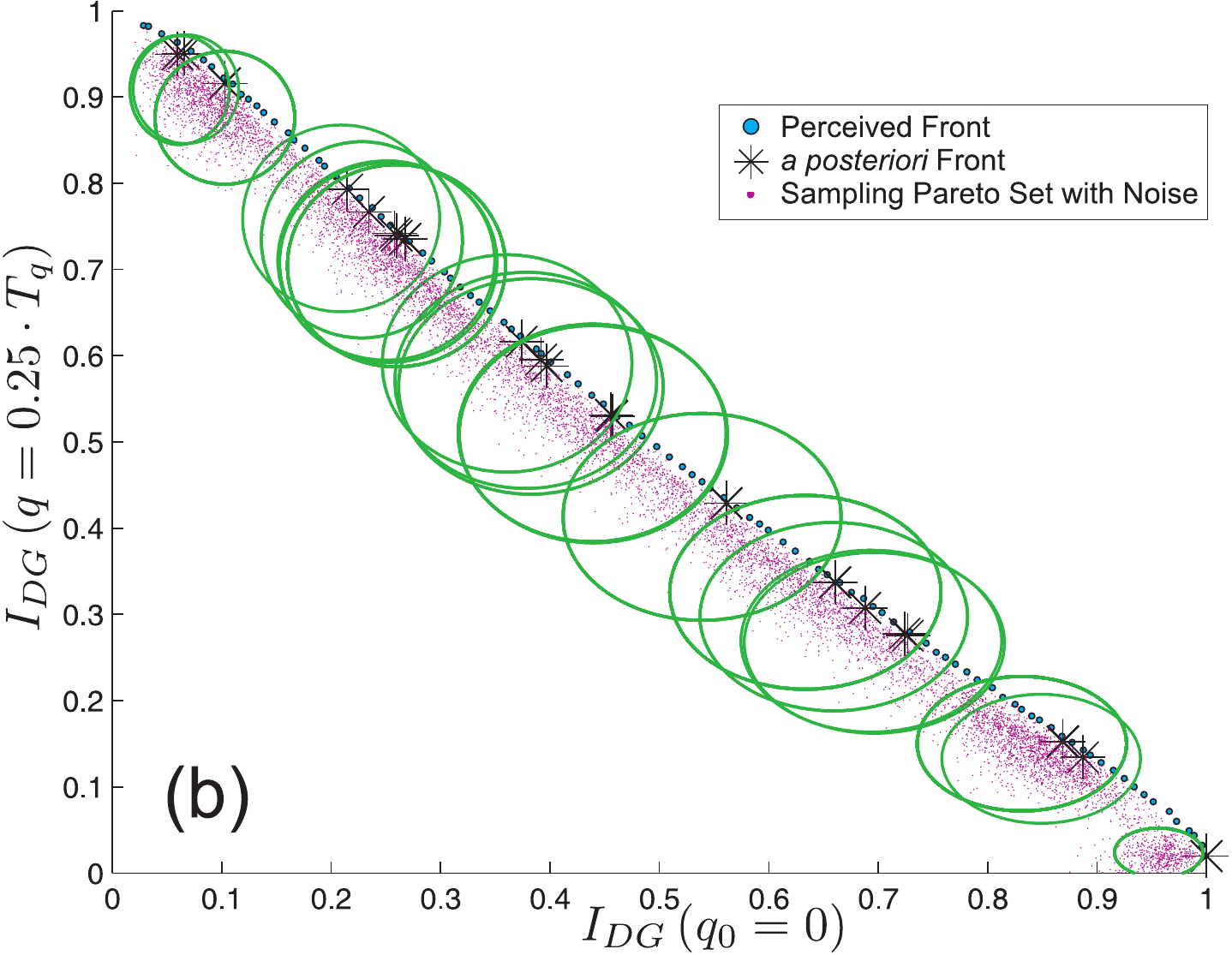}
\end{multicols}
\caption {Statistical sampling of the Pareto optimal sets attained by the
MO-CMA for two variants of the Diffraction Grating problem with $n=10$ phase
points. Equivalent to Fig.\ \ref{fig:clusteringeffect}, the Pareto sets are
reconstructed by means of noise-free evaluation, and compared to their
statistical sampling at the noise level of the evolutionary run, as well as to
the perceived Pareto fronts, given as output by the algorithm. As a reference,
the noise distributions are depicted according to the analytical results of
Eq.\ \ref{eq:DG_mean} (mean) and Eqs.\ \ref{eq:gratingvarconclusion} and
\ref{eq:gratingvarfinal} (variance; see Appendix
\ref{app:perceivedcalculations}). [LEFT, (a)] Maximization of $I_{DG}(q_0=0)$
versus $I_{DG}(q=0.1\cdot T_q)$ at $\epsilon_x^2=0.01$. [RIGHT, (b)]
Maximization of $I_{DG}(q_0=0)$ versus $I_{DG}(q=0.25\cdot T_q)$ at
$\epsilon_x^2=0.05$. \label{fig:ellipses_diffraction}}
\end{figure*}
\subsubsection{Numerical Results}
\noindent\textbf{Setup.} We consider here simulations on a specific case of the
Diffraction Grating problem (Eqs.\ \ref{eq:IntensityDG} and \ref{eq:I05Tq} set
up with $b=1,~h=4$), in search space dimensions of $n=\left\{10,~30,~80
\right\}$, and at noise levels given by Eq.\ \ref{eq:noisestrength}. We fix the
total number of function evaluations per search space dimensionality:
$num_{evals}=\left\{10^6,2\cdot 10^6,5\cdot 10^6 \right\}$ for
$n=\left\{10,30,80\right\}$, respectively. For the hypervolume calculations, a
reference point at $[0,0]$ is considered.

\noindent\textbf{Experimentation/Visualization.}
Next, we shall consider the performance of the three EMOA on the given Pareto problems,
considering the hypervolume indicator as the performance criterion.
Based on the analytical expressions of the Pareto front for this problem, given in Appendix \ref{app:ParetoProof},
the hypervolume of the true front is $\mho^{*}=0.47482$.
Table \ref{tab:diffractionhypervolume} presents the mean and standard-deviations of the hypervolume calculations over 30 runs
of the attained Pareto fronts for the various test-cases. The table contains the hypervolume values for the perceived fronts, as well as
for the noise-free \emph{a posteriori} fronts. Table \ref{tab:utest} provides the Mann-Whitney U-test calculations for the pairwise
algorithm comparisons corresponding to the test-cases of Table \ref{tab:diffractionhypervolume}.
\begin{table}
\begin{center}
\caption{Hypervolume Calculations: the Grating Diffraction Landscape ($b=1$, $h=4$; $q_0=0$, $q=0.5\cdot T_q$; $n=\left\{10,30,80 \right\}$).
Mean and standard-deviations over 30 runs, considering a reference point at $(0,0)$.Based on the analytical expressions of the Pareto front for this problem, given in Appendix \ref{app:ParetoProof}, the hypervolume of the true front is $\mho^{*}=0.47482$.\label{tab:diffractionhypervolume}}
\tiny
{\bf n=10}\\
\begin{tabular}{|c||c|c||c|c||c|c|}
\hline
Noise &\multicolumn{2}{c||}{MO-CMA-ES}&\multicolumn{2}{c||}{SMS-EMOA}&\multicolumn{2}{c|}{NSGA-II}\\
\cline{2-7}
Strength &perceived & \emph{a posteriori} & perceived & \emph{a posteriori} & perceived & \emph{a posteriori} \\
\hline \hline
 $\epsilon_S^2 = 0$ &  {\bf 0.47476}$\pm$0.0001 & 0.47476$\pm$0.0001 & 0.47443$\pm$0.0004 & 0.47443$\pm$0.0004 & 0.31258$\pm$0.0669 & 0.31258$\pm$0.0669 \\
\hline
$\epsilon_S^2 = 0.001$ & 0.{\bf 47420}$\pm$0.0001 & 0.47420 $\pm$ 0.0004 & 0.47339$\pm$0.0016 & 0.47219$\pm$0.0025 & 0.39245$\pm$0.0472 & 0.37274$\pm$0.0575 \\
\hline
$\epsilon_S^2 = 0.005$  & {\bf 0.47398}$\pm$0.0002 & 0.47158$\pm$0.0031 & 0.47128$\pm$0.0039 & 0.46473$\pm$0.0082 & 0.35833$\pm$0.0584 & 0.34559$\pm$0.0638 \\
\hline
 $\epsilon_S^2 = 0.01$ & {\bf 0.47362}$\pm$0.0007 & 0.46682$\pm$0.0069 & 0.46972$\pm$0.0042 & 0.45733$\pm$0.0116 & 0.39245$\pm$0.0472 & 0.37274$\pm$0.0575 \\
\hline
$\epsilon_S^2 = 0.02$ & {\bf 0.47316}$\pm$0.0006 & 0.46285$\pm$0.0091 & 0.47018$\pm$0.0024 & 0.45215$\pm$0.0104 & 0.40789$\pm$0.0357 & 0.37518$\pm$0.0504 \\
\hline
$\epsilon_S^2 = 0.05$ & {\bf 0.47168}$\pm$0.0008 & 0.44705$\pm$0.0158 & 0.46488$\pm$0.0049 & 0.42055$\pm$0.0259 & 0.42755$\pm$0.0317 & 0.38169$\pm$0.0433 \\
\hline
\end{tabular}
\\ {\bf n=30} \\
\begin{tabular}{|c||c|c||c|c||c|c|}
\hline
Noise &\multicolumn{2}{c||}{MO-CMA-ES}&\multicolumn{2}{c||}{SMS-EMOA}&\multicolumn{2}{c|}{NSGA-II}\\
\cline{2-7}
Strength &perceived & \emph{a posteriori} & perceived & \emph{a posteriori} & perceived & \emph{a posteriori} \\
\hline \hline
$\epsilon_S^2 = 0$ & 0.43685$\pm$0.0448 & 0.43685$\pm$0.0448 & {\bf 0.46864}$\pm$0.0034 & 0.46864$\pm$0.0034 & 0.24917$\pm$0.0291 & 0.24917$\pm$0.0291 \\
\hline
$\epsilon_S^2 = 0.001$ & 0.40817$\pm$0.0622 & 0.40395$\pm$0.0647 & {\bf 0.46080}$\pm$0.0123 & 0.45902$\pm$0.0134 & 0.30131$\pm$0.0347 & 0.28343$\pm$0.0372 \\
\hline
$\epsilon_S^2 = 0.005$ & 0.42474$\pm$0.0435 & 0.41412$\pm$0.0482 & {\bf 0.44955}$\pm$0.0117 & 0.44130$\pm$0.0152 & 0.28015$\pm$0.0428 & 0.26685$\pm$0.0435 \\
\hline
 $\epsilon_S^2 = 0.01$ & 0.40719$\pm$0.0529 & 0.38921$\pm$0.059 & {\bf 0.44051}$\pm$0.0128 & 0.42709$\pm$0.0177 & 0.30131$\pm$0.0347 & 0.28343$\pm$0.0372 \\
\hline
$\epsilon_S^2 = 0.02$ & 0.41905$\pm$0.0435 & 0.39627$\pm$0.0531 & {\bf 0.42513}$\pm$0.0207 & 0.39920$\pm$0.0316 & 0.32415$\pm$0.0362 & 0.29869$\pm$0.0377 \\
\hline
$\epsilon_S^2 = 0.05$ & 0.40997$\pm$0.0392 & 0.37579$\pm$0.0466 & {\bf 0.41614}$\pm$0.0139 & 0.37495$\pm$0.0240 & 0.35147$\pm$0.0252 & 0.31039$\pm$0.0335 \\
\hline
\end{tabular}
\\ {\bf n=80} \\
\begin{tabular}{|c||c|c||c|c||c|c|}
\hline
Noise &\multicolumn{2}{c||}{MO-CMA-ES}&\multicolumn{2}{c||}{SMS-EMOA}&\multicolumn{2}{c|}{NSGA-II}\\
\cline{2-7}
Strength &perceived & \emph{a posteriori} & perceived & \emph{a posteriori} & perceived & \emph{a posteriori} \\
\hline \hline
 $\epsilon_S^2 = 0$ & 0.35875$\pm$0.0515 & 0.35875$\pm$0.0515 & {\bf 0.45796}$\pm$0.0104 & 0.45796$\pm$0.0104 & 0.20782$\pm$0.0440 & 0.20782$\pm$0.0440 \\
\hline
$\epsilon_S^2 = 0.001$ & 0.27607$\pm$0.0451 & 0.27123$\pm$0.0452 & {\bf 0.44408}$\pm$0.0146 & 0.44289$\pm$0.0152 & 0.23867$\pm$0.0388 & 0.22717$\pm$0.0398 \\
\hline
$\epsilon_S^2 = 0.005$ & 0.26278$\pm$0.0458 & 0.25099$\pm$0.0460 & {\bf 0.42345}$\pm$0.0146 & 0.41800$\pm$0.0170 & 0.24902$\pm$0.0343 & 0.24310$\pm$0.0347 \\
\hline
 $\epsilon_S^2 = 0.01$ & 0.25462$\pm$0.0327 & 0.23735$\pm$0.0329 & {\bf 0.40138}$\pm$0.0198 & 0.39219$\pm$0.0222 & 0.23867$\pm$0.0388 & 0.22717$\pm$0.0398 \\
\hline
$\epsilon_S^2 = 0.02$ & 0.25478$\pm$0.0482 & 0.23163$\pm$0.0463 & {\bf 0.38329}$\pm$0.0245 & 0.36870$\pm$0.0292 & 0.25967$\pm$0.0350 & 0.24304$\pm$0.0378 \\
\hline
$\epsilon_S^2 = 0.05$ & 0.22611$\pm$0.0358 & 0.19538$\pm$0.0369 & {\bf 0.34807}$\pm$0.0250 & 0.32100$\pm$0.0308 & 0.28492$\pm$0.0333 & 0.25748$\pm$0.0386 \\
\hline
\end{tabular}
\end{center}
\normalsize
\end{table}
\begin{table}

\begin{center}
\caption{Mann-Whitney U-Test Calculations: the Grating Diffraction Problem ($b=1$, $h=4$; $q_0=0$, $q=0.5\cdot T_q$; $n=\left\{10,30,80 \right\}$).
A comparison is drawn from the numerical results of the 3 algorithms in the various test-cases,
considering a null hypothesis $H_0$ stating that there is no performance difference in terms of the attained hypervolume,
versus a hypothesis $H_1$ stating that two algorithms have significantly different performance.
Accordingly, a table symbol of $\pm$ indicates a rejection of the null hypothesis at the 5\% significance level,
whereas a symbol of $\approx$ indicates a failure to reject the null hypothesis at the 5\% significance level.
$+$ refers to a statistically significant outperformance of the left-side algorithm over the right-side algorithm, and $-$ indicates the reverse scenario.\label{tab:utest}}
\scriptsize
{\bf n=10}\\
\begin{tabular}{|c||c|c|c||c|c|c|}
\hline
Noise &\multicolumn{3}{c||}{perceived}&\multicolumn{3}{c|}{\emph{a posteriori}}\\
\cline{2-7}
Strength &CMA/SMS & CMA/NSGA-II & SMS/NSGA-II &  CMA/SMS & CMA/NSGA-II & SMS/NSGA-II\\
\hline \hline
 $\epsilon_S^2 = 0$ & $+$ & $+$ & $+$ & $+$ & $+$ & $+$ \\
\hline
$\epsilon_S^2 = 0.001$ & $\approx$ & $+$ & $+$ & $+$ & $+$ & $+$ \\
\hline
$\epsilon_S^2 = 0.005$ & $+$ & $+$ & $+$ & $+$ & $+$ & $+$ \\
\hline
 $\epsilon_S^2 = 0.01$ & $+$ & $+$ & $+$ & $+$ & $+$ & $+$ \\
\hline
$\epsilon_S^2 = 0.02$ & $+$ & $+$ & $+$ & $+$ & $+$ & $+$ \\
\hline
$\epsilon_S^2 = 0.05$ & $+$ & $+$ & $+$ & $+$ & $+$ & $+$ \\
\hline
\end{tabular}
\\{\bf n=30}\\
\begin{tabular}{|c||c|c|c||c|c|c|}
\hline
Noise &\multicolumn{3}{c||}{perceived}&\multicolumn{3}{c|}{\emph{a posteriori}}\\
\cline{2-7}
Strength &CMA/SMS & CMA/NSGA-II & SMS/NSGA-II &  CMA/SMS & CMA/NSGA-II & SMS/NSGA-II\\
\hline \hline
 $\epsilon_S^2 = 0$ & $\approx$ & $+$ & $+$ & $\approx$ & $+$ & $+$ \\
\hline
$\epsilon_S^2 = 0.001$ & $-$ & $+$ & $+$ & $-$ & $+$ & $+$ \\
\hline
$\epsilon_S^2 = 0.005$  & $\approx$ & $+$ & $+$ & $\approx$ & $+$ & $+$ \\
\hline
 $\epsilon_S^2 = 0.01$ & $-$ & $+$ & $+$ & $-$ & $+$ & $+$\\
\hline
$\epsilon_S^2 = 0.02$ & $\approx$ & $+$ & $+$ & $\approx$ & $+$ & $+$ \\
\hline
$\epsilon_S^2 = 0.05$ & $\approx$ & $+$ & $+$ & $\approx$ & $+$ & $+$ \\
\hline
\end{tabular}
\\{\bf n=80}\\
\begin{tabular}{|c||c|c|c||c|c|c|}
\hline
Noise &\multicolumn{3}{c||}{perceived}&\multicolumn{3}{c|}{\emph{a posteriori}}\\
\cline{2-7}
Strength &CMA/SMS & CMA/NSGA-II & SMS/NSGA-II &  CMA/SMS & CMA/NSGA-II & SMS/NSGA-II\\
\hline \hline
 $\epsilon_S^2 = 0$ & $-$ & $+$ & $+$ & $-$ & $+$ & $+$ \\
\hline
$\epsilon_S^2 = 0.001$ & $-$ & $+$ & $+$ & $-$ & $+$ & $+$ \\
\hline
$\epsilon_S^2 = 0.005$ & $-$ & $\approx$ & $+$ & $-$ & $\approx$ & $+$ \\
\hline
 $\epsilon_S^2 = 0.01$ & $-$ & $\approx$ & $+$ & $-$ & $\approx$ & $+$ \\
\hline
$\epsilon_S^2 = 0.02$ & $-$ & $\approx$ & $+$ & $-$ & $\approx$ & $+$ \\
\hline
$\epsilon_S^2 = 0.05$ & $-$ & $-$ & $+$ & $-$ & $-$ & $+$ \\
\hline
\end{tabular}
\end{center}
\normalsize
\end{table}

\subsubsection{Discussion}
Given the numerical results in Table \ref{tab:diffractionhypervolume} and the statistical tests in Table \ref{tab:utest},
we suggest the following observation: while the MO-CMA achieves superior hypervolume values on the 10-dimensional case,
there is no clear winner on the 30-dimensional case (see U-tests), and finally,
the SMS-EMOA is the winner on the 80-dimensional cases.
In the vast majority of the cases, the NSGA-II is outperformed by its competitors.

We speculate whether the poor performance of the MO-CMA in the high-dimensional cases in comparison to the SMS-EMOA
is due to an insufficient budget of function evaluations.
Upon granting the MO-CMA additional function evaluations for the high-dimensional cases this speculation
is indeed corroborated. We carried out $30$ independent runs for the noise-free test cases of $n=30$ and $n=80$,
with $10$ times the original budget of function evaluations, i.e., with $2\cdot 10^7$ and $5\cdot 10^7$ function evaluations, respectively.
For the $n=30$ test-case the MO-CMA obtained a mean hypervolume value of $0.47465$,
whereas for the $n=80$ test-case it obtained a mean hypervolume value of $0.46089$.
Fig.\ \ref{fig:30dres80dres_boxplots} depicts statistical box-plots describing the miscellaneous
runs granting the MO-CMA additional function evaluations for the high-dimensional noise-free cases,
presenting the attained hypervolume values at specific milestones along the runs.
As stated earlier, it is indeed shown that the MO-CMA is \emph{slower} than the SMS-EMOA for those problems,
but it is capable of eventually converging to a good approximate front, given sufficient function evaluations.

The empirically observed slow \emph{progress rate} may be attributed to the self-adaptation mechanism which is typically responsible
for the relatively long learning period of the CMA-ES when compared to other strategies, e.g., ES with fewer strategy parameters \cite{HansenDR2,HansenDR2PPSN08}.
Overall, it seems that employing the strong search-engine of the CMA does not pay off on the
Diffraction Grating problem upon consideration of the reduced convergence speed in comparison to the SMS-EMOA.

\begin{figure*}
\begin{multicols}{2}
\centering \includegraphics[scale=0.4]{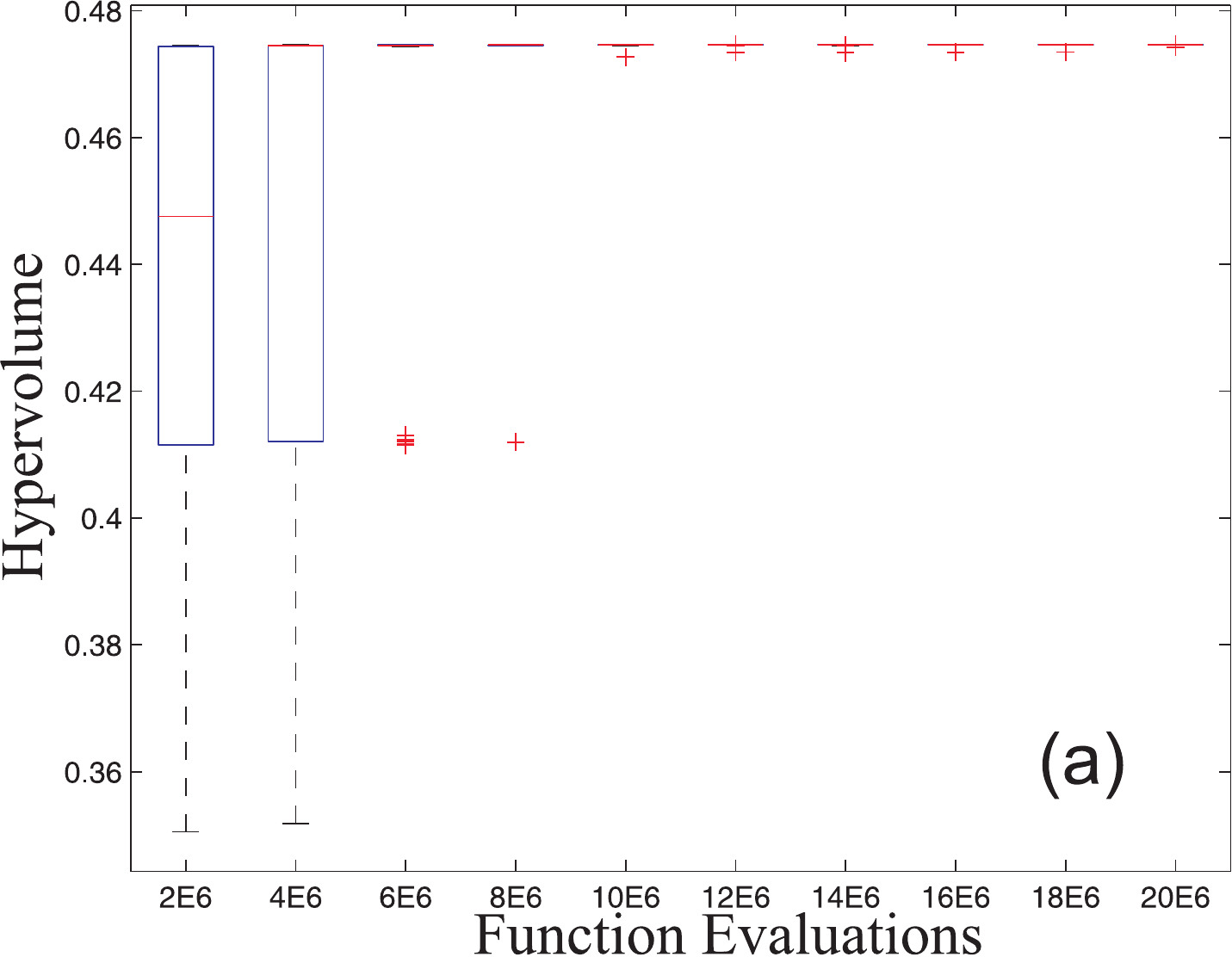}
\newpage
\centering \includegraphics[scale=0.4]{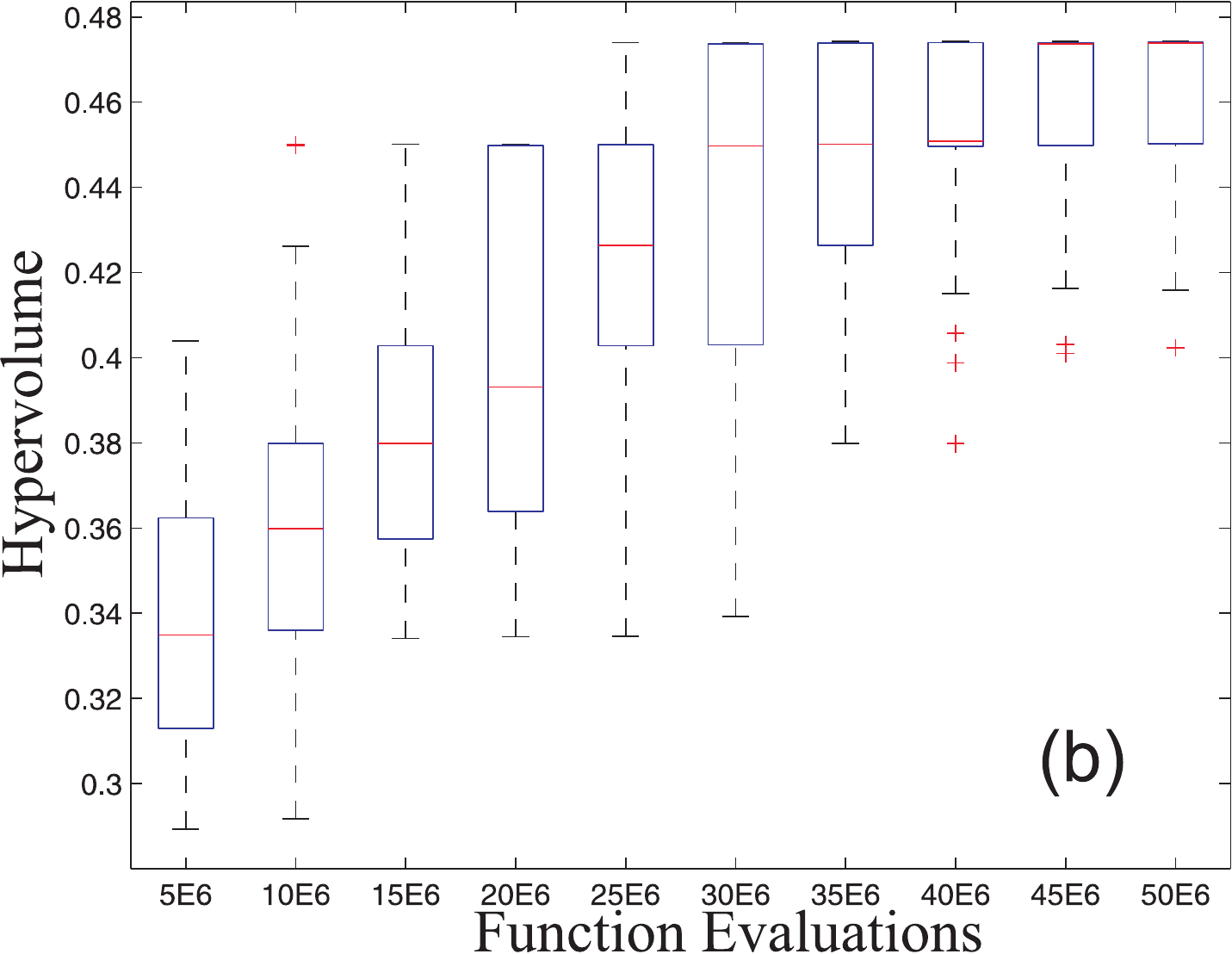}
\end{multicols}
\caption
{Granting the MO-CMA additional function evaluations for the noise-free Diffraction Grating problem.
Statistical box-plots of 30 independent runs, presenting the attained hypervolume values at specific milestones along
the run, with up to 10 times the original budget of function evaluations.
[LEFT, (a)] $n=30$, where the algorithm obtains the maximally attainable hypervolume in all runs, without exception,
after $2\cdot 10^7$ function evaluations;
[RIGHT, (b)] $n=80$, where the majority of the runs obtain the maximally attainable hypervolume after
$5\cdot 10^7$ function evaluations.\label{fig:30dres80dres_boxplots}}
\end{figure*}

\subsection{Molecular Alignment Simulations}
\label{sec:obs_align}
We consider the detailed effect of pixel noise on the quantum observables and the overall MO-CMA performance.

\noindent\textbf{Performance Assessment.}
In the context of molecular alignment (Eq.\ \ref{eq:alignment}), $f_1$ is of particular interest, and thus is considered as the primary objective.
The maximally attainable theoretical upper bound that can be supported by the utilized rotational states used here was found to be 0.9863
\cite{Shir-JPhysB}, but the best known single-objective $f_1$ yield within the current bandwidth discovered by an ES was
reported to be 0.962 \cite{Shir-JPhysB}, with a corresponding $f_2$ value of 0.154.
The nature of the conflict between $f_1$ to $f_2$ is generally unknown, and we shall use our noise-free runs
as a reference Pareto front for the runs on noisy systems.

\noindent\textbf{Setup.} Due to computational limitations, we set a limit of 10 runs per test-case.
Preliminary runs of MO-CMA, SMS-EMOA and NSGA-II on the noise-free simulation are carried out as an introductory comparison.
Furthermore, we will take into account systems with noisy controls subject to the noise strength values of
Eq.\ \ref{eq:noisestrength}. Each run is limited to 1000 iterations.

\begin{figure}
\centering \includegraphics[scale=0.5]{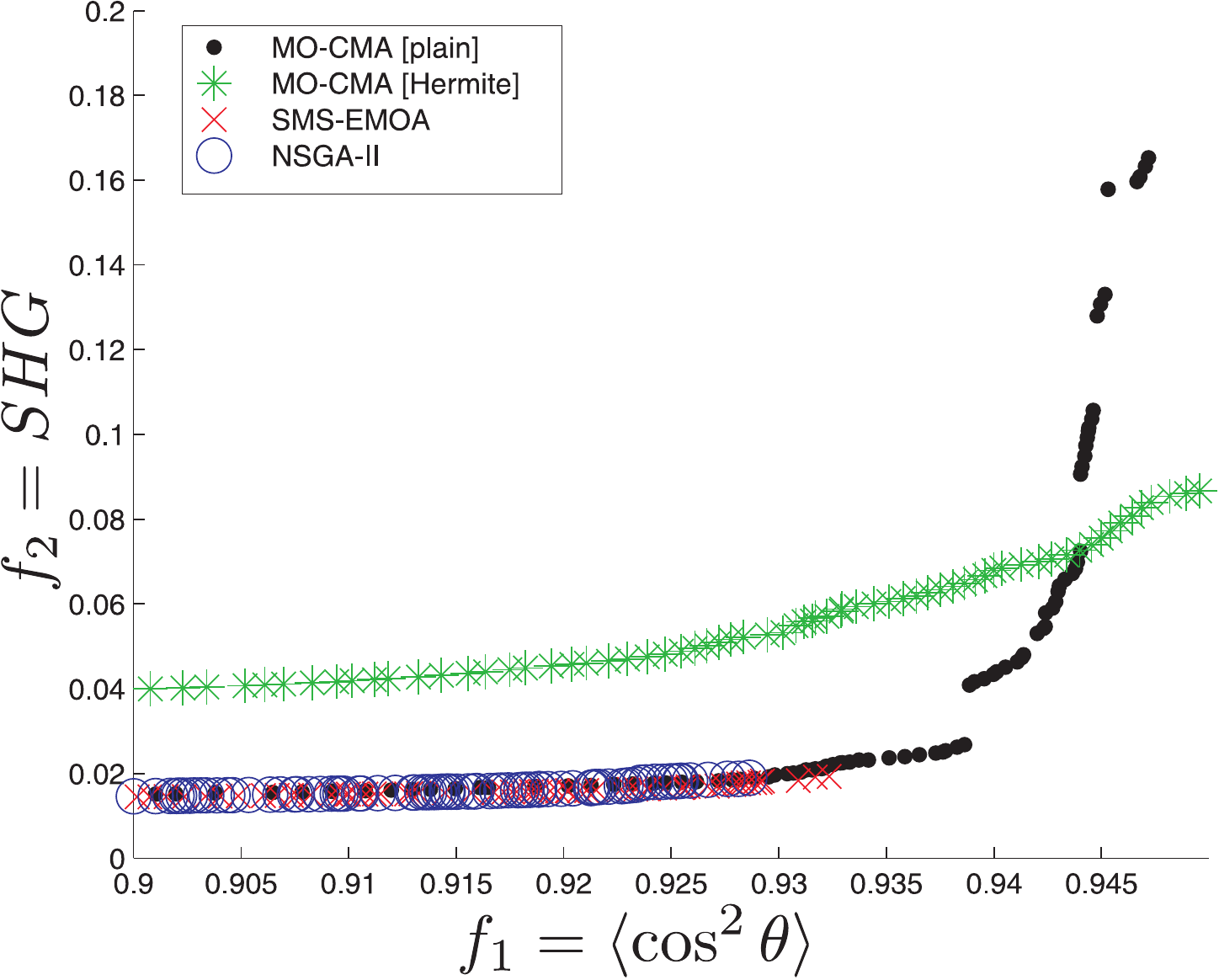}
\caption{Attained Pareto fronts on the noise-free molecular alignment simulation of 4 EMOA routines:
MO-CMA with 'plain' parametrization (decision variables are directly addressed as phase points),
MO-CMA with Hermite parametrization (decision variables correspond to coefficients of the first 40 Hermite polynomials,
spanning altogether the phase), SMS-EMOA, and NSGA-II.
Each front is reconstructed of 10 runs per routine.
\label{fig:plain-hermite}}
\end{figure}
\noindent\textbf{Preliminary: EMOA Noise-Free Comparison.} The noise-free runs
yielded disconnected local Pareto fronts, which offered limited coverage of the
objective space per run. This may suggest that the search space is broken into
separate regions, partitioned by barriers, possibly due to the inherent
constraints on the system, e.g., the bandwidth, the discretization, etc. We
reconstructed a single Pareto front from these runs, referred to here as the
\emph{best known Pareto front}. The shape of the attained front indicates that
the conflict is rather soft, as high $f_1$ values may be obtained while keeping
$f_2$ values extremely low. There seems to be no considerable pay-off in $f_1$
when unleashing $f_2$. Furthermore, from a practical perspective one may argue
that this conflict is irrelevant, as the observed $f_2$ values are sufficiently
low. It should also be noted that $f_1$ values of $\approx$0.96 could not be
attained in these runs; the best obtained value was $f_1^{*}=0.947$,
corresponding to $f_2^{*}=0.165$. This observation may be linked to previous
reports on the single-objective CMA-ES applied to this problem
\cite{Shir-JPhysB}, investigating its performance in maximizing $f_1$ subject
to various parametrizations. In particular, the so-called 'plain'
parametrization, where the decision variables correspond to the phase function
pixels, was observed to be inferior in comparison to specific polynomial-based
configurations, where the decision variables played the role of coefficients of
complete-basis functions. In \cite{Shir-JPhysB}, following an empirical
comparison, the Hermite polynomials were reported to perform best. Here, we
carried out additional calculations, employing the Hermite parametrization, in
order to assess the latter observation. The results, which are depicted in
Fig.\ \ref{fig:plain-hermite}, generalize the observation reported in
\cite{Shir-JPhysB} into the bi-criteria picture, confirming that the MO-CMA is
capable of attaining $f_1$ values of $\approx$0.96 when special configurations
are in use. Moreover, it confirms that the inherent advantage of the Hermite
parametrization in terms of $f_1$ values translates into a trade-off with
slightly higher $f_2$ values. Concerning the competing SMS-EMOA and NSGA-II
algorithms, it is clearly observed that they present inferior performance,
especially with respect to the coverage of $f_1$ values. In total, their
results are disappointing, but at the same time are in some consistency with
previous observations on a different variant of this problem (see, e.g.,
\cite{SHIR-AlignMO}).


\noindent\textbf{Observation: MO-CMA on Noisy Systems.} In what follows, we
consider the MO-CMA alone on the noisy alignment problem. When subject to
noise, the MO-CMA \emph{seems} to perform well, especially with its default
procedure, in obtaining fair Pareto fronts, in comparison to the noise-free
simulations. As in the noise-free case, the attained fronts were typically
broken, and we reconstructed them into a single front for their presentation.
In some cases, the perceived Pareto fronts of the noisy system dominated the
best known front, and the \emph{a posteriori} noise-free evaluation of the
archived phase functions introduced a local improvement to the best known
front. This is an example of a scenario in which fitness overvaluation has the
potential to enhance the search. However, the reproduction of the Pareto front
by evaluating the Pareto optimal set typically failed, suggesting that decision
space information was lost, as was observed on the model landscape. Fig.\
\ref{fig:alignmo_fronts}[a] depicts the attained front of the default MO-CMA
procedure in a noisy system of $\epsilon_S^2=0.01$. The plot contains the
reconstructed Pareto front of 10 runs, the best known front, the \emph{a
posteriori} noise-free evaluation of the Pareto optimal set, as well as the
noisy sampling of the Pareto optimal set. Close examination of the \emph{a
posteriori} sampled data and their grouping towards the perceived front reveals
interesting insight into the noise propagation through the two objective
functions (Fig.\ \ref{fig:alignmo_fronts}[b]). It is evident that noisy
sampling of a phase function corresponding to a point on the perceived front
results in an elliptic cloud of points, whose elitist outliers constitute the
points of the perceived front, as in the model landscapes (see, e.g., Fig.\
\ref{fig:clusteringeffect}). Also, it is clear that these clouds have a
dominant horizontal axis in the current scaling. This observation suggests that
the alignment observable ($f_1$) is sensitive to noise, unlike SHG ($f_2$),
which is hardly affected by it at the current noise level. Moreover, the shape
of these clouds seems to be dependent upon the two objective values through a
multiplicative relation: points with low $f_1$ values possess a longer
horizontal axis and a shorter vertical axis in comparison to points with higher
$f_1$ values.
\begin{figure}[t]
\centering \includegraphics[scale=0.6]{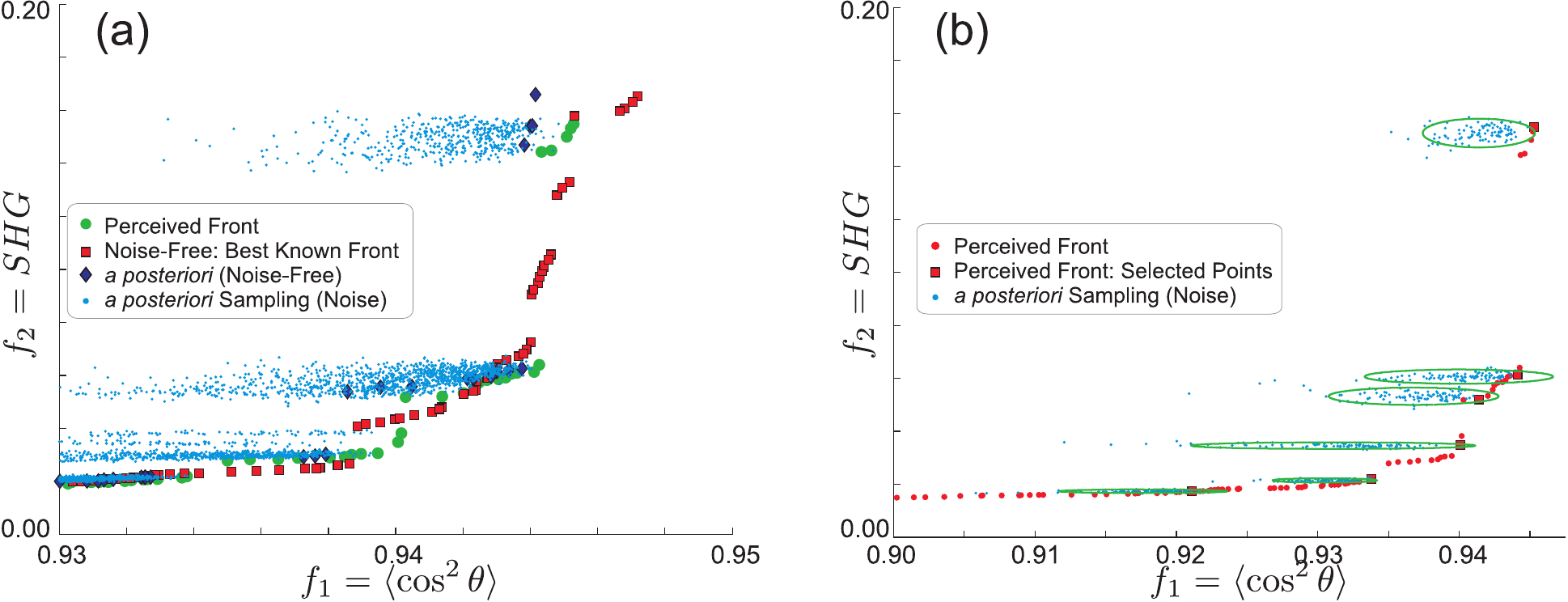} 
\caption{The attained reconstructed Pareto front of the default MO-CMA on the noisy
alignment simulation with $\epsilon_S^2=0.01$. [LEFT (a)] The
\emph{reconstructed} perceived front of 10 runs, the best known front, the
\emph{a posteriori} noise-free evaluation of the Pareto optimal set, and the
\emph{a posteriori} noisy sampling of the Pareto optimal set. [RIGHT (b)]
Statistical examination of the \emph{a posteriori} sampling of selected points
of the Pareto optimal set. Each sampling set comprises 100 evaluations at the
noise level of $\epsilon_S^2=0.01$. The ellipses represent the disturbance
distributions, centered about the mean with twice the standard deviations as
axes, based upon {\bf statistics} of the attained data. As in the model
landscapes (see, e.g., Fig.\ \ref{fig:clusteringeffect}), the perceived front
constitutes an elitist selection of these distributions. The reader should mind
the different horizontal scaling of the two panels.\label{fig:alignmo_fronts}}
\end{figure}

It should be noted that the simulations at higher noise levels obtained reasonable Pareto fronts in comparison
to the noise-free best known front, but their reproduction by means of evaluation with the attained Pareto optimal set failed,
as found on the Multi-Sphere model landscape.
The simulations also revealed that the two procedures with additional parental fitness re-evaluations
produced Pareto fronts of low quality, as they were typically dominated by the default MO-CMA procedure.
In some cases, however, it is evident that local \emph{a posteriori} Pareto fronts of the procedure with occasional parental re-evaluation
locally dominated the equivalent fronts of the default procedure.
Overall, there is no clear superior procedure in this test-case.

\subsection{Laboratory Experiment I: Molecular Ion Generation}
\label{sec:obs_lab} An experimental Pareto front for the \emph{molecular ion
generation} system is depicted in Fig.\ \ref{fig:ionmo_front}. The shape of the
front has been assessed with high confidence, based on numerous runs of the
single-objective $\left( \mu , \lambda \right)$-CMA-ES on the corresponding
tailored \emph{ratio} objective function, i.e.,
$\frac{\mathcal{J}_{Ion}}{SHG^{\alpha}}$. We therefore conclude that the MO-CMA
obtained a perceived Pareto front consistent with the repeated aforementioned
single-objective optimization results, but nevertheless, its reconstruction by
means of the attained Pareto set was not successful, as observed with both the
Multi-Sphere and the molecular alignment problems. It is evident in Fig.\
\ref{fig:ionmo_front} that while the perceived Pareto front dominates the
unshaped control reference front, the mean values of the \emph{a posteriori}
sampling of the Pareto set produces a dramatically worse front, which is
\emph{Pareto indifferent} to the reference front. In addition, the attractive
\emph{knee point} (roughly located around coordinate (0.425,0.2)) could not be
reconstructed, and its information was practically lost. Upon consideration of
the experimental data, the perceived point appears to be an experimental
outlier, which dominated a converging local Pareto front in that domain and
contributed to its loss. However, it is crucial to note that this specific
\emph{knee area} represents a real domain of solutions which has been
identified in repeated occasions, whose Pareto coverage is much needed.
Repeating runs by means of alternative strategies introduced an experimental
overhead, and therefore was not carried out. The second QCE system, Molecular
Plasma Generation, possesses higher experimental stability, granted by the
different experimental design. It has therefore been targeted as a platform for
testing the re-evaluation approach and thus to address the issues revealed with
the current experimental system. Moreover, it allowed for a comparison between
various strategies, as will be described in the following section.
\begin{figure}
\centering \includegraphics[scale=1.0]{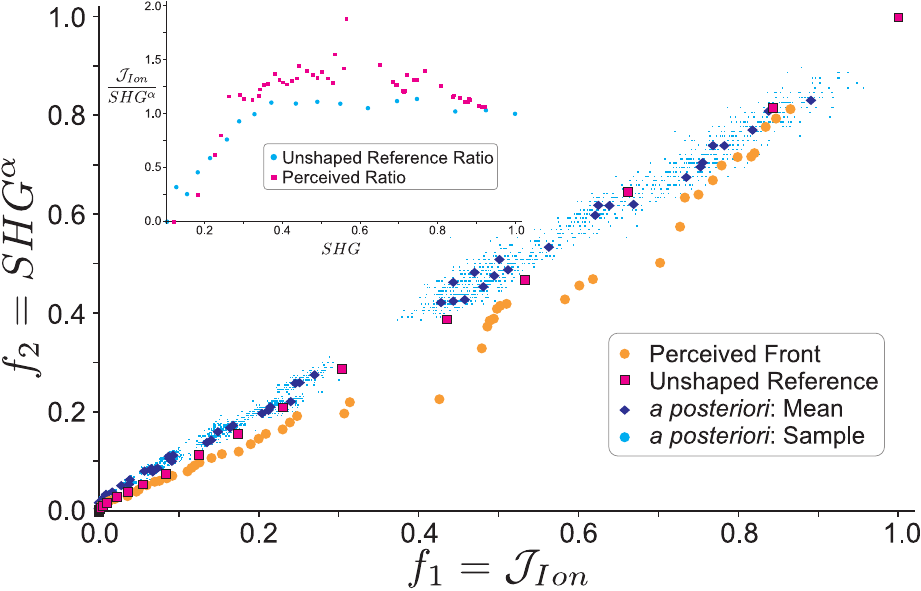}
\caption{Experimental Pareto front of the default MO-CMA on the total
ionization $\mathcal{J}_{Ion}$ versus SHG problem. The figure depicts the
perceived front of a single experiment, the reference front of the intensity
based non-shaped pulse, as well as a sampling of the Pareto set. Inset:
single-objective ratio picture. \label{fig:ionmo_front}}
\end{figure}

\subsection{Laboratory Experiment II: Molecular Plasma Generation}
\label{sec:obs_lab2} Taking advantage of the experimental stability of this
system, we carried out a Pareto optimization campaign by means of the EMOA
considered in the current study. In particular, we compared the experimental
performance of the MO-CMA (default and with occasional parental re-evaluation),
to the NSGA-II and the SMS-EMOA. The observation here is clear, as well as
consistent with the previous observations on the other systems: The default
MO-CMA produced highly-satisfying perceived fronts, but suffered from an
inability to reproduce them upon the termination of the runs. The NSGA-II, on
the other hand, performed poorly, and failed to obtain good approximations to
the Pareto front. The remaining strategies, MO-CMA with occasional
re-evaluation and the SMS-EMOA, both performed well -- the attained approximate
fronts were satisfying, and their post-reproduction was successful. Fig.\
\ref{fig:rodd_front} presents successful runs of both strategies, depicting the
perceived fronts, their reproduction, and the unshaped reference fronts
(measured upon scanning the amplitude of an unshaped pulse). Since the latter
represents a trivial reference to pulse shaping, and especially to any QC
optimization scheme, we argue that the QC optimization pay-off in the
multi-objective case may be assessed by the calculation of the hypervolume
ratio between the attained front to the unshaped reference front. Overall, the
MO-CMA with occasional parental re-evaluation performed best, introducing a
hypervolume improvement of 24.5\% with respect to the unshaped reference. The
SMS-EMOA, on the other hand, introduced an insignificant improvement of merely
3\%, due to bad coverage. The success of the occasional re-evaluation scheme
within the MO-CMA proved to be especially beneficial in this case, and thus
constitutes an experimental corroboration to the conclusions drawn on noisy
model landscapes (see Section \ref{sec:obs_spheres}). Fig.\ \ref{fig:hvol_rodd}
depicts the evolving hypervolume pay-off of the MO-CMA population -- presented
as the ratio between the raw MO-CMA hypervolume to the hypervolume of the
unshaped reference front -- corresponding to the run presented in Fig.\
\ref{fig:rodd_front}[(a)]. For the hypervolume calculations, a reference point
at $[0,1]$ is considered. The initial high values of the ratio around 0.9 are a
consequence of planting seed solutions in the initial population. It can be
clearly observed that the occasional re-evaluation (every 10 generations)
introduces corrections to fitness disturbances of the parental population that
translate into hypervolume declines. In particular, note the dramatic decline
following the re-evaluation of generation 30 -- had not this correction
occurred, the parental population would have been contaminated by extreme
outliers and the run would have been affected accordingly. At the same time,
the re-evaluation scheme does not hamper the general trend of hypervolume
increase, and thus offers an efficient solution to the previously reported
problem. We therefore conclude that this \emph{self-correcting} property of the
occasional re-evaluation scheme is essential for \emph{experimental} scenarios.
\begin{figure}[t]
\centering \includegraphics[scale=0.5]{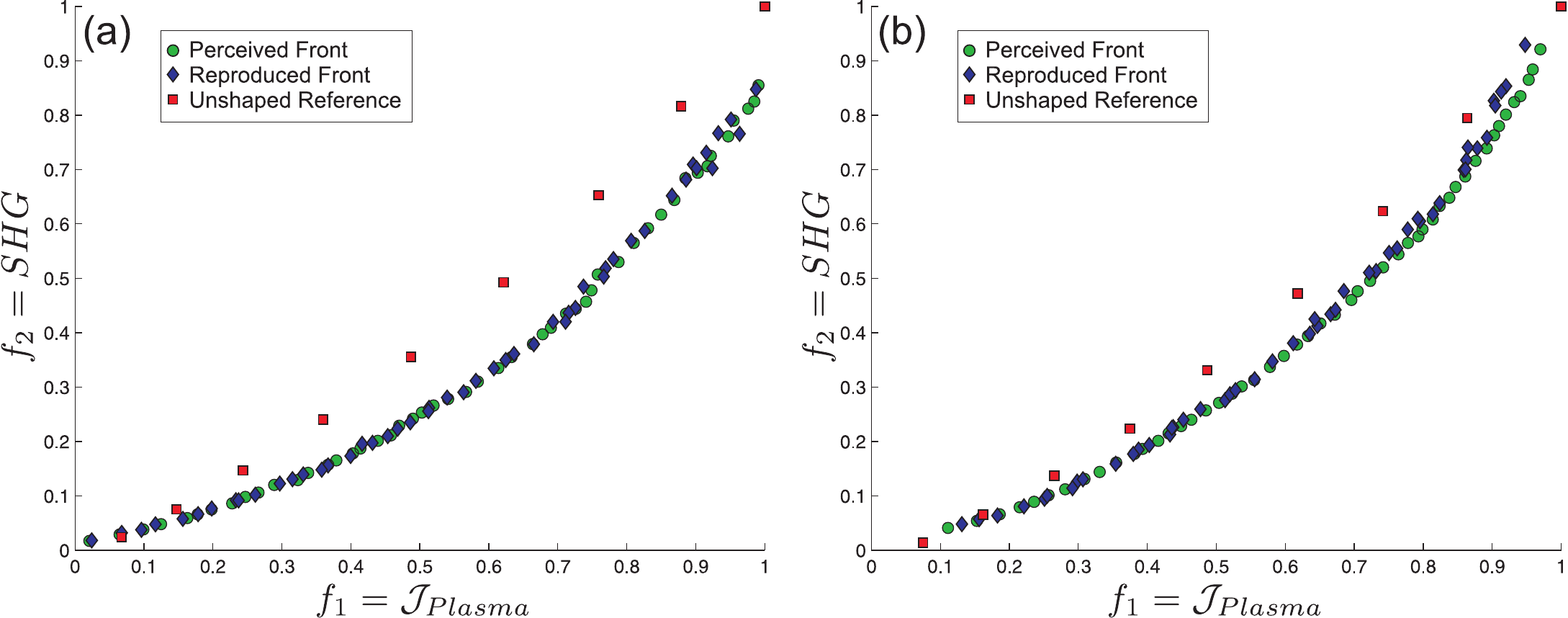}
\caption{Experimental Pareto fronts for the Molecular Plasma Generation problem
(maximizing free electron number $\mathcal{J}_{Plasma}$ versus minimizing
$SHG$), for the MO-CMA with occasional re-evaluation [LEFT, (a)] and for the
SMS-EMOA [RIGHT, (b)]. Each figure depicts the perceived front of a single
experiment, the reference front of the intensity based non-shaped pulse, as
well as the reproduction of the Pareto optimal set upon the termination of the
run. \label{fig:rodd_front}}
\end{figure}
\begin{figure}[t]
\centering \includegraphics[scale=0.5]{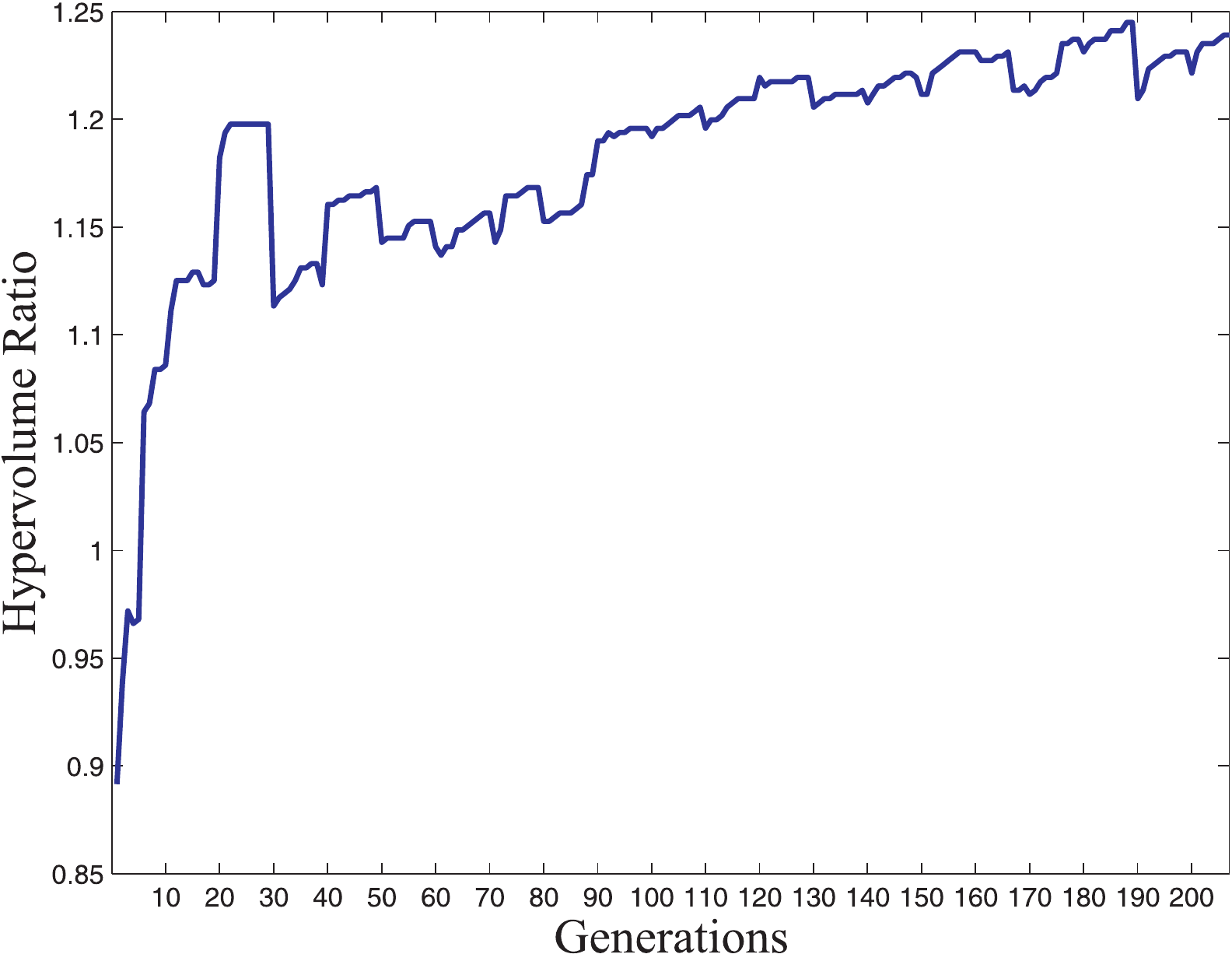} 
\caption{The evolving hypervolume pay-off of the parental population of the MO-CMA with
re-evaluation every 10 generations, with respect to the unshaped reference
front, corresponding to the run depicted in Fig.\ \ref{fig:rodd_front}[(a)].
The periodic re-evaluation corrects fitness disturbances within the parental
population, and causes the occasional hypervolume declines. It does not,
however, hamper the general trend of hypervolume increase in the course of the
entire run. For the hypervolume calculations, a reference point at $[0,1]$ was
considered.\label{fig:hvol_rodd}}
\end{figure}

\section{Summary}
\label{sec:discussion}
This paper introduced the topic of Multi-Observable Quantum Control and promoted its platform as a testbed for
evolutionary experimental multi-objective optimization. It discussed various practical issues concerning this experimental domain,
such as the sources of noise and uncertainty, and predominantly considered the MO-CMA as the optimization method.
Several frameworks were targeted for testing -- two noisy model landscapes, as well as multiple QC systems:
one simulated and two experimental.
Towards this end, we introduced here a family of test-functions,
originating from the \emph{optical} domain of Diffraction Grating problems,
which can provide model landscapes for Pareto optimization. Their attractiveness lies particularly within the simple,
yet full mathematical formulation as well as within the practical linkage to real-world experiments.
Overall, this effort constituted a broad study of the MO-CMA, subject to fitness disturbance of noisy decision parameters on simulated systems,
and its deployment in QC laboratory experiments.

While the MO-CMA excels in Pareto optimization of noise-free model landscapes, it has been observed in the current
study that there exists a considerable discrepancy between the perceived Pareto front, given as the output by the algorithm,
compared to the \emph{a posteriori} evaluation of its pre-images, on both model landscapes.
We proposed an explanation for this significant deviation,
stating that the MO-CMA optimally exploits the disturbance distribution and converges to the minimal number of search points
required to fully span the perceived front.
As we demonstrated on the Bi-Sphere case, occasional parental fitness re-evaluation improved the MO-CMA performance and thus constituted
a solution to the problem.

We set up a comparison between the MO-CMA and two conventional EMOA, namely NSGA-II and SMS-EMOA, on the Diffraction Grating test problem.
While the MO-CMA was the clear winner in low search space dimensions, it suffered from slow progress rates in higher-dimensions ($n=80$),
likely due to its self-adaptation mechanism, and required a significant increase in function evaluations in order to converge to the true Pareto front.
In those cases, the SMS-EMOA performed better, and provided a fair approximate front within the original budget of function evaluations.

The application of the MO-CMA to the simulated noisy QC alignment system was successful in terms of revealing the physics
conflict between the investigated objectives, and providing a reliable Pareto front considering the noise-free calculations.
The quality of the Pareto optimal set was questionable, since the perceived front could not be recovered to a satisfactory degree.
Concerning the reference algorithms, both SMS-EMOA and NSGA-II performed poorly in comparison to the MO-CMA, and failed to cover
an important area of the Pareto front.
The results here constitute an example of a scenario where there is clearly no best algorithm for a set of problems,
especially when practical experimental requirements, e.g. a fixed budget of function evaluations, are imposed on the search.
This observation can be considered as a practical interpretation to the so-called \emph{No Free Lunch} (NFL)
theorem (see, e.g., \cite{Wolpert97nofree}).

The laboratory experiments -- the practical climax of this work -- allowed us to examine the proposed algorithmic framework in real-world experimental scenarios.
We assessed the conflict between competing objectives for two experimental quantum systems, and provided interesting Pareto fronts which proved to be reliable with high confidence.
The first experimental case of \emph{molecular ion generation} considered only the default MO-CMA routine, due to instability and laboratory overhead.
The Pareto front in this case could not be recovered upon evaluation of the Pareto optimal set, consistent with the previous
observations of this work on model landscapes.
The second experimental case on the \emph{molecular plasma generation} system was extensively explored by means of various EMOA,
and the results led to important practical conclusions.
The MO-CMA with occasional parental re-evaluation performed best, obtained an excellent pay-off with respect to the standard unshaped reference, and the reproduction of its attained Pareto front was successful.
Examination of its evolving hypervolume revealed the self-correcting property of the re-evaluation scheme, which overall proved to be essential in this experimental scenario.
We therefore conclude that the MO-CMA with occasional re-evaluation, which introduces a basic yet effective extension to an existing EMOA,
constitutes a powerful and reliable routine for \emph{experimental} high-dimensional continuous Pareto optimization.

We would like to propose lines of future work. Given the conclusions drawn
here, the formulation of algorithmic solutions for the MO-CMA is needed. In
addition, sensitivity of auxiliary strategy parameters, including a parameter
that was introduced here (the parental re-evaluation interval) should be
investigated. In a different direction, future research may also incorporate
into multi-objective experimental optimization advanced features that have the
potential to capture various decision making preferences, such as
Pareto-compliant indicators \cite{zbt2007a}, or the enhancement of decision
space diversity \cite{SHIR_EMO2009,UlrichDiversity}.

\section*{Acknowledgments}
The authors acknowledge support from ARO, NSF, ONR, DHS and the Lockheed Martin Corporation.
\appendix
\section{The Diffraction Grating Problem: Analytical Expression of the 2-Dimensional Pareto Front}
\label{app:ParetoProof}
We consider here a specific instantiation of the Diffraction Grating problem, as formulated in Eq.\ \ref{eq:I05Tq} with $b=1,~h=4$.
Let $n\in\mathbb{N}$ and $n\ge 2$, we then define:
\begin{equation}
\begin{array}{l}
\medskip
 \displaystyle \jmath_1 = J_{DG}\left(0, \vec{\varphi}\right) = n+2\cdot \sum_{n>\ell > k\ge 0}\cos\left[\varphi_l-\varphi_k \right]\\
 \displaystyle \jmath_2 = J_{DG}\left(\frac{\pi}{4}, \vec{\varphi}\right) = 
n+2\cdot \sum_{n>\ell > k\ge 0}\cos\left[\pi \left(\ell-k\right)+\varphi_l-\varphi_k \right]
\end{array}
 \end{equation}
Let:
\begin{equation*}
D=\{(x,y)\in\mathbb{R}^2 ~ \backslash ~ \exists\vec\varphi\in[0~2\pi]^n: x=\jmath_1(\vec\varphi) \wedge y=\jmath_2(\vec\varphi)\} 
\end{equation*}

\begin{thm}
The Pareto front $PF(\vec{\jmath})$ of $\vec{\jmath}(\vec{\varphi})=(\jmath_1(\vec{\varphi}),\jmath_2(\vec{\varphi}))^T$ 
for $\vec\varphi\in[0~2\pi]^n$ is
\begin{equation}
PF(\vec{\jmath})=\{(\jmath_1,\jmath_2)^T\in[\delta~n^2]^2 ~ \backslash ~ \jmath_1+\jmath_2=n^2+\delta\}
\end{equation}
with 
\begin{equation*}
\displaystyle \delta = \left\{
\begin{array}{ll}
0 & \textrm{if}\: n=2\imath,~\imath \in\mathbb{N}\\
1 & \textrm{otherwise}
\end{array}
\right.\textrm{}
\end{equation*}
\end{thm}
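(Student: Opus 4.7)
The plan is first to recognize that $\jmath_1$ and $\jmath_2$ are squared magnitudes of two linear combinations of the unit complex numbers $z_k=e^{i\varphi_k}$, and then reduce the theorem to a parallelogram-identity estimate. Expanding $\bigl|\sum_k z_k\bigr|^2$ produces $\jmath_1$ directly, while the identity $\cos(\pi(\ell-k)+\theta)=(-1)^{\ell-k}\cos\theta$, combined with $(-1)^{\ell-k}=(-1)^{\ell+k}$, shows $\jmath_2=\bigl|\sum_{k=0}^{n-1}(-1)^k z_k\bigr|^2$. Writing $A=\sum_k z_k$ and $B=\sum_k(-1)^k z_k$, and denoting by $S_e$ and $S_o$ the sums over even- and odd-indexed $z_k$ respectively, we obtain $A+B=2S_e$ and $A-B=2S_o$.

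Next I would derive the upper bound on $\jmath_1+\jmath_2$ from the parallelogram identity
\begin{equation*}
\jmath_1+\jmath_2=|A|^2+|B|^2=\tfrac12\bigl(|A+B|^2+|A-B|^2\bigr)=2|S_e|^2+2|S_o|^2.
\end{equation*}
Since each summand in $S_e$ and $S_o$ has unit modulus, the triangle inequality yields $|S_e|\le\lceil n/2\rceil$ and $|S_o|\le\lfloor n/2\rfloor$. A short parity case analysis then gives $\jmath_1+\jmath_2\le n^2$ for even $n$ and $\jmath_1+\jmath_2\le n^2+1$ for odd $n$, which is exactly $n^2+\delta$. The componentwise range $\jmath_i\in[\delta,n^2]$ stated in the theorem then follows from the trivial bound $0\le\jmath_i\le n^2$ combined with this sum constraint.

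To establish attainability of the entire segment, I would exhibit an explicit one-parameter family: fix every even-indexed phase at $\alpha$ and every odd-indexed phase at $\beta$. This choice simultaneously saturates $|S_e|$ and $|S_o|$, so that $\jmath_1+\jmath_2=n^2+\delta$ holds automatically, and $A,B$ reduce to
\begin{equation*}
A=\lceil n/2\rceil\,e^{i\alpha}+\lfloor n/2\rfloor\,e^{i\beta},\qquad B=\lceil n/2\rceil\,e^{i\alpha}-\lfloor n/2\rfloor\,e^{i\beta}.
\end{equation*}
The squared modulus $|A|^2$ depends on the phases only through $\cos(\alpha-\beta)$, and a direct calculation shows it sweeps continuously from $\delta$ to $n^2$ as $\alpha-\beta$ ranges over $[0,2\pi]$, with $|B|^2=n^2+\delta-|A|^2$ tracing the complementary endpoint.

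Finally, Pareto optimality of each such point is immediate from the sum bound: any strict dominator would not decrease either coordinate and would strictly increase at least one, hence strictly increase $\jmath_1+\jmath_2$, contradicting $\jmath_1+\jmath_2\le n^2+\delta$. The main obstacle I anticipate is the parity bookkeeping in the odd case, where the saturating configuration requires $|S_e|=\lceil n/2\rceil$ and $|S_o|=\lfloor n/2\rfloor$ achieved simultaneously; one must verify that the single free parameter $\alpha-\beta$ is enough to cover every $\jmath_1\in[\delta,n^2]$ without gaps. The explicit one-parameter family above is precisely what makes this continuous sweep manifest and ties the upper bound and attainability together.
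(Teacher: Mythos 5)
Your proposal is correct, and it reaches the same destination as the paper's proof by the same overall skeleton --- rewrite $\jmath_1,\jmath_2$ as $|A|^2,|B|^2$ with $A=\sum_k e^{i\varphi_k}$, $B=\sum_k(-1)^k e^{i\varphi_k}$, split by index parity, and saturate the bound with a two-phase configuration (all even-indexed phases equal, all odd-indexed phases equal) swept by the single parameter $\varphi_1-\varphi_0$. Where you genuinely diverge is in how the key inequality $\jmath_1+\jmath_2\le n^2+\delta$ is obtained: the paper reindexes the double sum $\sum_{\ell>k}$ into same-parity and cross-parity pairs by hand (a full page of summation bookkeeping, carried out only for even $n$ with the odd case waved off as ``similar''), whereas your parallelogram identity $|A|^2+|B|^2=2|S_e|^2+2|S_o|^2$ gets the cancellation of the cross-parity terms in one line and, combined with $|S_e|\le\lceil n/2\rceil$, $|S_o|\le\lfloor n/2\rfloor$, yields both parities of $\delta$ uniformly. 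Your route also makes the equality condition transparent ($S_e$ and $S_o$ each fully aligned), which is exactly the paper's conditions (20)--(21) but derived rather than asserted, and it cleanly covers the odd-$n$ endpoint $\jmath_1=\delta=1$ that the paper never exhibits. The only step left implicit in both your argument and the paper's is that points with $\jmath_1+\jmath_2<n^2+\delta$ are in fact dominated by a point of the segment (so that the front is exactly the segment and not larger); this is immediate from the attainability of the whole segment, but is worth one sentence.
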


\begin{proof}
Let us consider $n$ even (i.e., $\delta=0$), the proof for $n$ odd is similar. 
The proof is carried out in two steps:
\begin{itemize}
\item We prove that $D\subset F=\{(x,y)\in[0~n^2]^2 ~ \backslash ~ x+y\le n^2\}$
\item We prove that $\forall (x,y)\in F$ such that 
\begin{equation*}
x+y=n^2 \exists \vec\varphi\in[0~2\pi]^n 
\end{equation*}
with $x=\jmath_1(\vec\varphi)$ and $y=\jmath_2(\vec\varphi)$.
\end{itemize}
First, notice that:
\begin{equation*}
\jmath_1(\vec{\varphi})=\norm{\sum_{k=0}^{n-1}{e^{i\varphi_k}}}^2,~~~
\jmath_2(\vec{\varphi})=\norm{\sum_{k=0}^{n-1}{(-1)^k e^{i\varphi_k}}}^2
\end{equation*}
Hence, $\forall \vec{\varphi}\in[0~2\pi]^n$ $\jmath_1\ge 0$ and $\jmath_2\ge 0$.\\
We start by rewriting the functions $\jmath_1$ and $\jmath_2$ in order to eliminate the $\pi$ factor in the cosine arguments of $\jmath_2$:
\small
 \begin{eqnarray*}
 \jmath_1(\varphi_0,...,\varphi_{n-1})&=&n+2\cdot \sum_{k =0}^{n-2}{\sum_{l = k+1}^{n-1}{\cos\left[\varphi_l-\varphi_k \right]}}\\
 \jmath_2(\varphi_0,...,\varphi_{n-1})&=&n+2\cdot \sum_{k =0}^{n-2}{\sum_{l = k+1}^{n-1}{\cos\left[\pi(l-k)+\varphi_l-\varphi_k \right]}}
 \end{eqnarray*}
 \begin{eqnarray*}
 \jmath_1(\varphi_0,...,\varphi_{n-1})&=&n+2\cdot \sum_{k =0}^{n-2}{\sum_{l = 1}^{n-1-k}{\cos\left[\varphi_{k+l}-\varphi_k \right]}}\\
 \jmath_2(\varphi_0,...,\varphi_{n-1})&=&n+2\cdot \sum_{k =0}^{n-2}{\sum_{l = 1}^{n-1-k}{\cos\left[\pi l+ \varphi_{k+l}-\varphi_k \right]}}
 \end{eqnarray*}
\normalsize
Since $n$ is even and greater than $2$, $\exists m\in\mathbb{N}$ $n=2(m+1)$.
\scriptsize
\begin{equation}
\begin{array}{l}
 \displaystyle \frac{1}{2}\jmath_1(\varphi_0,\ldots,\varphi_{n-1})=(m+1)
+\sum_{p =0}^{m}\sum_{l = 1}^{2m+1-2p}{\cos\left[\varphi_{2p+l}-\varphi_{2p} \right]}
+\sum_{p =0}^{m-1}\sum_{l = 1}^{2m-2p}{\cos\left[\varphi_{2p+1+l}-\varphi_{2p+1} \right]}\\
 \displaystyle \frac{1}{2}\jmath_2(\varphi_0,\ldots,\varphi_{n-1})=(m+1)
+\sum_{p =0}^{m}\sum_{l = 1}^{2m+1-2p}{\cos\left[l\pi+\varphi_{2p+l}-\varphi_{2p} \right]}
+\sum_{p =0}^{m-1}\sum_{l = 1}^{2m-2p}{\cos\left[l\pi+\varphi_{2p+1+l}-\varphi_{2p+1} \right]}
 \end{array}
\end{equation}
\footnotesize
\begin{equation}
\begin{array}{l}
 \displaystyle \frac{1}{2}\jmath_1(\varphi_0,...,\varphi_{n-1})=(m+1)
\sum_{p =0}^{m}\sum_{q = 0}^{m-p}{\cos\left[\varphi_{2p+2q+1}-\varphi_{2p} \right]}
+\sum_{p =0}^{m-1}\sum_{q = 1}^{m-p}{\cos\left[\varphi_{2p+2q}-\varphi_{2p} \right]}\\
\displaystyle ~~~~~+\sum_{p =0}^{m-1}\sum_{q = 0}^{m-p-1}{\cos\left[\varphi_{2p+2+2q}-\varphi_{2p+1} \right]}
+\sum_{p =0}^{m-1}\sum_{q = 1}^{m-p}{\cos\left[\varphi_{2p+1+2q}-\varphi_{2p+1} \right]}\\
\displaystyle \frac{1}{2}\jmath_2(\varphi_0,...,\varphi_{n-1})=(m+1)
-\sum_{p =0}^{m}\sum_{q = 0}^{m-p}{\cos\left[\varphi_{2p+2q+1}-\varphi_{2p} \right]}
+\sum_{p =0}^{m-1}\sum_{q = 1}^{m-p}{\cos\left[\varphi_{2p+2q}-\varphi_{2p} \right]}\\
\displaystyle ~~~~~ -\sum_{p =0}^{m}\sum_{q = 0}^{m-p-1}{\cos\left[\varphi_{2p+2+2q}-\varphi_{2p+1} \right]}
+\sum_{p =0}^{m-1}\sum_{q = 1}^{m-p}{\cos\left[\varphi_{2p+1+2q}-\varphi_{2p+1} \right]}
 \end{array}
\end{equation}
\normalsize
Upon considering all the cosines having values of $\pm 1$, we may write:
$$
D\subset \left[0~n^2\right]^2
$$
Moreover, we have:
\footnotesize
\begin{equation}
 \displaystyle \frac{1}{2}(\jmath_1+\jmath_2)=2(m+1)+2\sum_{p =0}^{m-1}\sum_{q = 1}^{m-p}{\cos\left[\varphi_{2p+2q}-\varphi_{2p} \right]}
+2\sum_{p =0}^{m-1}\sum_{q = 1}^{m-p}{\cos\left[\varphi_{2p+1+2q}-\varphi_{2p+1} \right]}
\end{equation}
\normalsize
which leads to:
\begin{equation}
\jmath_1+\jmath_2\le n^2
\end{equation}

Hence,
 \begin{equation}
 \displaystyle  D\subset F=\left\{\left(x,y\right)\in\left[0~n^2\right]^2 ~ \backslash ~ x+y\le n^2\right\}
 \end{equation}

 In what follows, we shall show that this upper bound is indeed reached:\\
 Given $L=\frac{1}{2}(\jmath_1+\jmath_2)$, it reaches its global maximum if and only if, $\forall p\in[0~m]$ and $l$ such that $2p+2l\le n$ and $2p+1+2l\le n-1$ $\exists k_{ij}, k'_{ij} \in \mathbb{Z}$ such that:
 \begin{eqnarray}
\label{cdt1neven}
\varphi_{2p+2l}&=&\varphi_{2p}+2k_{lp}\pi\\
\label{cdt2neven}
\varphi_{2p+1+2l}&=&\varphi_{2p+1}+2k'_{lp}\pi
\end{eqnarray}
Let us consider $\vec{\varphi}$ satisfying Eq.\ \ref{cdt1neven} and Eq.\ \ref{cdt2neven}:
\begin{eqnarray*}
\jmath_1(\varphi_0,...,\varphi_{n-1})&=&\frac{1}{2}n^2(1+\cos(\varphi_1-\varphi_0))\\
\jmath_2(\varphi_0,...,\varphi_{n-1})&=&\frac{1}{2}n^2(1-\cos(\varphi_1-\varphi_0)),
 \end{eqnarray*}
where $\varphi_1-\varphi_0$ takes any value in $[0~2\pi]$. 
Since $\theta\in[0~2\pi]\rightarrow \cos(\theta)\in[-1~1]$ is a surjective function, we can conclude that for all $(x,y)\in[0~n^2]^2$ such that $x+y=n^2 ~ \exists \vec\varphi\in[0~2\pi]^n$ such that $x=\jmath_1(\vec\varphi)$ and $y=\jmath_2(\vec\varphi)$.
This concludes the proof.
\end{proof}

\section{Diffraction Grating: Noise Propagation}
\label{app:perceivedcalculations}
We provide here explicit calculations of the mean and variance for the perceived objective function of the Diffraction Grating model landscape,
described in Section \ref{sec:physics}.
\subsection{Diffraction Grating: Mean}
Consider the intensity function, $I_{DG}$, presented in Eq.\ \ref{eq:IntensityDG}, which may be written as
\begin{equation}
\label{eq:JDG_0}
\begin{array}{l}
 \medskip
\displaystyle I_{DG}\left(\zeta, \vec{\varphi} \right) = \frac{1}{n^2}\textrm{sinc}^2\left(\frac{\zeta b}{2} \right)
\cdot J_{DG}\left(\zeta, \vec{\varphi} \right) \\
\displaystyle J_{DG}\left(\zeta, \vec{\varphi} \right) = n+ 2\cdot \sum_{\ell > k}\cos\left[\zeta h \left(\ell-k\right)+\Delta\varphi_{\ell k} \right]
\end{array}
\end{equation}
where the compact double-sum notation is used for convenience.
Given a disturbed phase vector, $\tilde{\vec{\varphi}}$, following Eq.\ \ref{eq:varphinoise},
\begin{equation}
\tilde{\vec{\varphi}} = \left( \varphi_0 + \delta\varphi_0, \varphi_1 + \delta\varphi_1, \ldots, \varphi_{n-1} + \delta\varphi_{n-1} \right)^T
\end{equation}
it thus suffices to investigate the propagation of the noise through $J_{DG}$ only:
\begin{equation}
\label{eq:JDG_1}
\displaystyle \tilde{J}_{DG}\left(\zeta, \tilde{\vec{\varphi}} \right) = n+ 2\cdot \sum_{\ell > k}\cos\left[\zeta h \left(\ell-k\right)+\Delta\tilde{\varphi}_{\ell k} \right]
\end{equation}
Note that
\begin{equation}
\begin{array}{l}
 \medskip
\displaystyle \delta\varphi_{\ell} \sim \mathcal{N}\left(0,\epsilon^2 \right)\\
\displaystyle \delta\varphi_{\ell k} \equiv  \delta\varphi_{\ell} - \delta\varphi_{k} \sim \mathcal{N}\left(0,2\epsilon^2 \right)
\end{array}
\end{equation}
Given the probability density function of the normal distribution, denoted as $\Phi(z,\mu,\sigma^2)$, the expectation values
of the cosine and sine functions considering a distribution with zero mean read:
\begin{equation}
\label{eq:normalintegrals1}
 \begin{array}{l}
 \medskip
\displaystyle \int_{-\infty}^{\infty} \Phi(z,0,\sigma^2) \cos(z) dz = \exp \left(-\frac{\sigma^2}{2} \right)\\
 \displaystyle \int_{-\infty}^{\infty} \Phi(z,0,\sigma^2) \sin(z) dz = 0
\end{array}
\end{equation}
Eq.\ \ref{eq:JDG_1} can now be rewritten as:
\begin{equation}
\begin{array}{l}
 \medskip
 \displaystyle \tilde{J}_{DG} = n+ 2\cdot \sum_{\ell > k}\cos\left[\zeta h
\left(\ell-k\right)+\Delta\varphi_{\ell k} + \delta\varphi_{\ell k} \right]=\\
\displaystyle = n+ 2\cdot \sum_{\ell > k}\cos\left(a_{\ell k}\right)\cos\left(\delta\varphi_{\ell k}\right) -
\sin\left(a_{\ell k}\right)\sin\left(\delta\varphi_{\ell k}\right)
\end{array}
\end{equation}
where $a_{\ell k}\equiv \zeta h \left(\ell-k\right) + \Delta\varphi_{\ell k}$.
Upon calculating the expectation values, using Eq.\ \ref{eq:normalintegrals1}, one may write:
\begin{equation}
\label{eq:JDG_2}
\begin{array}{l}
\displaystyle \left< \tilde{J}_{DG} \right>= n+ 2\cdot \sum_{\ell > k}\cos\left(a_{\ell k}\right)\left<\cos\left(\delta\varphi_{\ell k}\right)\right>-
2\cdot \sum_{\ell > k}\sin\left(a_{\ell k}\right)\left<\sin\left(\delta\varphi_{\ell k}\right)\right>=\\
\displaystyle = n+ 2\cdot \sum_{\ell > k}\cos\left(a_{\ell k}\right)\cdot \exp\left(-\epsilon^2\right) 
= n+ 2\cdot\exp\left(-\epsilon^2\right)\cdot \sum_{\ell > k}\cos\left(a_{\ell k}\right),
\end{array}
\end{equation}
concluding with
\begin{equation}
\label{JDGmean}
\boxed{
\displaystyle \left< \tilde{J}_{DG} \right> = n\cdot \left(1- \exp\left(-\epsilon^2\right) \right) + \exp\left(-\epsilon^2\right)\cdot J_{DG}}
\end{equation}
The transition to $\left< \tilde{I}_{DG} \right>$ is trivial with Eq.\ \ref{eq:JDG_0}, yielding the result of Eq.\ \ref{eq:DG_mean}.

\subsection{Diffraction Grating: Variance}
\begin{equation*}
\textrm{VAR}\left[\tilde J_{DG}\right]=\left<{\tilde J_{DG}}^2\right>-\left<\tilde J_{DG}\right>^2
\end{equation*}
From Eq.\ \ref{JDGmean}, $\left<\tilde J_{DG}\right>^2$ is trivial. We now have to compute $\left<{\tilde J_{DG}}^2\right>$. 
In order to do so, let us first compute the mean of this easier term:
\small
\begin{equation*}
\begin{array}{l}
\left(\frac{\tilde J_{DG}-n}{2}\right)^2=\\
\sum_{l_1>k_1}\sum_{l_2>k_2}\left(\cos(a_{l_1k_1})\cos(\delta\varphi_{l_1k_1})-\sin(a_{l_1k_1})\sin(\delta\varphi_{l_1k_1})\right)\\
\left(\cos(a_{l_2k_2})\cos(\delta\varphi_{l_2k_2})-\sin(a_{l_2k_2})\sin\left(\delta\varphi_{l_2k_2}\right)\right)
\end{array}
\end{equation*}
\normalsize
Let (for $j=1,2$)
\small
\begin{equation*}
c_j^a=\cos(a_{l_jk_j})~, s_j^a=\sin(a_{l_jk_j})~,
c_j^d=\cos(\delta\varphi_{l_jk_j}) ~, c_j^d=\cos(\delta\varphi_{l_jk_j})
\end{equation*}
$$
\Gamma_{l_1l_2k_1k_2}=c_1^ac_2^ac_1^dc_2^d+s_1^as_2^as_1^ds_2^d-2c_1^as_2^ac_1^ds_2^d
$$
\normalsize
\begin{equation}
\label{eq:JDGsum}
\displaystyle\left(\frac{\tilde J_{DG}-n}{2}\right)^2=\sum_{l_1>k_1}\sum_{l_2>k_2} \Gamma_{l_1l_2k_1k_2}
\end{equation}

We divide the set $LK=\{(l_1,k_1,l_2,k_2)\in \left[0\ldots n-1\right]^4 / l_1>k_1 \wedge l_2>k_2\}$, to which belong $(l_1,k_1,l_2,k_2)$, 
into the six following subsets which form a partition:
$$
LK=LK_{indpt}\cup LK_{lklk} \cup LK_{l.l.}  \cup LK_{l..l} \cup LK_{.k.k} \cup LK_{.kk.}
$$
Consequently, the sum in Eq.\ \ref{eq:JDGsum} may be divided into six sums, and we note:
\small
\begin{equation*}
\begin{array}{l}
\left(\frac{\tilde J_{DG}-n}{2}\right)^2 = 
\sum_{LK_{indpt}}\Gamma_{l_1l_2k_1k_2}+\sum_{LK_{lklk}}\Gamma_{l_1l_2k_1k_2}+\sum_{LK_{l.l.}}\Gamma_{l_1l_2k_1k_2}\\
+\sum_{LK_{l..l}}\Gamma_{l_1l_2k_1k_2}+\sum_{LK_{.k.k}}\Gamma_{l_1l_2k_1k_2}+\sum_{LK_{.kk.}}\Gamma_{l_1l_2k_1k_2}
\end{array}
\end{equation*}
\normalsize
with
\scriptsize
\begin{equation*}
\begin{array}{l}
LK_{indpt}=\{(l_1,k_1,l_2,k_2)\in \left[0\ldots n-1\right]^4 / l_1>k_1
\wedge l_2>k_2 \wedge l_1\ne l_2 \wedge l_1\ne k_2 \wedge k_1\ne k_2 \wedge k_1\ne l_2\}\\
LK_{lklk}=\{(l_1,k_1,l_2,k_2)\in \left[0\ldots n-1\right]^4 / l_1>k_1 
\wedge l_2>k_2 \wedge l_1= l_2 \wedge k_1= k_2\}  \\
LK_{l.l.}=\{(l_1,k_1,l_2,k_2)\in \left[0\ldots n-1\right]^4 / l_1>k_1 
\wedge l_2>k_2 \wedge l_1= l_2 \wedge k_1\ne k_2 \}\\
LK_{l..l}=\{(l_1,k_1,l_2,k_2)\in \left[0\ldots n-1\right]^4 / l_1>k_1 
\wedge l_2>k_2 \wedge l_1= k_2\}\\
LK_{.k.k}=\{(l_1,k_1,l_2,k_2)\in \left[0\ldots n-1\right]^4 / l_1>k_1 
\wedge l_2>k_2 \wedge k_1= k_2 \wedge l_1\ne l_2 \} \\
LK_{.kk.}=\{(l_1,k_1,l_2,k_2)\in \left[0\ldots n-1\right]^4 / l_1>k_1 
\wedge l_2>k_2 \wedge k_1= l_2 \}
\end{array}
\end{equation*}
\normalsize
Additionally, we note that
\small
\begin{equation*}
\begin{array}{l}
\sharp(LK)=\left(\frac{n(n-1)}{2}\right)^2 \\
\sharp(LK_{lklk})=\frac{n(n-1)}{2}\\
\sharp(LK_{l.l.})=\sharp(LK_{.k.k})= \\
\frac{1}{6}n(n-1)(2n-1)+\frac{1}{2}(3-2n)n(n-1)+n(n-1)(n-2)\\
\sharp(LK_{l..l})=\sharp(LK_{.kk.})=-\frac{1}{6}n(n-1)(2n-1)+\frac{1}{2}n(n-1)^2\\
\sharp(LK_{indpt})=\frac{n(n-1)}{2}\left(\frac{n(n-1)}{2}-2n+3\right)
\end{array}
\end{equation*}
\normalsize
\subsection*{Explicit Summation}
First, consider the following useful results:
\small
\begin{equation*}
\begin{array}{l}
\left<\cos(\delta\varphi_{l})\right>=\exp\left(-\frac{\epsilon^2}{2}\right)\\
\left<\cos(\delta\varphi_{lk})\right>=\exp\left(-\epsilon^2\right)\\
\left<\sin(\delta\varphi_{lk})\right>=0\\
\left<\cos(2\delta\varphi_{lk})\right>=\exp\left(-4\epsilon^2\right)\\
\left<\cos(\delta\varphi_{lk})^2\right>=\frac{1}{2}\left(1+\exp\left(-4\epsilon^2\right)\right)\\
\left<\cos(\delta\varphi_{lk})\sin(\delta\varphi_{lr})\right>=0\\
\left<\sin(\delta\varphi_{lk})^2\right>=\frac{1}{2}\left(1-\exp\left(-4\epsilon^2\right)\right)\\
\left<\cos(\delta\varphi_{lk})\cos(\delta\varphi_{lr})\right>=\frac{1}{2}\left(1+\exp\left(-2\epsilon^2\right)\right)\exp\left(-\epsilon^2\right)\\
\left<\sin(\delta\varphi_{lk})\sin(\delta\varphi_{lr})\right>=\frac{1}{2}\left(1-\exp\left(-2\epsilon^2\right)\right)\exp\left(-\epsilon^2\right)
\end{array}
\end{equation*}
\normalsize
We then have:
\subsubsection{$LK_{indpdt}$}
\scriptsize
\begin{equation}
\label{LKindpdt}
\begin{array}{l}
\medskip
\displaystyle \left<\sum_{LK_{indpdt}}\Gamma_{l_1l_2k_1k_2}\right>=\\
\medskip
\displaystyle \sum_{LK_{indpdt}}{c_1^a c_2^a \left<c_1^d\right> \left<c_2^d\right>+s_1^a s_2^a \left<s_1^d\right> \left<s_2^d\right> - 2 c_1^a s_2^a \left<c_1^d\right> \left<s_2^d\right>}\\
\displaystyle =\exp\left(-2\epsilon^2\right)\sum_{LK_{indpdt}}{\cos(a_{l_1k_1})\cos(a_{l_2k_2})}
\end{array}
\end{equation}
\normalsize

\subsubsection{$LK_{lklk}$}
\scriptsize
\begin{equation}
\label{LKlklk}
\begin{array}{l}
\medskip
\displaystyle \left<\sum_{LK_{lklk}}\Gamma_{l_1l_2k_1k_2}\right> = \\
\medskip
\displaystyle \sum_{LK_{lklk}}{(c^a)^2\left<(c^d)^2\right>+(s^a)^2\left<(s^d)^2\right> - \frac{1}{2} s^{2a}\left<s^{2d}\right>}\\
\displaystyle =\frac{n(n-1)}{4}+\frac{\exp\left(-4\epsilon^2\right)}{2}\sum_{l>k}{\cos(2a_{lk})}
\end{array}
\end{equation}
\normalsize

\subsubsection{$LK_{l.l.}$}
\scriptsize
\begin{equation}
\label{LKlplp}
\begin{array}{l}
\medskip
\displaystyle \left<\sum_{LK_{l.l.}}\Gamma_{l_1l_2k_1k_2}\right> = \\
\displaystyle \sum_{LK_{l.l.}}c_1^a c_2^a \left<\cos(\delta\varphi_{lk})\cos(\delta\varphi_{lr})\right>+s_1^a s_2^a \left<\sin(\delta\varphi_{lk})\sin(\delta\varphi_{lr})\right>\\ 
\medskip
\displaystyle \quad\quad - 2c_1^a s_2^a \left<\cos(\delta\varphi_{lk})\sin(\delta\varphi_{lr})\right>\\
\displaystyle = \frac{1}{2}\exp\left(-\epsilon^2\right)\sum_{LK_{l.l.}}{\cos(a_{lk}-a_{lr})+\exp\left(-2\epsilon^2\right)\cos(a_{lk}+a_{lr})}
\end{array}
\end{equation}
\normalsize
\subsubsection{$LK_{l..l}$}
\scriptsize
\begin{equation}
\label{LKlppl} 
\begin{array}{l}
\medskip
\displaystyle\left<\sum_{LK_{l..l}}\Gamma_{l_1l_2k_1k_2}\right> = \\
\displaystyle \sum_{LK_{l..l}}c_1^a c_2^a \left<\cos(\delta\varphi_{lk})\cos(\delta\varphi_{sl})\right>+s_1^a s_2^a \left<\sin(\delta\varphi_{lk})\sin(\delta\varphi_{sl})\right>\\
\medskip
\displaystyle \quad\quad - 2 c_1^a s_2^a \left<\cos(\delta\varphi_{lk})\sin(\delta\varphi_{sl})\right>\\
\displaystyle=\frac{1}{2}\exp\left(-\epsilon^2\right)\sum_{LK_{l..l}}{\cos(a_{lk}+a_{lr})+\exp\left(-2\epsilon^2\right)\cos(a_{lk}-a_{lr})}
\end{array}
\end{equation}
\normalsize

\subsubsection{$LK_{.k.k}$}
\scriptsize
\begin{equation}
\label{LKpkpk} 
\begin{array}{l}
\medskip
\displaystyle \left<\sum_{LK_{.k.k}}\Gamma_{l_1l_2k_1k_2}\right> = \\
\displaystyle \sum_{LK_{.k.k}}c_1^a c_2^a \left<\cos(\delta\varphi_{lk})\cos(\delta\varphi_{sk})\right>+s_1^a s_2^a \left<\sin(\delta\varphi_{lk})\sin(\delta\varphi_{sk})\right>\\
\medskip
\displaystyle \quad\quad - 2 c_1^a s_2^a \left<\cos(\delta\varphi_{lk})\sin(\delta\varphi_{sk})\right>\\
\displaystyle = \frac{1}{2}\exp\left(-\epsilon^2\right)\sum_{LK_{.k.k}}{\cos(a_{lk}-a_{sk})+\exp\left(-2\epsilon^2\right)\cos(a_{lk}+a_{sk})}
\end{array}
\end{equation}
\normalsize

\subsubsection{$LK_{.kk.}$}
\scriptsize
\begin{equation}
\label{LKpkkp} 
\begin{array}{l}
\medskip
\displaystyle \left<\sum_{LK_{.kk.}}\Gamma_{l_1l_2k_1k_2}\right> = \\
\displaystyle \sum_{LK_{.kk.}}c_1^a c_2^a \left<\cos(\delta\varphi_{lk})\cos(\delta\varphi_{ks})\right>+s_1^a s_2^a \left<\sin(\delta\varphi_{lk})\sin(\delta\varphi_{ks})\right>\\
\medskip
\displaystyle \quad\quad - 2 c_1^a s_2^a \left<\cos(\delta\varphi_{lk})\sin(\delta\varphi_{ks})\right>\\
\displaystyle = \frac{1}{2}\exp\left(-\epsilon^2\right)\sum_{LK_{.kk.}}{\cos(a_{lk}+a_{ks})+\exp\left(-2\epsilon^2\right)\cos(a_{lk}-a_{ks})}
\end{array}
\end{equation}
\normalsize
\subsection*{Conclusion}
From Eqs.\ \ref{LKindpdt}, \ref{LKlklk}, \ref{LKlplp}, \ref{LKlppl}, \ref{LKpkpk}, \ref{LKpkkp} we may write:
\scriptsize
\begin{equation}
\begin{array}{l}
\medskip
\displaystyle \left<\left(\frac{\tilde J_{DG}-n}{2}\right)^2\right>=\frac{n(n-1)}{4}+\frac{\exp\left(-4\epsilon^2\right)}{2}\sum_{l>k}{\cos(2a_{lk})}
+\exp\left(-2\epsilon^2\right)\sum_{LK_{indpdt}}{\cos(a_{lk})\cos(a_{rs})}\\
\displaystyle + \frac{1}{2}\exp\left(-\epsilon^2\right)\left[\sum_{LK_{l.l.}\cup LK_{.k.k}}{\cos(a_{lk}-a_{rs})+\exp\left(-2\epsilon^2\right)\cos(a_{lk}+a_{rs})}\right.\\
\displaystyle \left.
+ \sum_{LK_{l..l}\cup LK_{.kk.}}{\cos(a_{lk}+a_{rs})+\exp\left(-2\epsilon^2\right)\cos(a_{lk}-a_{rs})}\right]
\end{array}
\end{equation}
\normalsize
concluding with:
\scriptsize
\begin{equation}
\label{eq:gratingvarconclusion}
\boxed{
\begin{array}{l}
\medskip
\displaystyle \textrm{VAR}\left[\tilde J_{DG}\right]=n(n-1)\left(1-\exp\left(-2\epsilon^2\right)\right)
-2\exp\left(-2\epsilon^2\right)\left(1-\exp\left(-2\epsilon^2\right)\right)\sum_{l>k}{\cos(2a_{lk})}\\
\displaystyle + 2\exp\left(-\epsilon^2\right)\left(1-\exp\left(-\epsilon^2\right)\right)\left[\sum_{LK_{l.l.}\cup LK_{.k.k}}{\cos(a_{lk}-a_{rs})}
+\sum_{LK_{l..l}\cup LK_{.kk.}}{\cos(a_{lk}+a_{rs})}\right]\\
\displaystyle - 2\exp\left(-2\epsilon^2\right)\left(1-\exp\left(-\epsilon^2\right)\right)\left[\sum_{LK_{l.l.}\cup LK_{.k.k}}{\cos(a_{lk}+a_{rs})}
+\sum_{LK_{l..l}\cup LK_{.kk.}}{\cos(a_{lk}-a_{rs})}\right]
\end{array} }
\end{equation}
\normalsize
\underline{An upper bound on the variance is given by:}
\small
\begin{equation}
\label{eq:gratingvarupperbound}
\displaystyle \textrm{VAR}\left[\tilde J_{DG}\right] \le n(n-1)\left[\left(1-\exp\left(-4\epsilon^2\right)\right)
+2\exp\left(-\epsilon^2\right)\left(1-\exp\left(-2\epsilon^2\right)\right)(n-2)\right]
\end{equation}
\normalsize
For a small $\epsilon$, the bound may be tightened:
\begin{equation}
\label{eq:gratingvarsmall}
\displaystyle \textrm{VAR}\left[\tilde J_{DG}\right] \le 4n(n-1)^2\epsilon^2
\end{equation}
\normalsize
Note that in order to obtain Eq.\ \ref{eq:gratingvarsmall} , all the cosine terms had to be majored by $1$.
Given a point on the screen with destructive interference (the sum and products of the cosines vanish), 
the upper bound in Eq.\ \ref{eq:gratingvarsmall} is strongly superior with respect to the actual variance. 
On the other hand, given a point with constructive interference, the upper bound is a fair estimation of the real variance. 
Notice also that the upper bound of the variance is proportional to the cube of the dimension and the variance of the stochastic noise.

Finally, the transition to $\tilde I_{DG}$ is obtained:
\begin{equation}
\label{eq:gratingvarfinal}
\textrm{VAR}\left[\tilde I_{DG}\right]=\frac{1}{n^4}\textrm{sinc}^4\left(\frac{\zeta b}{2} \right)\cdot \textrm{VAR}\left[\tilde J_{DG}\right]
\end{equation}

\end{document}